\documentclass{article}

\usepackage[preprint]{corl_2026} 

\usepackage{algorithm}
\usepackage{algpseudocode}
\usepackage{amsmath}
\usepackage{amssymb}
\usepackage{amsthm}
\usepackage{bm}
\usepackage{cancel}
\usepackage{mathtools}
\usepackage{multicol}
\usepackage[most]{tcolorbox}
\usepackage{graphicx}
\usepackage{subcaption}
\usepackage{array}
\usepackage{wrapfig}
\usepackage{booktabs} 
\usepackage[framemethod=default]{mdframed}
\usepackage{xcolor}
\usepackage[most]{tcolorbox}
\usepackage[textsize=tiny]{todonotes}
\definecolor{darkpurple}{RGB}{72,36,117}
\newtcolorbox{intuitionbox}{
    enhanced, 
    colback=white,
    colframe=darkpurple, 
    boxrule=0.8pt,
    arc=2mm,
    left=2mm,
    right=2mm,
    top=5mm, 
    bottom=2mm,
    title={Geometric Intuition},
    coltitle=white,
    fonttitle=\sffamily\bfseries,
    attach boxed title to top left={
        xshift=4mm,
        yshift=-\tcboxedtitleheight/2 
    },
    boxed title style={
        colback=darkpurple,
        colframe=darkpurple,
        boxrule=0pt,
        arc=1mm, 
        left=2mm,
        right=2mm,
        top=1mm,
        bottom=1mm
    }
}

\theoremstyle{plain}
\newtheorem{theorem}{Theorem}[section]

\newtheorem{lemma}[theorem]{Lemma}

\theoremstyle{definition}

\theoremstyle{remark}

\newcommand{\question}[1]{\textbf{Question:} #1}
\definecolor{softteal}{RGB}{70, 190, 180} 
\newcommand{\bt}[1]{\textcolor{softteal}
{\textbf{#1}}} 
\definecolor{softpurple}{RGB}{150, 110, 190}
\newcommand{\hc}[1]{\textcolor{softpurple}{\textbf{#1}}}

\definecolor{codegreen}{rgb}{0,0.6,0}
\definecolor{codegray}{rgb}{0.5,0.5,0.5}
\definecolor{codepurple}{rgb}{0.58,0,0.82}
\definecolor{backcolour}{rgb}{0.98,0.98,0.98}

\lstdefinestyle{mystyle}{
    backgroundcolor=\color{backcolour},   
    commentstyle=\color{codegreen},
    keywordstyle=\color{magenta},
    numberstyle=\tiny\color{codegray},
    stringstyle=\color{codepurple},
    basicstyle=\ttfamily\footnotesize,
    breakatwhitespace=false,         
    breaklines=true,                 
    captionpos=b,                    
    keepspaces=true,                 
    numbers=left,                    
    numbersep=5pt,                  
    showspaces=false,                
    showstringspaces=false,
    showtabs=false,                  
    tabsize=4
}
\def\eqref#1{equation~\ref{#1}}

\def\1{\bm{1}}

\def\va{{\bm{a}}}

\def\vg{{\bm{g}}}

\def\vs{{\bm{s}}}

\def\mG{{\bm{G}}}

\DeclareMathAlphabet{\mathsfit}{\encodingdefault}{\sfdefault}{m}{sl}
\SetMathAlphabet{\mathsfit}{bold}{\encodingdefault}{\sfdefault}{bx}{n}

\def\gA{{\mathcal{A}}}

\def\gG{{\mathcal{G}}}

\def\gJ{{\mathcal{J}}}

\def\gM{{\mathcal{M}}}

\def\gP{{\mathcal{P}}}

\def\gS{{\mathcal{S}}}

\def\gX{{\mathcal{X}}}

\def\sR{{\mathbb{R}}}

\newcommand{\E}{\mathbb{E}}

\usepackage{makecell}

\DeclareMathOperator*{\argmax}{arg\,max}

\title{
    Mollified Value Learning
}

\author{
    Hrishikesh~Viswanath~$^{1}$\quad
    \textbf{Juanwu~Lu}~$^{2}$\quad
    \textbf{S.~Talha~Bukhari~$^{1}$}\quad \\
    \textbf{Mihir Chauhan~$^{1}$}\quad
    \textbf{Damon~Conover~$^{3}$}\quad
    \textbf{Ziran~Wang~$^{2}$}\quad
    \textbf{Aniket Bera~$^{1}$}
    \\
    $^{1}$~Department of Computer Science, Purdue University, USA \\
    $^{2}$~College of Engineering, Purdue University, USA \\
    $^{3}$~DEVCOM Army Research Laboratory, USA \\
    Correspondence to: \texttt{hviswan@purdue.edu} \\
}

\usepackage{hyperref}

\usepackage[capitalize,noabbrev]{cleveref}

\begin{document}

\maketitle


\begin{figure}[h!]
    \centering
    \begin{tabular}{ m{0.001\textwidth} m{0.206\textwidth} m{0.206\textwidth} m{0.206\textwidth} m{0.206\textwidth} }
        
        \centering\rotatebox{90}{\footnotesize \textbf{\texttt{DUAL+EIK}}} & 
        \includegraphics[width=1.105\linewidth]{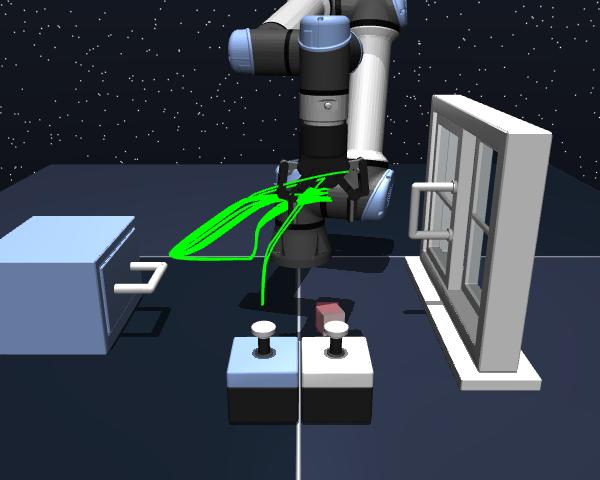} & 
        \includegraphics[width=1.105\linewidth]{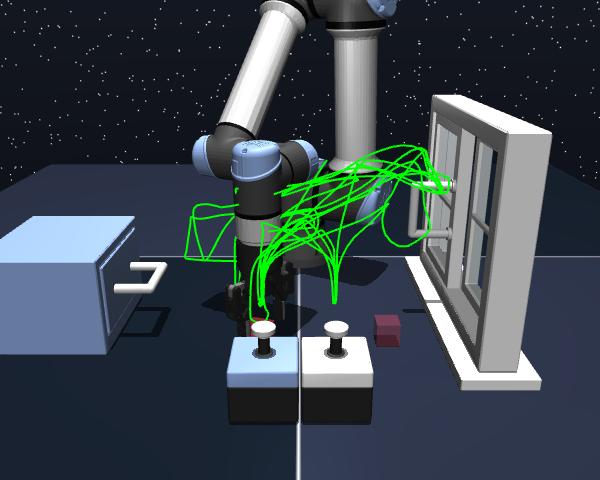} & 
        \includegraphics[width=1.105\linewidth]{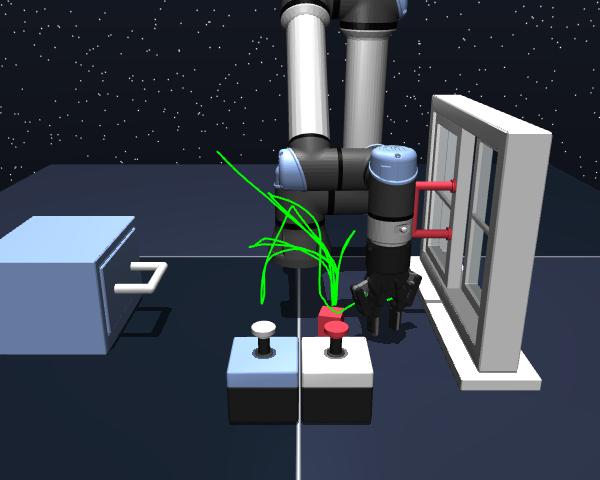} & 
        \includegraphics[width=1.105\linewidth]{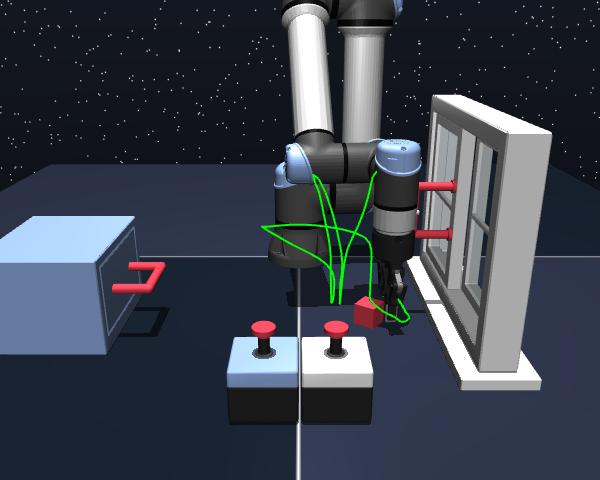} \\
        
        \centering\rotatebox{90}{\footnotesize \textbf{\texttt{OURS}}} & 
        \includegraphics[width=1.105\linewidth]{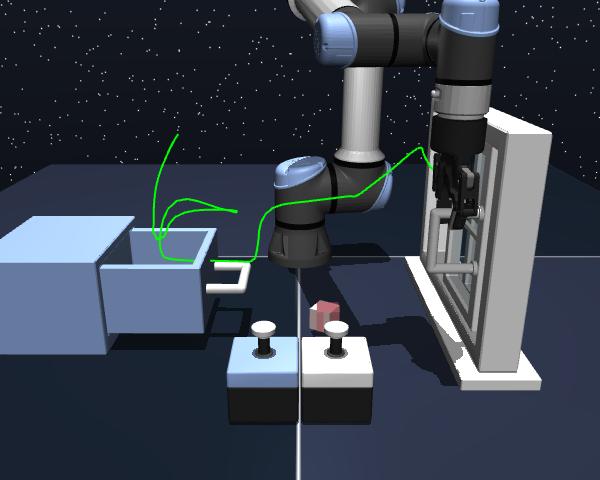} & 
        \includegraphics[width=1.105\linewidth]{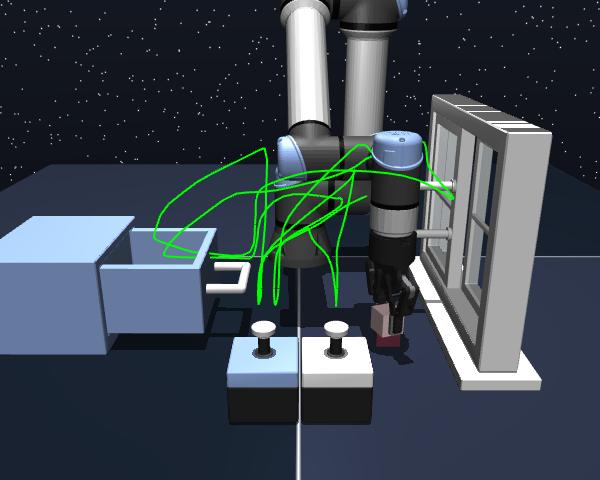} & 
        \includegraphics[width=1.105\linewidth]{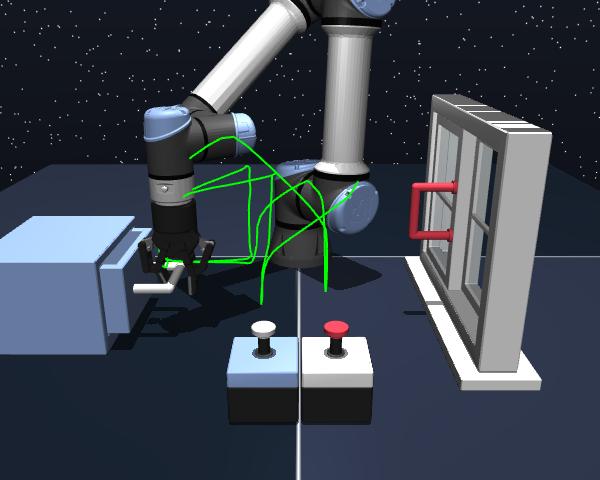} & 
        \includegraphics[width=1.105\linewidth]{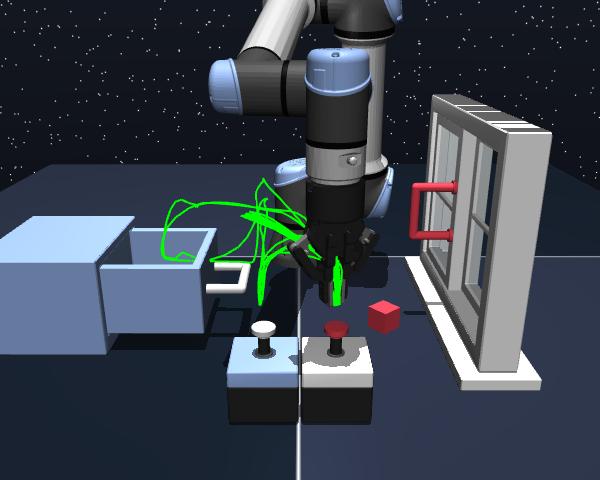} \\

        & 
        \centering\footnotesize (a) \textit{Open} & 
        \centering\footnotesize (b) \textit{Rearrange (Med)} & 
        \centering\footnotesize (c) \textit{Put in Drawer} & 
        \centering\footnotesize (d) \textit{Rearrange (Hard)} \\
        
    \end{tabular}

    \caption{\textbf{Contrasting Manipulation Tasks.} By imposing our proposed formulation on the value function for \textit{offline goal-conditioned reinforcement learning}, the policy successfully executes complex manipulation tasks where the Eikonal baseline \cite{giammarino2025physics} struggles. As shown (Bottom), our approach yields better trajectories across various manipulation tasks.}
\label{fig:qualitative_tasks}
\vspace{-1em}
\end{figure}
\begin{abstract}
    Offline goal-conditioned reinforcement learning (GCRL) learns goal-reaching behaviors from static datasets, but accurate value estimation remains challenging under limited state-action coverage. Existing physics-informed approaches address this by imposing pointwise distance-like geometric constraints derived from Hamilton--Jacobi--Bellman (HJB) optimality principles, often through first-order partial differential equations such as the Eikonal equation. However, enforcing local consistency through explicit differential structure can become unstable in complex, high-dimensional environments. Our key insight is to instead reinterpret distance-like constraints as an expectation over a local spatial measure. By aggregating constraints over this measure rather than evaluating them pointwise, the objective acts as a spatial mollifier, inducing distance-like value geometry without requiring expensive differential operators. We refer to this as Mollified Value Learning (MVL). Experiments across navigation and high-dimensional robotic manipulation tasks show that MVL learns structured, value representations, improving goal-reaching performance, when used with implicit value representation learning methods. Open-source codes are available at~\url{https://github.com/HrishikeshVish/MVL}.
\end{abstract}

\keywords{Physics Informed, Goal-conditioned Reinforcement Learning}

\section{Introduction}
\label{sec: intro}

Offline goal-conditioned reinforcement learning (GCRL)~\cite{yang2022rethinking} trains agents to reach desired states using static, pre-collected datasets, enabling applications such as robotic manipulation and navigation. A central challenge in this setting is accurately estimating the goal-conditioned value function (GCVF)~\cite{yang2022rethinking}, which serves both as a critic for policy extraction and a guide for planning. While standard value parameterizations~\cite{park2023hiql, ma2022vip, schaul2015universal} are empirically effective, their updates rely primarily on temporal-difference recursion and therefore lack the structural inductive biases needed to capture the goal-reaching geometry. This often leads to uninformative or poorly structured value estimates, particularly in sparse-reward settings~\cite{park2023hiql, giammarino2025physics}.

To address this, recent works have explored structured geometric representations through physical priors, particularly in model-based and Koopman formulations~\cite{weissenbacher2022koopman, cheng2023look}, while the recent work \textit{dual-goal representations} explicitly models temporal distances between states within the value function itself~\cite{park2025dualgoalrepresentations}. Another promising direction is to impose geometric constraints derived from optimal control theory through partial differential equations (PDEs). In particular, recent physics-informed (PINN \cite{raissi2019physics}) approaches propose formulations based on Eikonal and Hamilton--Jacobi-Bellman (HJB) to model shortest-path geometry (i.e., a geodesic in continuous space), defining local distance-like cost-to-go structures in value functions~\cite{ni2025physics, ren2025physics, liu2025physics, ni2022ntfields, ni2024physics, giammarino2025physics, lien2024enhancing}, showing that such local geometry shaping alongside TD-learning improve goal-reaching performance. More broadly, stochastic forms~\cite{quer2024connecting} and second-order based methods~\cite{ni2023progressive, crandall1983viscosity} have been proposed to improve smoothness of the learned value functions. While these approaches provide principled formulations of optimality, achieving this through explicit differential structure remains challenging in high-dimensional offline RL due to sparse state-action coverage, numerical instability~\cite{munos1997convergent, munos1997reinforcement} in high-dimensional and complex kinematics, and the practical difficulties associated PINNs \cite{viswanath2026operator}. Meanwhile, the paradigm of weak-supervision, has emerged, among neural PDE solvers, as a tractable alternative to PINN learning over exact forms \cite{nam2024solving, li2023neural, lin2025enabling} where weak integral forms and stochastic relaxations of PDEs are instead used as training objectives. 

This raises a broader question: can the geometry of GCRL value functions be defined without relying on expensive differential formulations? We hypothesize that (1) reframing the shortest-path geometry  constraints as an expectation over a spatial measure captures the continuous variation in cost-to-go, effectively replacing pointwise derivatives with a tractable integral, and (2) defining this objective over a local state distribution induces a smoothing effect on the value geometry.

\textbf{Our Contribution.} Drawing from this, we propose Mollified Value Learning (MVL), a value learning framework that introduces shortest-path-like geometric inductive bias into offline goal-conditioned RL. Here, we recast the geometric consistency problem as solving for the expected cost-to-go constraint over local state distributions. We show that this induces a tractable value geometry, without requiring explicit higher-order differential operators. Empirically, we demonstrate that MVL learns structured, value representations and improves goal-reaching performance across challenging robotic navigation and manipulation tasks.

\section{Related Works}
\label{sec: related}

\noindent\textbf{Offline Goal-Conditioned Reinforcement Learning.}
Goal-conditioned Reinforcement Learning (GCRL)~\cite{kaelbling1993learning,schaul2015universal,levine2020offlinereinforcementlearningtutorial} is a class of RL that aims to learn goal-conditioned policies that take actions to achieve the goal state. Early works, such as hindsight goal relabeling, focus on adapting online algorithms to the offline settings~\cite{marcin2017hindsight,chebotar2021actionable,yang2022rethinking}, while consecutive works leverage techniques like discounting and advantage weighting~\cite{peters2010relative,peng2019advantage}. Recent offline approaches, including Implicit Q-Learning (IQL)~\cite{kostrikov2021offline} and Implicit Value Learning (GCIVL) \cite{park2025dualgoalrepresentations}, enable transitive planning via expectile regression without out-of-sample action queries, but often suffer from the curse of horizon due to compounding value errors over long trajectories. Hierarchical and horizon reduction methods~\cite{park2023hiql,park2025horizon, ahn2025option} mitigate this by decomposing tasks into sub-goals or using $n$-step value estimates. Meanwhile, quasimetric and contrastive learning approaches incorporate geometric properties in their architectures \cite{wang2023optimal, eysenbach2022contrastive}.
Most existing methods learn value functions from data without explicit physics priors, leading to biased estimates from bootstrapping and bad generalizability due to poor coverage of the state-action transition.

\noindent\textbf{Physics-Informed Value Learning and PDE mollifiers.}
Regularization of value estimation via physics-informed constraints has gained attention in planning and RL, with prior work enforcing first- or higher-order consistency in value gradients from a  control perspective~\cite{munos1997convergent,munos1997reinforcement}. Recent works have used Eikonal constraints in motion planning~\cite{ni2022ntfields, ni2024physics, ni2023progressive}. While some model the HJB directly~\cite{shilova2024learning, lien2024enhancing}, others use operator learning to combine neural operators with Eikonal PDEs for motion planning~\cite {matada2024generalizable}. Among RL methods include deriving offline RL objectives from the HJB equation~\cite{lien2024enhancing} and using Eikonal-based regularization to constrain the value gradients~\cite{giammarino2025physics}. However, such first-order constraints are often ill-posed for high-dimensional, complex problems, leading to numerical instability in neural approximations. Recent works on Hamilton-Jacobi equations over probability spaces and Wasserstein geometry \cite{gangbo2021finite, gangbo2008hamilton, badreddine2022solutions}, introduce weak notions of solutions for non-smooth optimal control problems. For deterministic offline GCRL problems, we instead adopt a simpler geometric perspective grounded in local state-space structure. A central tool in this setting is the mollifier, a kernel integral, yielding smoothing approximations that preserve large-scale structure while attenuating small-scale irregularities. Motivated by classical constructions in PDE theory \cite{bucur2016nonlocal, beghin2021nonlocal, evans2022partial} and recent control formulations \cite{du2025optimal},  we use mollification as a principled mechanism to define local geometric constraints of the value function.
\section{Preliminaries}
\label{sec: background}

In this section, we briefly introduce GCRL, and optimal control. And in \cref{sec: method}, we discuss our proposed method. Readers can find a table of notations in Appendix~\ref{appx: notation} for references.

\noindent\textbf{Goal-conditioned Reinforcement Learning.}
Let $\gP(\gX)$ be a set of valid probability distributions defined on the space $\gX$. We denote a goal-conditioned Markov decision process (GC-MDP) by $\gM=\left(\gS,\gA,p,r\right)$, with state space $\gS\subset\sR^{n}$, action space $\gA\subset\sR^{m}$, system transition dynamics given by $p(\vs_{t+1}\mid{\vs_{t}, \va}):\gS\times\gA\to\gP(\gS)$, and goal-conditioned reward function $r(\vs,\vg):\gS\times\gS\to\sR$, where the goal is defined in the state space $\vg\in\gS$. A goal-conditioned policy is denoted as $\pi(\va\mid{\vs,\vg}):\gS\times\gS\to\gP(\gA)$. GCRL aims to solve for an optimal policy $\pi^\ast$ that maximizes the expected cumulative discounted rewards to reach the goal state: 
    $\pi^{\ast}(\va\mid{\vs,\vg})\triangleq\argmax_{\pi\in\Pi}\E_{\bm{\tau}\sim{p^{\pi}(\bm{\tau})}}\left[\sum\limits_{t=0}^{T}\gamma^{t}\cdot{r(\vs_{t},\vg)}\right]$, 
where $\Pi$ is policy space, $\bm{\tau}\triangleq\left(\vs_{0},\va_{0},\ldots,\vs_{T-1},\va_{T-1},\vs_{T}\right)$ represents a trajectory, $p^{\pi}(\bm{\tau})$ is the distribution of such trajectories by executing policy $\pi$, and $T$ is the decision horizon. Finally, we denote a goal-conditioned state value function (GCVF) induced by policy $\pi$ as: $V^{\pi}(\vs,\vg)\triangleq\E_{\bm{\tau}\sim{p^{\pi}(\bm{\tau}\mid{\vs_{0}=\vs,\vg)}}}
\left[\sum\limits_{t=0}^{T}\gamma^{t}\cdot{r(\vs_{t},\va_t, \vg)}\right]$.

\noindent\textbf{GCRL with Optimal Control.}
\label{sec:optimal_control}
We study a continuous-time  dynamical system
\cite{park2021neural,pereira2019learning,todorov2009compositionality}
defined on a state space $\gS \subset \mathbb{R}^d$, $d\vs_t =
    f(\vs_t, \va_t)\,dt$
where $f(\cdot, \va)$ denotes the system dynamics,
$\va_t\in\gA\subset\mathbb{R}^d$ is the control input. We consider the finite-horizon optimal control problem with
objective functional $\gJ(\bm{\tau})
=
\mathbb{E}\!\left[
\int_0^T c(\vs_t,\va_t)\,dt
+
q_f(\vs_T,\vg)
\right]$,
where $c$ is the instantaneous cost and
$q_f:\gS\times\gS\to\mathbb{R}$ is the terminal cost. In the reinforcement learning setting, the value function is defined as the negative cost-to-go $V^*(\vs,\vg) = -J^*$, and for a sufficiently small $\Delta t$, it has been shown in recent works \cite{giammarino2025goal, giammarino2025physics}, that this can be expressed as $
V^*(\vs,\vg) = \inf_\va [V^*(s_{t + \Delta t}, \vg)-c(\vs_t, \va_t)\Delta t]$

\section{Mollified Value Learning}
\label{sec: method}

As motivated in \cref{sec: intro}, our main aim is to introduce a mollified value learning framework for offline goal-conditioned RL. In deterministic shortest-path problems, the optimal value function $V^*(\vs, \vg)$ naturally exhibits a quasimetric structure. This geometry provides a powerful inductive bias: the difference in value between any two states is fundamentally bounded by the spatial cost to transition between them. By extending this bounding principle across continuous neighborhoods, we formulate a mollified expectation over local state distributions, yielding a region-level geometric objective that can be approximated directly from sampled transitions.


\subsection{The Idea}
\label{subsec: integral}

Before discussing our approach, we first state the assumptions. 

\textbf{Assumptions} We assume the state space $\mathcal{S}$ and the action space $\mathcal{A}$ to be compact and assume the dynamics $f(\cdot, \va)$ to be deterministic and Lipschitz continuous in $\vs$. In the offline GCRL setting, we consider a continuous, bounded, nonnegative running cost $c: \gS\times\gG \rightarrow \mathbb{R}^d$, where $c$ denotes the cost of reaching a state $\vs \in \gS$ while pursuing $\vg \in \gG$. We assume this cost $c(\vs, \vg) = 1$ (i.e. unit running cost). The value function $V$, which we derive in the following section, is then intuitively defined by accumulating this cost along the trajectory. The optimal goal conditioned value function $V^*$ corresponds to the optimal-value of the shortest feasible path from the state $\vs$ to the goal $\vg$, i.e., a geodesic. 
\begin{wrapfigure}{r}{0.50\textwidth}
    \centering\vspace{-0pt}
    \includegraphics[width=0.50\textwidth]{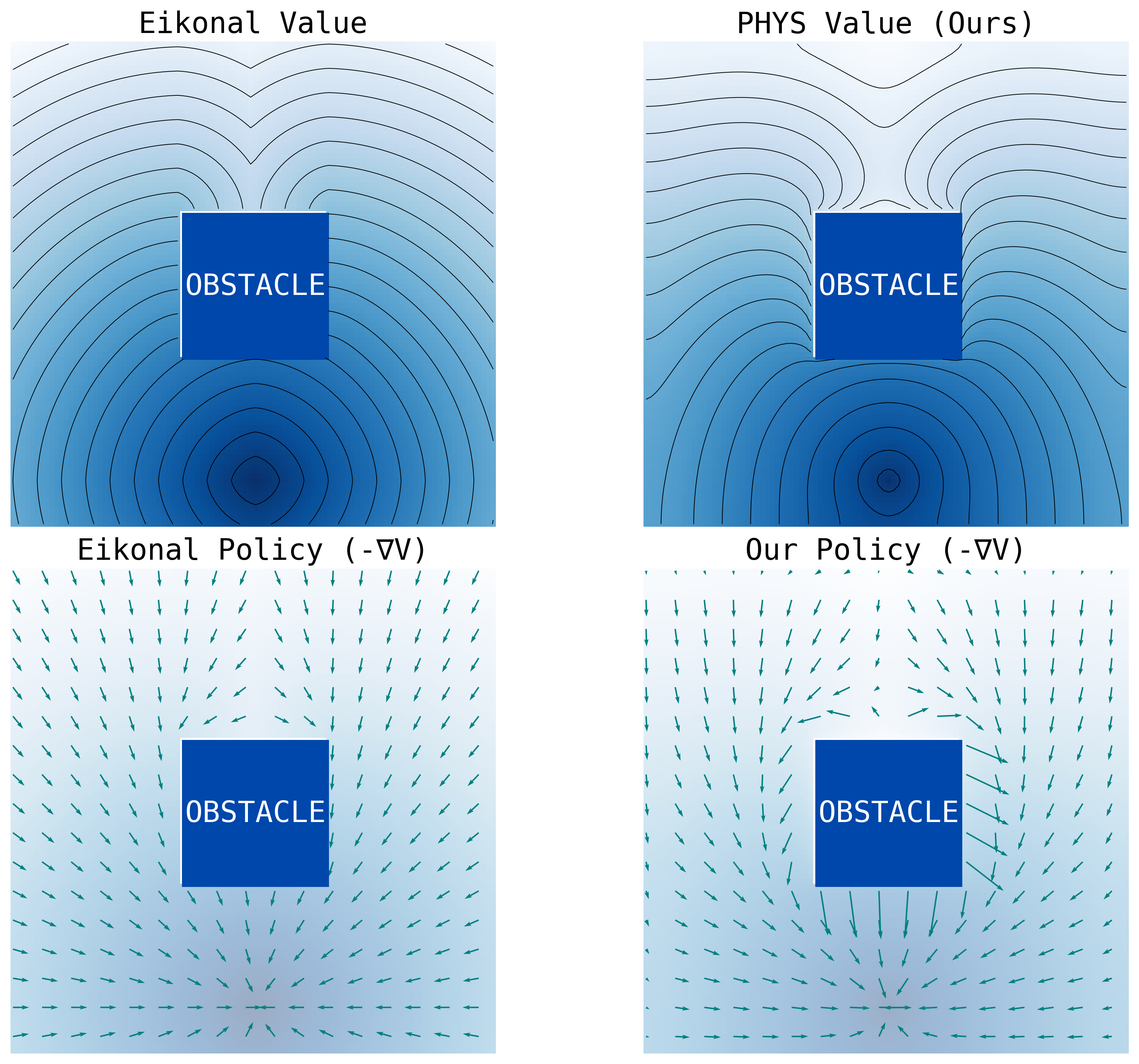}
    \caption{\textbf{Toy example with a single obstacle (blue square) in an arena, with the goal state at the bottom}: We solve the exact Eikonal form from \cite{giammarino2025physics} and our  integral form on a toy problem. }
    \label{fig:eik_hjb_toy}\vspace{-10pt}  
\end{wrapfigure}
Following these assumptions, we define the instantaneous cost $c(s,g) = 1$,  such that for a step defined in $\Delta t$, between $\vs$ and $\vs'$, the value $V^*(\vs, \vs')$ is given by $-V^*(\vs, \vs') \leq c(\vs, \vg)\Delta t + o(\Delta t)$. Over small displacements, we approximate $\Delta t \propto \|\vs' - \vs\|$. 

We state here that the optimal value function satisfies the triangle inequality \cite{wang2023optimal}, which is given by: 
$V(\vs, \vg) \geq V(\vs, \vs') + V(\vs', \vg)$. This follows the property that the optimal value function corresponds to the shortest feasible path between $\vs$ and $\vg$. The connections between this property and the HJB Hamiltonian in the continuum limit is discussed in \cref{subsec: solution}.  Following this, the local spatial restriction on the cost-to-go is then given by:
\begin{equation}
V(\vs', \vg) - [c(\vs,\vg)\Delta t + o(\Delta t)] \le V(\vs, \vg) 
\label{eq: spatial_constraint}
\end{equation}

We posit that shortest-path consistency can instead be relaxed as a spatial expectation principle, where cost-to-go relationships are defined in expectation over local state distributions.

We therefore consider a mollified relaxation of local distance-like consistency condition, defined over a local spatial measure. We highlight this on a simple toy problem in \cref{fig:eik_hjb_toy} which defines a simple obstacle and a goal state. We solve for the value field as an expectation over the local geometric constraints to show that this defines a similar gradient structure as the Eikonal based value field. 

For a given transition pair $(\vs, \vs')$ within $\Delta t$, we approximate the cost $c(\vs, \vg)\Delta t \approx c(\vs, \vg)\|s'- s\|$, consistent with our prior assumptions. Leveraging \cref{eq: spatial_constraint}, we define a function $\Phi(V,\vs,\vs',\vg):=V(\vs',\vg)-V(\vs,\vg)-c(\vs,\vg)\|\vs'-\vs\|$. The integral form is then given by \begin{equation}\mathcal{R}_\delta[V](\vs,\vg)
:=
\int
\Phi(V,\vs,\vs',\vg)
\rho_\delta(\vs'|\vs)\,d\vs'
\end{equation}

where, $\rho_\delta(\vs'|\vs)$ denotes a local transition kernel defined over $\|\vs'-\vs\|\leq \delta; \delta > 0$, such that $\int \rho_\delta(\vs' |\vs)ds'=1$, $\rho_\delta(\cdot | \vs)\geq 0$. Thus, $\Phi$ measures the local violation of distance constraints along sampled transitions, and $\mathcal{R}_\delta[V]$ enforces this consistency in expectation over the neighborhood of $\vs$. We detail the derivations and properties in Appendix \cref{appx: derivation}.

\paragraph{Practical Implementation}
For each state $\vs$, we sample a finite set of states $\{\vs'_1,\dots,\vs'_N\}$ such that $s'_i \sim \rho_\delta(\cdot|\vs)$ for $i=1,\dots,N$. In practice, for model-free offline settings, we let $\vs' \sim \mathcal{N}(\vs, \delta^2I)$ where, both $\delta$ and $N$ are treated as tunable hyperparameters (\cref{tab:ablation_hyperparams},\cref{tab:ablation_results_ND}). Under Gaussian probability measure, $
\mathcal{R}_\delta[V](\vs,\vg)
\approx
\frac{1}{N}
\sum_{i=1}^N
\Big[
V(\vs'_i,\vg)
-
V(\vs,\vg)
-
c(\vs,\vg)\|\vs'_i-\vs\|
\Big].
$

Using this, we define the geometric regularization as the empirical expectation: $\mathcal{L}_{\mathrm{MVL}}(\theta)
=
\mathbb{E}_{\vs,\vg}
\left[
\texttt{ReLU}\left(
\mathcal{R}_\delta[V_\theta](\vs,\vg)
\right)^2
\right]$, where the \texttt{ReLU} follows from the inequality in \cref{eq: spatial_constraint}.

\begin{intuitionbox}
Consider neighboring states within a small local region. Assuming locally uniform traversal velocity, the incremental transition cost scales proportionally with spatial displacement, i.e., $\Delta t \propto \|\vs-\vs'\|$. Under shortest-path consistency, the direct cost-to-go between a state and the goal should therefore not exceed the cost of first transitioning through a nearby state and then proceeding to the goal. Our objective measures violations of this local consistency relation. When the relation already holds, the penalty remains inactive. However, if transitions induce anomalously sharp local inconsistency, the integral form functions as a mollifier. 
\end{intuitionbox}

\section{Experiments}

\begin{table}[h!]
\centering
\caption{\textbf{RESULTS ON STATE-BASED TASKS WITH GCIVL.} Success rates across 4 seeds to highlight the impact of \texttt{MVL}. \bt{Soft teal} highlights methods within 5\% of the best performance, while \hc{soft purple} highlights an \texttt{MVL} variant when it strictly outperforms its corresponding base representation but falls outside the top 5\%.}
\label{tab:state_based_results}
\resizebox{\textwidth}{!}{%
\begin{tabular}{lccccccccc}
\toprule
\textbf{\texttt{ENVIRONMENT}} & \textbf{\texttt{ORIG}} & \textbf{\textbf{\texttt{ORIG}}-\textbf{MVL }} & \textbf{\texttt{VIB}} & \textbf{\textbf{\texttt{VIB}}-\textbf{MVL }} & \textbf{\texttt{VIP}} & \textbf{\texttt{TRA}} & \textbf{\texttt{BYOL}} & \textbf{\texttt{DUAL}} & \textbf{\textbf{\texttt{DUAL}}-\textbf{MVL }} \\
\midrule
\texttt{pointmaze-medium} & 78 $\pm$ 8 & 70 $\pm$ 2 & 69 $\pm$ 13 & \bt{92 $\pm$ 6} & 0 $\pm$ 1 & 3 $\pm$ 6 & 37 $\pm$ 7 & 76 $\pm$ 7 &   \bt{96 $\pm$ 3}\\
\texttt{pointmaze-large} & 52 $\pm$ 6 & 48 $\pm$ 4 & 50 $\pm$ 7 & \bt{80 $\pm$ 3} & 0 $\pm$ 0 & 1 $\pm$ 2 & 22 $\pm$ 12 & 46 $\pm$ 6 &   \bt{77 $\pm$ 1}\\
\texttt{antmaze-medium} & 71 $\pm$ 4 & 61 $\pm$ 5 & 68 $\pm$ 4 & \bt{75 $\pm$ 4} & 31 $\pm$ 5 & 22 $\pm$ 15 & 39 $\pm$ 5 & \bt{75 $\pm$ 4} & \bt{76 $\pm$ 5} \\
\texttt{antmaze-large} & 16 $\pm$ 3 & \hc{23 $\pm$ 2} & 9 $\pm$ 3 & \hc{26 $\pm$ 2} & 9 $\pm$ 2 & 22 $\pm$ 12 & 11 $\pm$ 5 & 28 $\pm$ 11 &  \bt{39 $\pm$ 3}\\
\texttt{humanoid-medium} & 27 $\pm$ 3 & 27 $\pm$ 3 & 24 $\pm$ 2 & \hc{30 $\pm$ 3} & 7 $\pm$ 3 & 21 $\pm$ 3 & 18 $\pm$ 5 & 29 $\pm$ 3 &   \bt{34 $\pm$ 3}\\
\texttt{humanoid-large} & 3 $\pm$ 0 & \hc{4 $\pm$ 1} & 3 $\pm$ 1 & \hc{4 $\pm$ 1} & 1 $\pm$ 0 & 2 $\pm$ 1 & 2 $\pm$ 1 & 3 $\pm$ 2 &   \bt{7 $\pm$ 2}\\
\texttt{antsoccer-arena} & \bt{47 $\pm$ 4} & 41 $\pm$ 1 & 34 $\pm$ 4 & 30 $\pm$ 3 & 2 $\pm$ 1 & 8 $\pm$ 2 & 11 $\pm$ 4 & 31 $\pm$ 3 &  27 $\pm$ 1\\
\midrule
\texttt{cube-single} & 52 $\pm$ 3 & \hc{68 $\pm$ 11} & \bt{90 $\pm$ 3} & 83 $\pm$ 4 & 40 $\pm$ 7 & 40 $\pm$ 5 & 51 $\pm$ 11 & 89 $\pm$ 3 & \bt{91 $\pm$ 1} \\
\texttt{cube-double} & 35 $\pm$ 5 & 30 $\pm$ 4 & 33 $\pm$ 3 & \bt{59 $\pm$ 3}  & 3 $\pm$ 2 & 7 $\pm$ 2 & 6 $\pm$ 4 & \bt{60 $\pm$ 4} & 52 $\pm$ 5 \\
\texttt{scene-play} & 46 $\pm$ 3 & \hc{52 $\pm$ 7} & 58 $\pm$ 1 & \bt{81 $\pm$ 7} & 23 $\pm$ 6 & 46 $\pm$ 6 & 44 $\pm$ 9 & 72 $\pm$ 6 &   \bt{84 $\pm$ 5}\\
\texttt{puzzle-3x3} & 5 $\pm$ 1 & \hc{8 $\pm$ 2} & \bt{14 $\pm$ 3} & 11 $\pm$ 2 & 3 $\pm$ 1 & 5 $\pm$ 1 & 0 $\pm$ 0 & 5 $\pm$ 1 &  \hc{7 $\pm$ 2}\\
\texttt{puzzle-4x4} & 14 $\pm$ 1 & \hc{25 $\pm$ 3} & 6 $\pm$ 3 & \hc{10 $\pm$ 4} & 1 $\pm$ 1 & 10 $\pm$ 3 & 1 $\pm$ 2 & 23 $\pm$ 3 &   \bt{34 $\pm$ 2}\\
\midrule
\texttt{D4RL/kitchen-partial}  & \bt{81 $\pm$ 9} & \bt{81 $\pm$ 11} & 75 $\pm$ 11 & \bt{85 $\pm$ 11} & 63 $\pm$ 0 & 72 $\pm$ 3 & 63 $\pm$ 0 & \bt{81 $\pm$ 6} & 78 $\pm$ 8 \\
\texttt{D4RL/kitchen-mixed}    & 77 $\pm$ 8 & \hc{79 $\pm$ 14} & \bt{81 $\pm$ 10} & \bt{84 $\pm$ 11} & 69 $\pm$ 0 & \bt{85 $\pm$ 3} & 69 $\pm$ 0 & 77 $\pm$ 8 & 76 $\pm$ 19 \\
\texttt{D4RL/kitchen-complete} & 54 $\pm$ 9 & \bt{59 $\pm$ 6} & \bt{56 $\pm$ 5} & 52 $\pm$ 9 & 25 $\pm$ 6 & 43 $\pm$ 7 & 0 $\pm$ 0  & 46 $\pm$ 14 & 46 $\pm$ 3 \\
\midrule
\textbf{\texttt{AVERAGE}} & 44 $\pm$ 4 & 45 $\pm$ 5 & 45 $\pm$ 5 & \bt{53 $\pm$ 5} & 18 $\pm$ 2 & 26 $\pm$ 5 & 25 $\pm$ 4 & 49 $\pm$ 5 &  \bt{55 $\pm$ 4}\\
\bottomrule
\end{tabular}%
}
\end{table}

\begin{figure}[!h] 
    \centering
    \setlength{\tabcolsep}{1pt} 
    
    \begin{subfigure}[b]{0.24\textwidth}
        \centering
        \includegraphics[width=\linewidth]{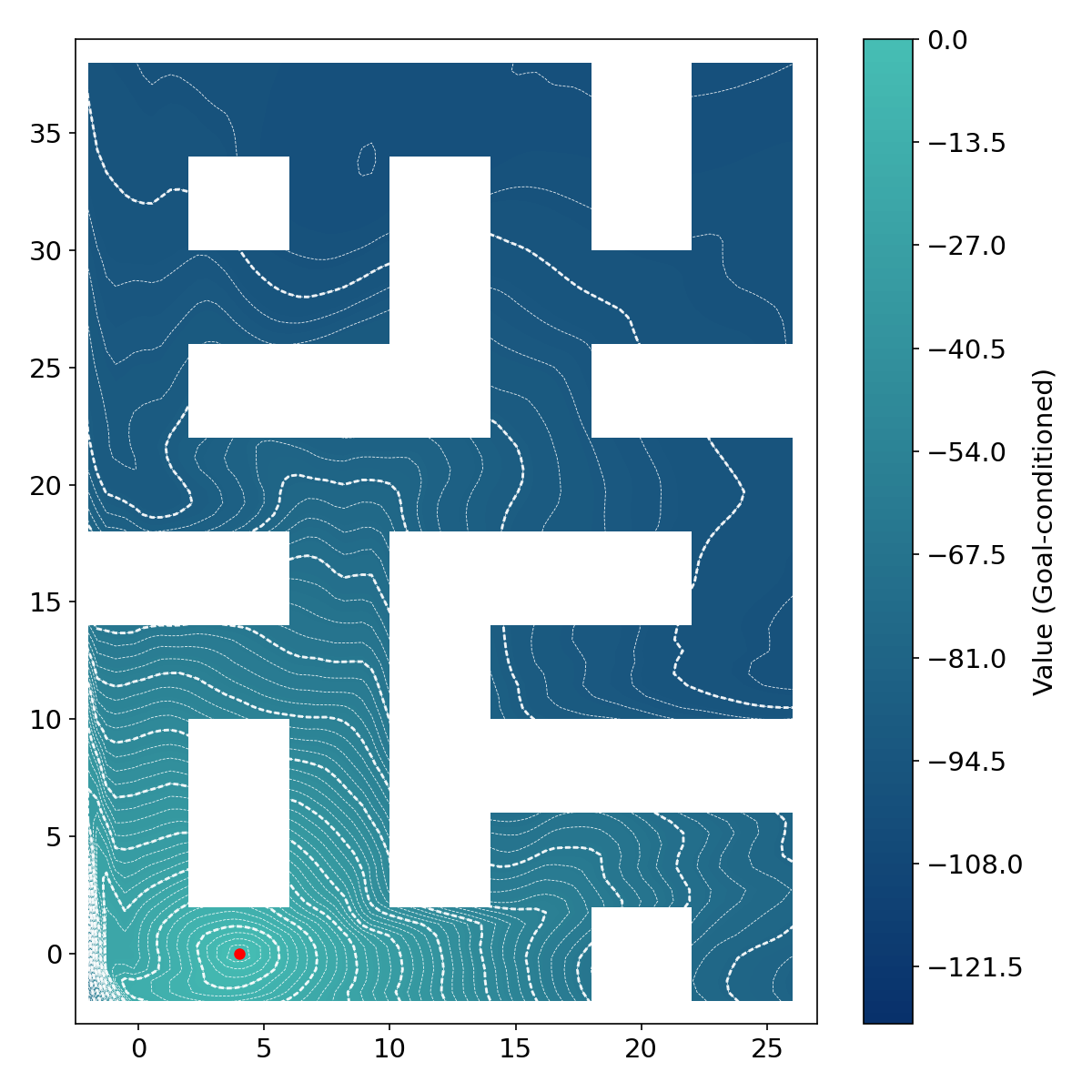}
        \caption{Orig: Baseline}
        \label{fig:orig_base}
    \end{subfigure}
    \hfill
    \begin{subfigure}[b]{0.24\textwidth}
        \centering
        \includegraphics[width=\linewidth]{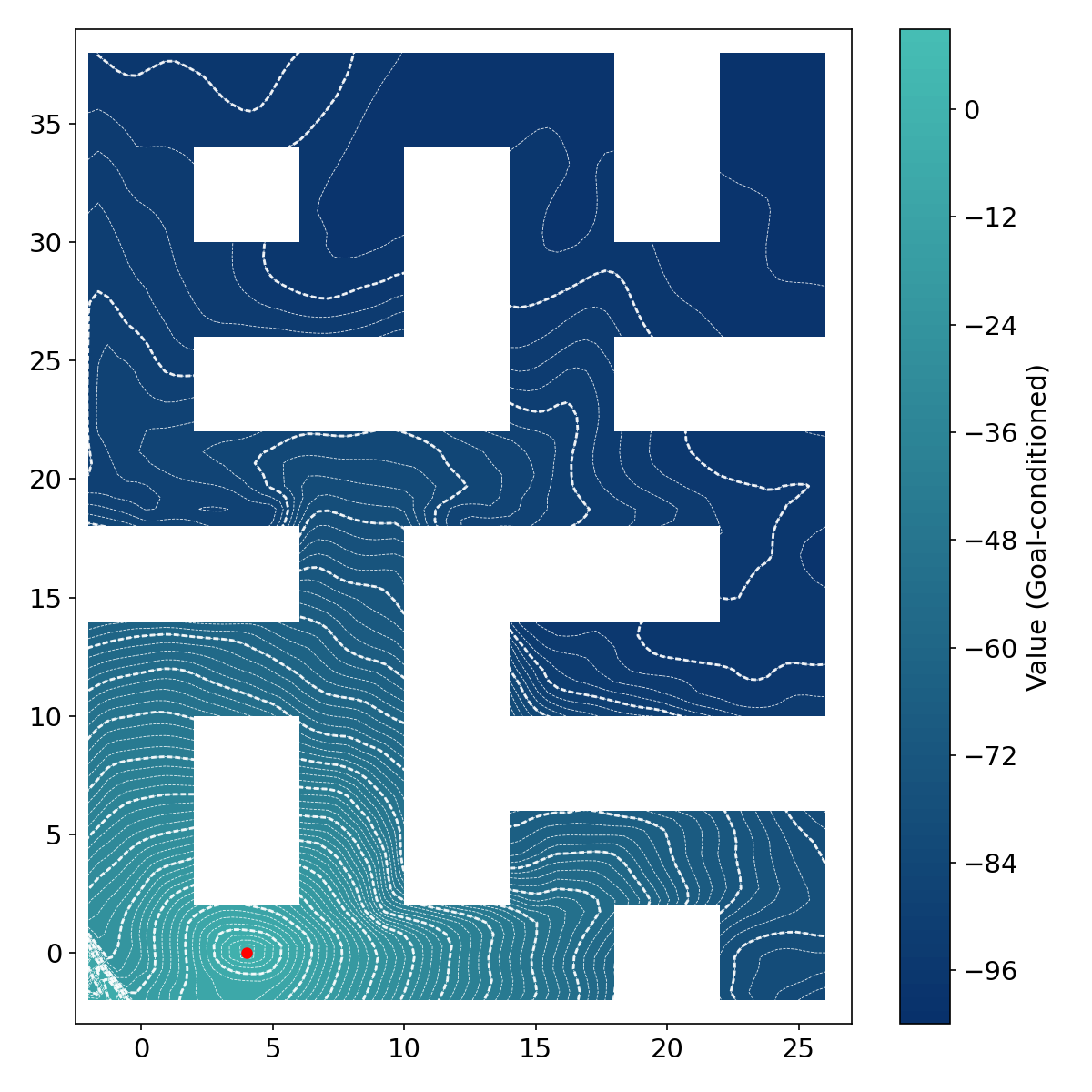}
        \caption{Dual: Baseline}
        \label{fig:dual_base}
    \end{subfigure}
    \hfill
        \begin{subfigure}[b]{0.24\textwidth}
        \centering
        \includegraphics[width=\linewidth]{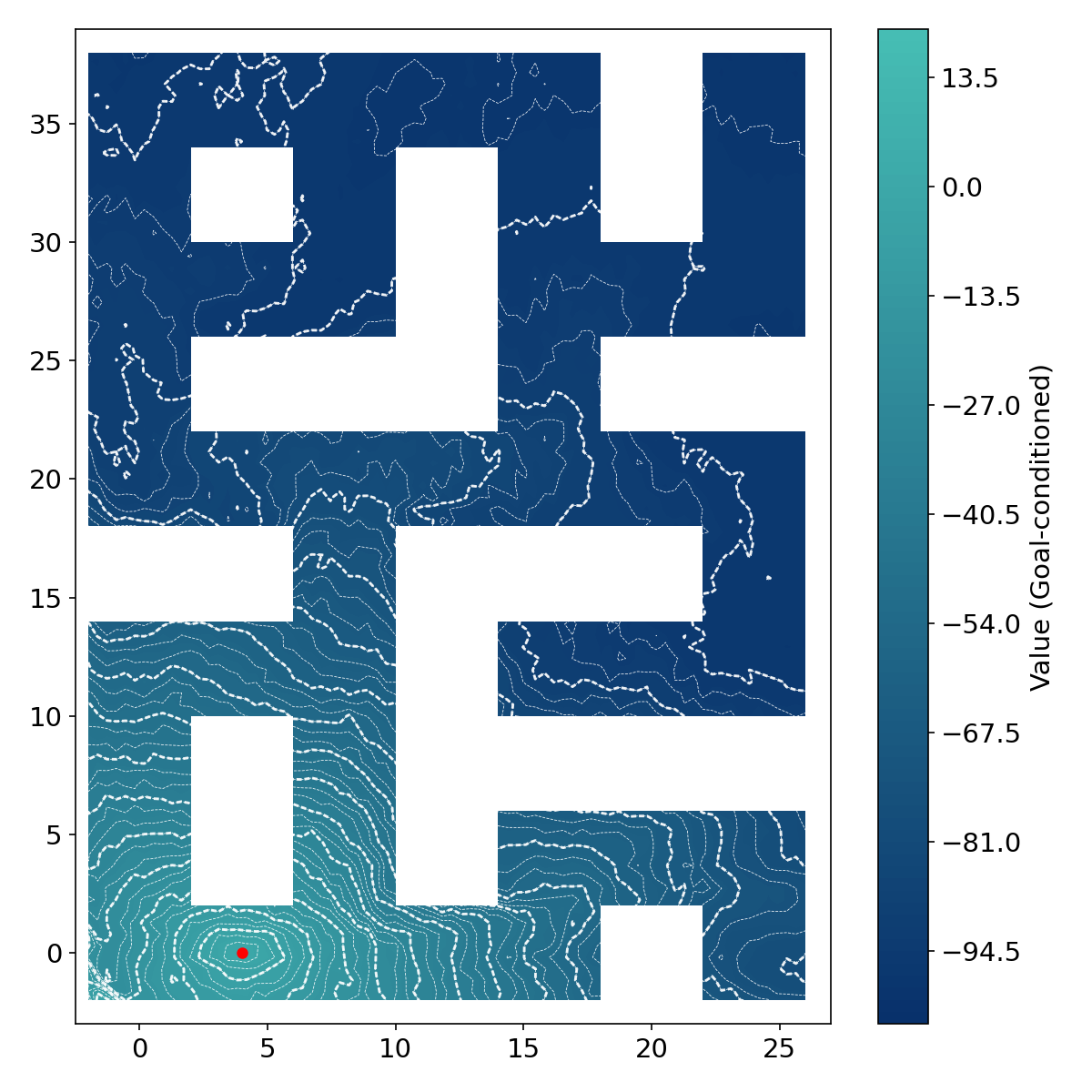}
        \caption{VIB: Baseline}
        \label{fig:vib_base}
    \end{subfigure}
    \hfill
    \begin{subfigure}[b]{0.24\textwidth}
        \centering
        \includegraphics[width=\linewidth]{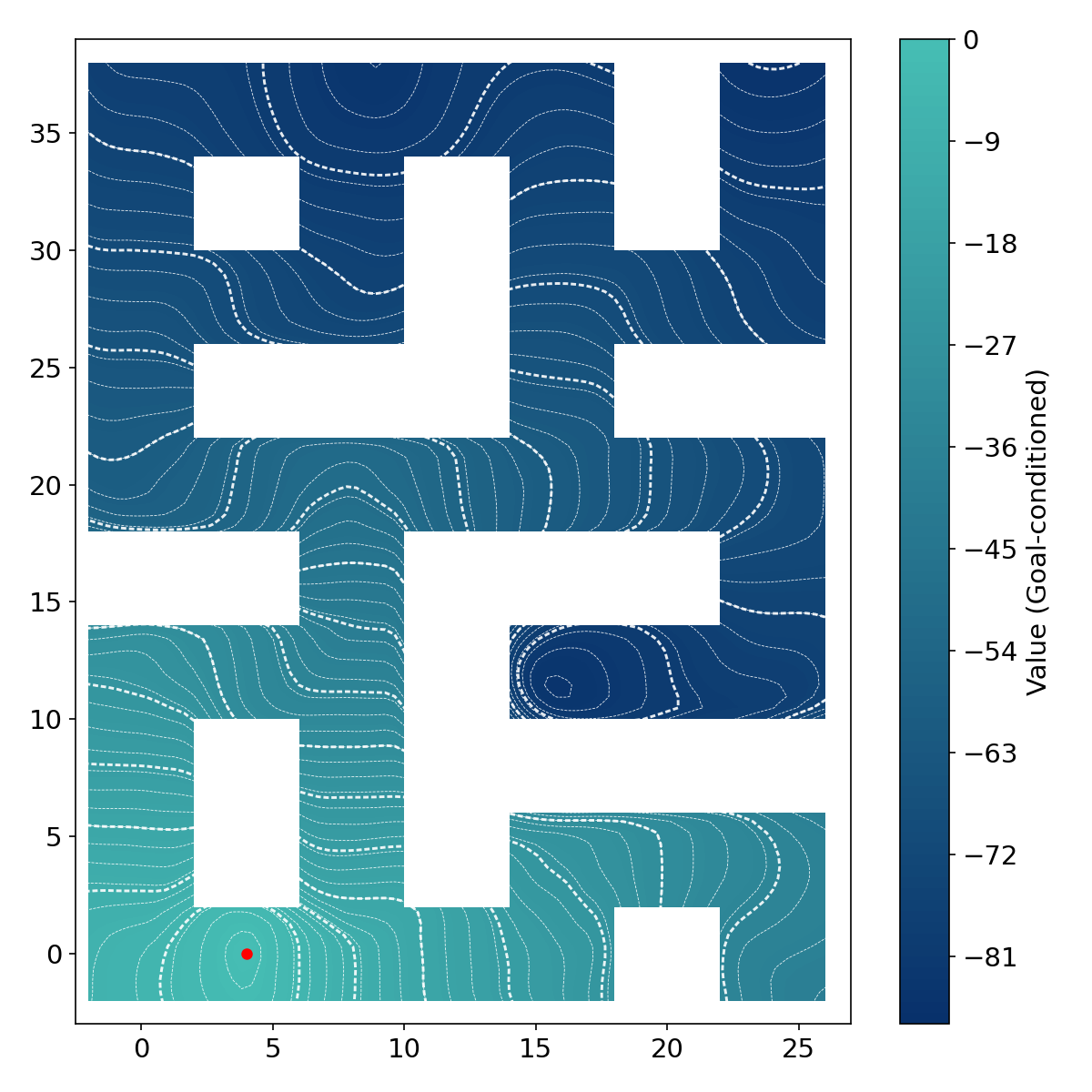}
        \caption{Dual: +Eikonal}
        \label{fig:dual_eik}
    \end{subfigure}

    \vspace{0.5em} 

    \begin{subfigure}[b]{0.24\textwidth}
        \centering
        \includegraphics[width=\linewidth]{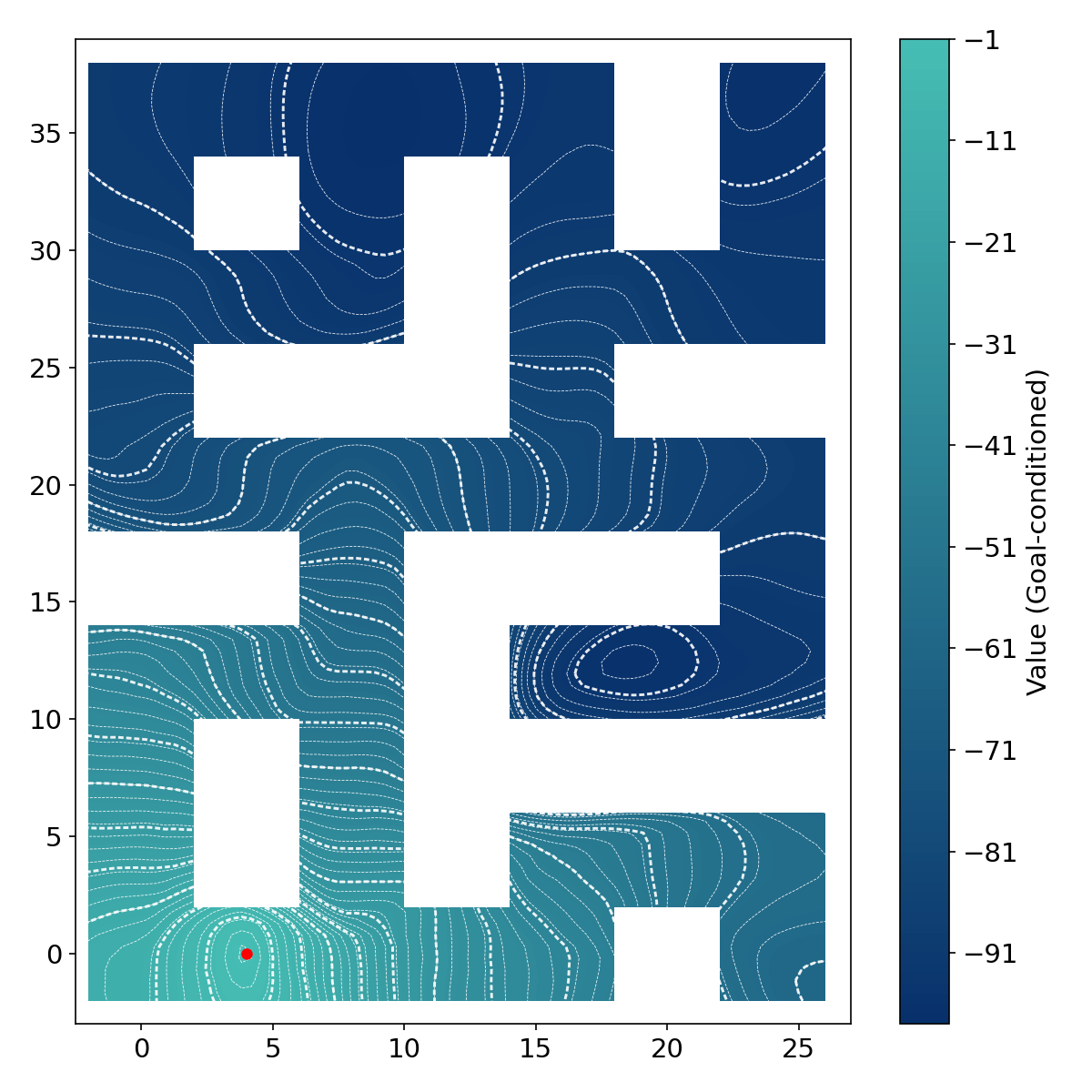}
        \caption{Orig: +Ours}
        \label{fig:orig_ours}
    \end{subfigure}
    \hfill
    \begin{subfigure}[b]{0.24\textwidth}
        \centering
        \includegraphics[width=\linewidth]{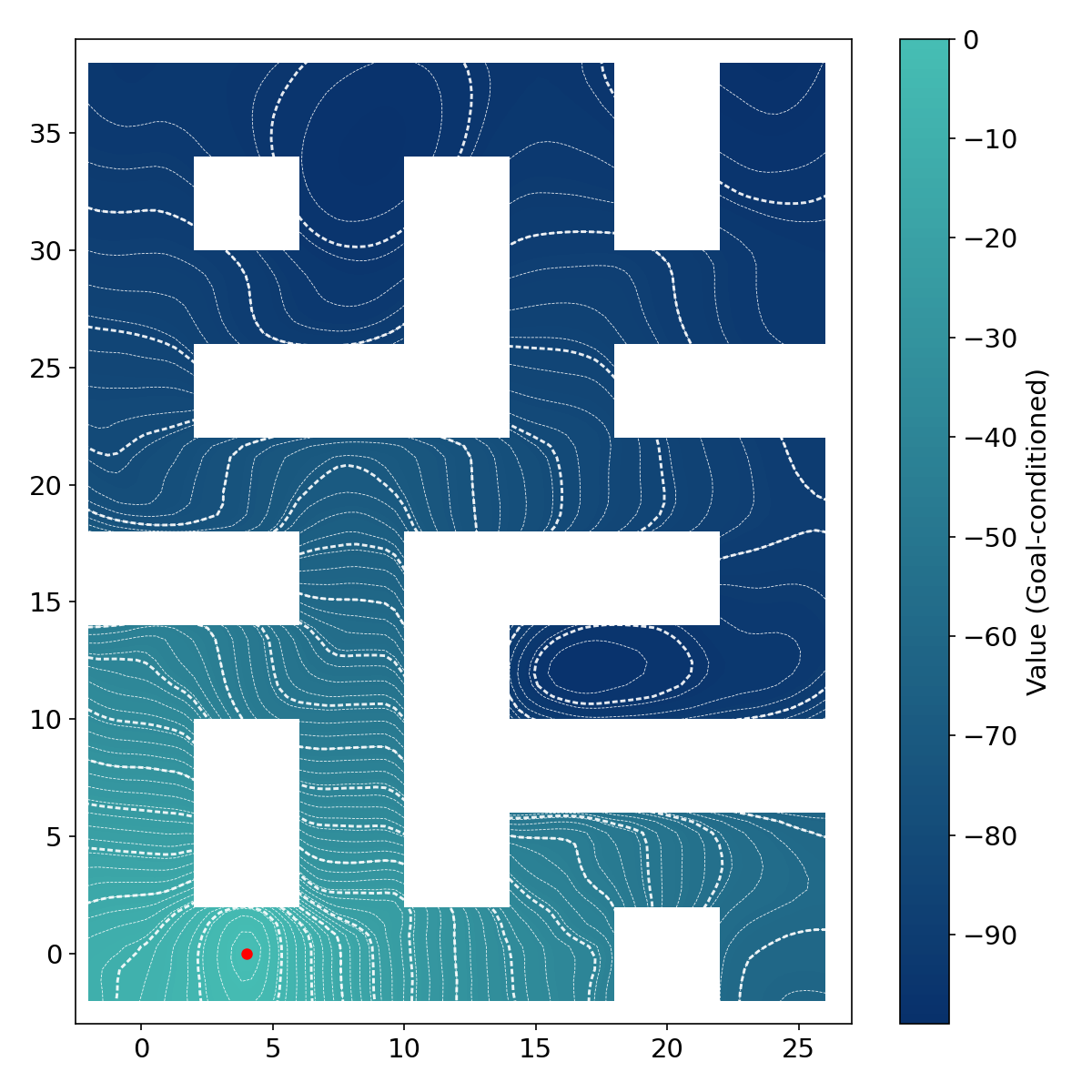}
        \caption{Dual: +Ours}
        \label{fig:dual_ours}
    \end{subfigure}
    \hfill
    \begin{subfigure}[b]{0.24\textwidth}
        \centering
        \includegraphics[width=\linewidth]{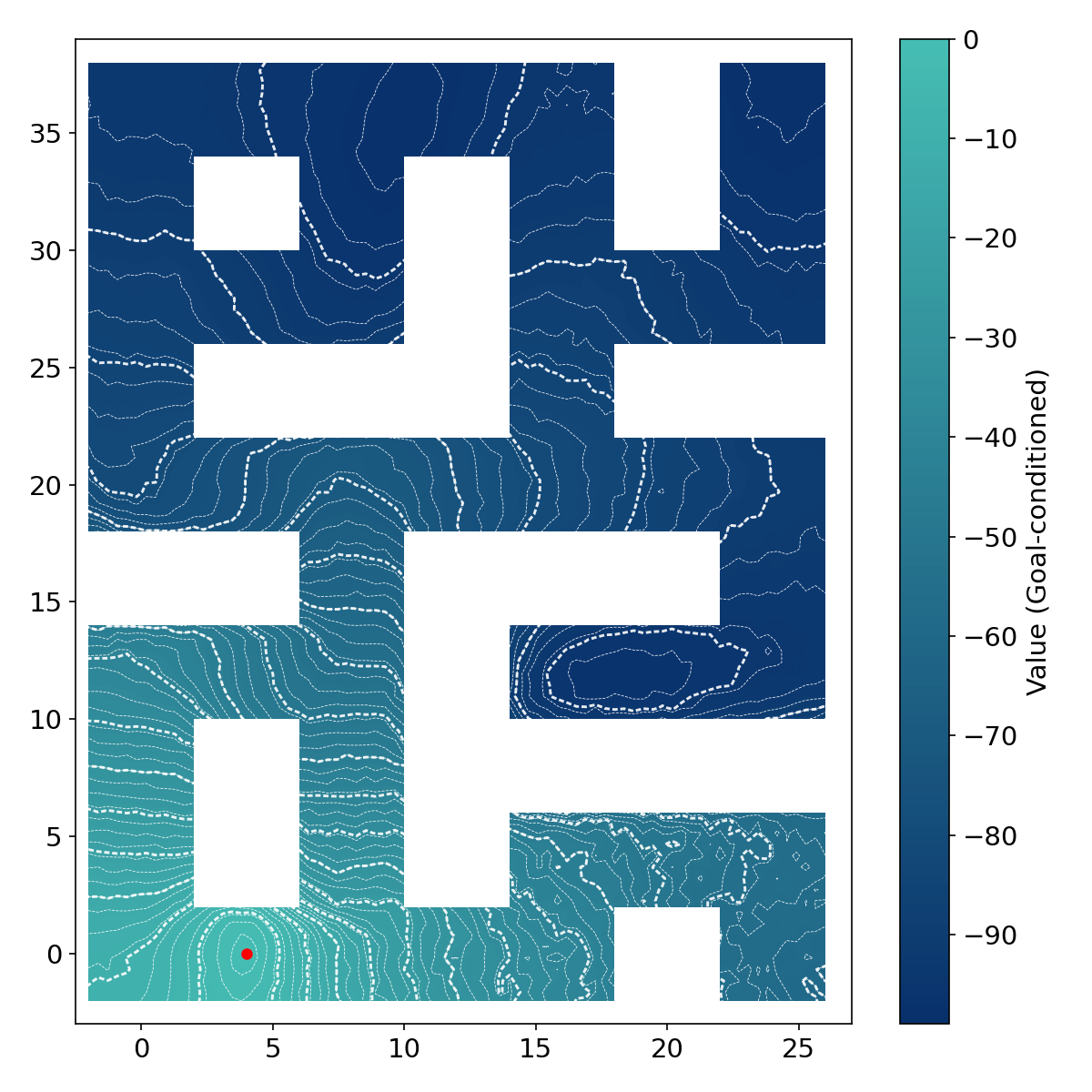}
        \caption{VIB: +Ours}
        \label{fig:vib_ours}
    \end{subfigure}
    \hfill
    \begin{subfigure}[b]{0.24\textwidth}
        \centering
        \includegraphics[width=\linewidth]{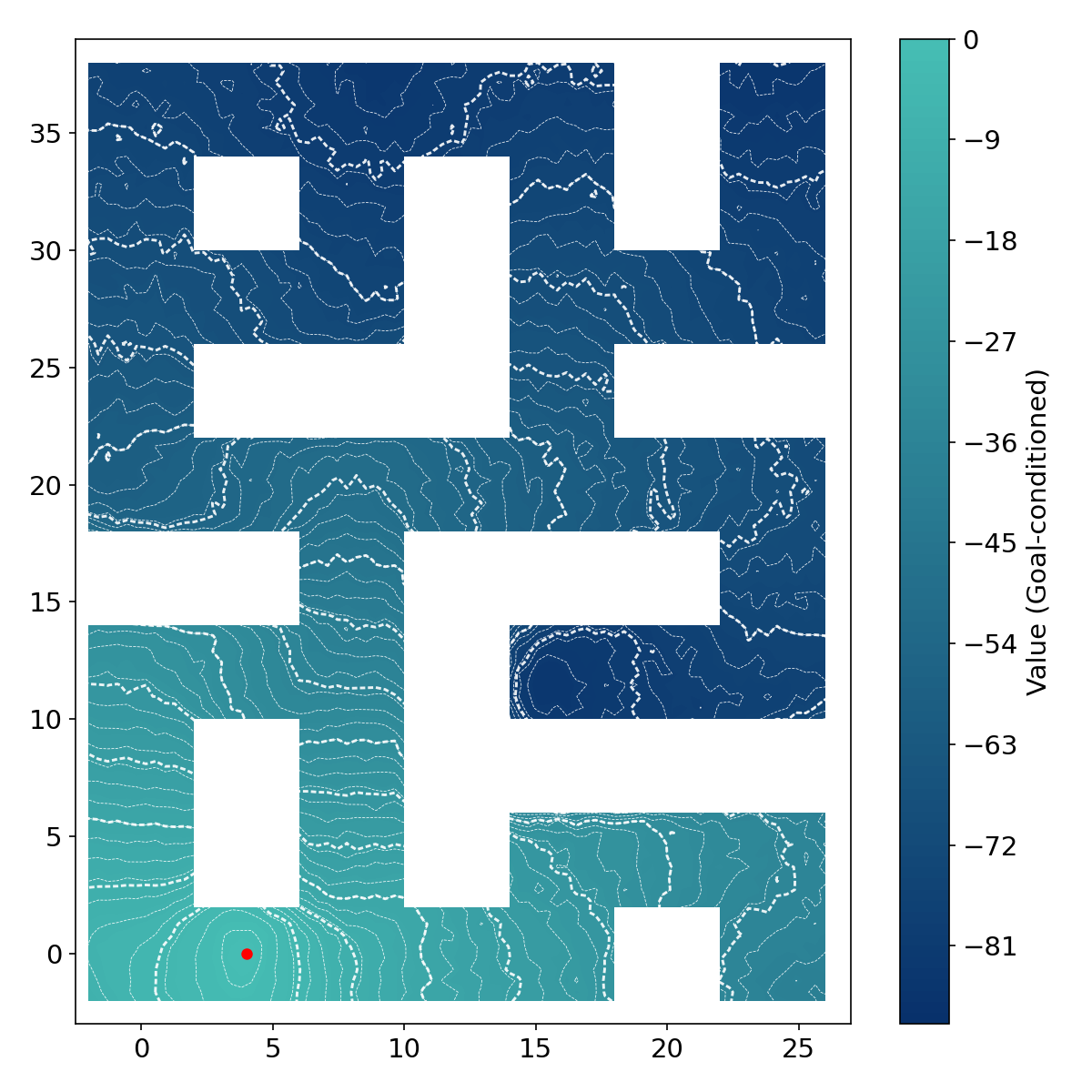}
        \caption{VIB: +Eikonal}
        \label{fig:vib_eik}
    \end{subfigure}

   \caption{%
        \textbf{Qualitative value contour ablation on PointMaze-Large.} 
        \textbf{Col 1 (Original):} GCIVL Baseline (\subref{fig:orig_base}) contours aren't aligned towards goal (bottom-left corner); ours (\subref{fig:orig_ours}) with aligned contours. 
        \textbf{Col 2 (Dual):} Baseline (\subref{fig:vib_base}) exhibits severe jitter; ours (\subref{fig:vib_ours}) with improved geometry. 
        \textbf{Col 3 (VIB):} Baseline (\subref{fig:dual_base}); ours (\subref{fig:dual_ours}) enforces value geometry. 
        \textbf{Col 4 (Eikonal):} Eikonal constraints (\subref{fig:vib_eik}, \subref{fig:dual_eik})}
    \label{fig:horizontal_ablation}
    \vspace{-1em}
\end{figure}

Following the experimental procedures of \cite{park2025dualgoalrepresentations}, we evaluate our method on the state-based offline goal-conditioned RL benchmark \texttt{OGBench} \cite{park2024ogbench} and the \texttt{Franka} Kitchen testbench from \texttt{D4RL}. We additionally do tests on a 7-dof Franka Panda on 2 custom designed tasks.

\noindent\textbf{Tasks and Datasets.}
We consider 13 state-based tasks from the \texttt{OGBench} datasets that cover both manipulation and navigation behaviors. The navigation tasks (\texttt{pointmaze}, \texttt{antmaze}, \texttt{humanoidmaze}) focus on navigating a robot through mazes of varying size, each with increasing complexity, with humanoid being the most complex (21 Degrees-of-Freedom in movement). In the manipulation experiments, we consider \texttt{cube, scene, and puzzle}, which involve a robotic arm manipulating, rearranging, or solving combinatorial puzzles.

\noindent\textbf{Baselines.}
Our policy algorithm extends the \texttt{GCIVL} algorithm presented in \cite{park2025dualgoalrepresentations}. We consider 5 representation strategies: Original (i.e., GCIVL), \texttt{VIB} \cite{shah2021rapid}, which trains representations through variational information bottleneck, \texttt{VIP} \cite{ma2022vip}, a metric-based representation, \texttt{TRA} \cite{myers2025temporal}, a contrastive representation, and \texttt{BYOL} \cite{lawson2025self}, a self-supervised approach. For the physics-informed strategies, we consider the Eikonal \texttt{EIK} from \cite{giammarino2025physics}. We compare against the quasimetric learning baselines \texttt{CRL} \cite{eysenbach2022contrastive}, \texttt{QRL} \cite{wang2023optimal}, \texttt{TDP} \cite{jurgenson2020sub}, \texttt{COE} \cite{pikekos2023efficient} and \texttt{TRL} \cite{park2025transitive} in \cref{tab:new_dataset_results} on \texttt{oraclerep} setting defined over a barebones representation with minimal information required to fulfil success criteria.

\subsection{Results} 
We present the results on the 13 \texttt{OGBench} tasks in \cref{tab:state_based_results}. We will discuss these results in detail in the following subsections. We will focus on three key questions that will drive our discussion. 

\begin{wraptable}{R}{0.55\textwidth}
\vspace{-4pt} 
\centering
\caption{\textbf{ABLATION EXPERIMENTS.}  \texttt{Eik-HIQL} applies Eikonal constraint to HIQL policy, \texttt{DUAL+EIK} enforces the first-order Eikonal constraint ($|\nabla V| \approx 1$).}
\label{tab:ablation_all_envs}
\resizebox{\linewidth}{!}{%
\begin{tabular}{lcccc}
\toprule
\textbf{\texttt{ENVIRONMENT}} & \textbf{\texttt{EIK-HIQL}} & \textbf{\texttt{HIQL+MVL}} & \textbf{\texttt{DUAL+EIK}} & \textbf{\texttt{DUAL+MVL}} \\
\midrule
\multicolumn{5}{l}{\textit{\textbf{PointMaze}}} \\
\texttt{point-nav-medium}      & \bt{93 $\pm$ 5} & \bt{95 $\pm$ 3} & \bt{92 $\pm$ 1} & \bt{96 $\pm$ 3} \\
\texttt{point-nav-large}       & 83 $\pm$ 9 & \bt{89 $\pm$ 5} & 79 $\pm$ 4 & 77 $\pm$ 1 \\
\texttt{point-nav-giant}       & 79 $\pm$ 13 & \bt{91 $\pm$ 4} & 73 $\pm$ 3 & 57 $\pm$ 7 \\
\texttt{point-nav-teleport}    & \bt{47 $\pm$ 10} & \bt{45 $\pm$ 10} & 44 $\pm$ 2 & \bt{50 $\pm$ 3} \\
\texttt{point-stitch-medium}   & \bt{96 $\pm$ 3} & \bt{95 $\pm$ 2} & \bt{93 $\pm$ 3} & \bt{93 $\pm$ 2} \\

\texttt{point-stitch-teleport} & 43 $\pm$ 9 & 42 $\pm$ 3 & \bt{50 $\pm$ 2} & \bt{52 $\pm$ 2} \\
\midrule
\multicolumn{5}{l}{\textit{\textbf{AntMaze}}} \\
\texttt{ant-nav-medium}        & \bt{95 $\pm$ 1} & \bt{97 $\pm$ 2} & 77 $\pm$ 4 & 76 $\pm$ 5 \\
\texttt{ant-nav-large}         & 86 $\pm$ 2 & \bt{92 $\pm$ 2} & 46 $\pm$ 3 & 39 $\pm$ 3 \\
\texttt{ant-nav-giant}         & \bt{67 $\pm$ 5} & \bt{68 $\pm$ 3} & 5 $\pm$ 2  & 1 $\pm$ 0 \\
\texttt{ant-nav-teleport}      & \bt{52 $\pm$ 4} & \bt{50 $\pm$ 4} & 36 $\pm$ 1 & 39 $\pm$ 2 \\
\midrule
\multicolumn{5}{l}{\textit{\textbf{AntSoccer}}} \\
\texttt{soccer-nav-arena}    & 19 $\pm$ 2 & \bt{43 $\pm$ 3} & 7 $\pm$ 2 & 27 $\pm$ 1 \\
\texttt{soccer-nav-medium}   & \bt{3 $\pm$ 2} & \bt{6 $\pm$ 2} & \bt{2 $\pm$ 1} & \bt{3 $\pm$ 1} \\
\texttt{soccer-stitch-arena} & 2 $\pm$ 0 & \bt{13 $\pm$ 3} & 1 $\pm$ 1 & \bt{10 $\pm$ 3} \\
\texttt{soccer-stitch-medium} & \bt{1 $\pm$ 0} & \bt{4 $\pm$ 1} & \bt{0 $\pm$ 0} & \bt{0 $\pm$ 0} \\
\midrule
\multicolumn{5}{l}{\textit{\textbf{Manipulation}}} \\
\texttt{cube-single-play} & 25 $\pm$ 1 & 26 $\pm$ 3 & 1 $\pm$ 0 & \bt{91 $\pm$ 1} \\
\texttt{scene-play}       & 52 $\pm$ 7 & 56 $\pm$ 4 & 13 $\pm$ 2 & \bt{84 $\pm$ 5} \\
\bottomrule
\end{tabular}%
}
\vspace{-15pt} 
\end{wraptable}

\question{\textbf{How does the mollifier influence the representation bias of implicit value functions?}}

\noindent\textbf{Answer:} We observe that different representation learning strategies inject distinct biases that distort the underlying value geometry (\cref{fig:horizontal_ablation}). However, introducing geometric constraints helps mitigate these biases by realigning the value contours toward the goal. Empirically, this spatial realignment translates directly to performance: our method improves over plain \texttt{IVL} in 80\% of \texttt{oraclerep} environments and outperforms all other quasimetric baselines with vanilla \texttt{IVL} base in 55\% of cases (\cref{tab:new_dataset_results}). Furthermore, while hierarchical policies inherently construct better value structures, our non-differential regularization matches the explicit differential constraints of \texttt{Eik-HIQL} across all settings (\cref{tab:ablation_all_envs}). Crucially, in the absence of hierarchical abstraction, applying our geometry to the strongest base representation (dual-goal) overcomes the structural limitations in complex manipulation tasks, driving success rates in \texttt{cube-single-play} and \texttt{scene-play} to 91\% and 84\% (\cref{tab:state_based_results}).

\question{\textbf{How sensitive is the value geometry to unmodeled environment dynamics or dataset stochasticity, and why does \texttt{MVL} succeed?}}

\noindent\textbf{Answer:} Our philosophy is to view the shortest-path geometry purely as a spatial routing mechanism, indicating the best directional progress toward the goal (\cref{fig:advantage_comparison}), independent of the agent's low-level dynamics. This approach is naturally suited for offline settings where true environment dynamics are inaccessible. If the environment is stochastic and the transitions are noisy, our mollifier inherently decouples the spatial geometry from this low-level noise by evaluating the local spatial distribution rather than fitting to individual, artifact-heavy transitions. Thus, MVL acts as a geometric low-pass filter; it models the macroscopic routing signal while averaging out dataset stochasticity and unmodeled dynamics. This decoupling is empirically validated in highly stochastic settings (\cref{tab:combined_ablations}). Under severe transition noise, where baseline constraints often collapse to near-zero success, our spatial expectation preserves the underlying routing geometry, yielding robust recoveries such as $0 \rightarrow 99$ on \texttt{cube-single-noisy} and $0 \rightarrow 53$ on \texttt{scene-noisy}.

\begin{table}[h!]
    \centering
    \caption{\textbf{ROBUSTNESS AND SENSITIVITY ABLATIONS.} \textbf{(a)} Performance on stochastic environments. \textbf{(b)} Ablations (\texttt{pointmaze-large}) across varying sample sizes ($N$) and radius ($\delta$). Additional envs in \cref{tab:ablation_results_ND}}
    \label{tab:combined_ablations}
    \vspace{2mm} 
    
    \begin{subtable}[t]{0.45\linewidth}
        \centering
        \vspace{-12pt}
        \caption{Noisy environments}
        \label{tab:ablation_noisy}
        \resizebox{0.8\linewidth}{!}{%
        \begin{tabular}{lcc}
        \toprule
        \textbf{\texttt{ENVIRONMENT}} & \textbf{\texttt{DUAL+EIK}} & \textbf{\texttt{DUAL+MVL}} \\
        \midrule
        \multicolumn{3}{l}{\textit{\textbf{Noisy Manipulation}}} \\
        \texttt{scene-noisy}        & 0 $\pm$ 0  & \bt{53 $\pm$ 4} \\
        \texttt{cube-single-noisy}  & 0 $\pm$ 0  & \bt{99 $\pm$ 1} \\
        \texttt{cube-double-noisy}  & 1 $\pm$ 0   & \bt{30 $\pm$ 3} \\
        \midrule
        \multicolumn{3}{l}{\textit{\textbf{Noisy Puzzle}}} \\
        \texttt{puzzle-4x4-noisy}   & 0 $\pm$ 0  & \bt{24 $\pm$ 1} \\
        \texttt{puzzle-4x5-noisy}   & 0 $\pm$ 0 & \bt{18 $\pm$ 1} \\
        \bottomrule
        \end{tabular}%
        }
    \end{subtable}\hfill%
    \begin{subtable}[t]{0.51\linewidth}
        \centering
        \vspace{-12pt}
        \caption{Hyperparameters}
        \label{tab:ablation_hyperparams}
        \resizebox{0.8\linewidth}{!}{%
        \begin{tabular}{lc c lc}
        \toprule
        \multicolumn{2}{c}{\textit{\textbf{Num Samples ($N$)}}} & & \multicolumn{2}{c}{\textit{\textbf{Radius ($\delta$)}}} \\
        \cmidrule{1-2} \cmidrule{4-5}
        \textbf{\texttt{PARAM}} & \textbf{\texttt{SUCCESS}} & & \textbf{\texttt{PARAM}} & \textbf{\texttt{SUCCESS}} \\
        \midrule
        $N=1$  & 57 $\pm$ 3 & & $\delta=10^{-4}$ & 55 $\pm$ 2\\
        $N=5$  & \bt{77 $\pm$ 2} & & $\delta=10^{-3}$ & 57 $\pm$ 3\\
        $N=10$ & 76 $\pm$ 2 & & $\delta=10^{-2}$ & 77 $\pm$ 3\\
        $N=20$ & \bt{77 $\pm$ 2} & & $\delta=10^{-1}$ & \bt{79 $\pm$ 2} \\
        $N=25$ & \bt{77 $\pm$ 3} & & $\delta=10$ & 64 $\pm$ 4 \\
        \bottomrule
        \end{tabular}%
        }
    \end{subtable}
    \vspace{-0.5em}
\end{table}

\begin{figure}[h!]
    \centering
    \includegraphics[width=1\linewidth]{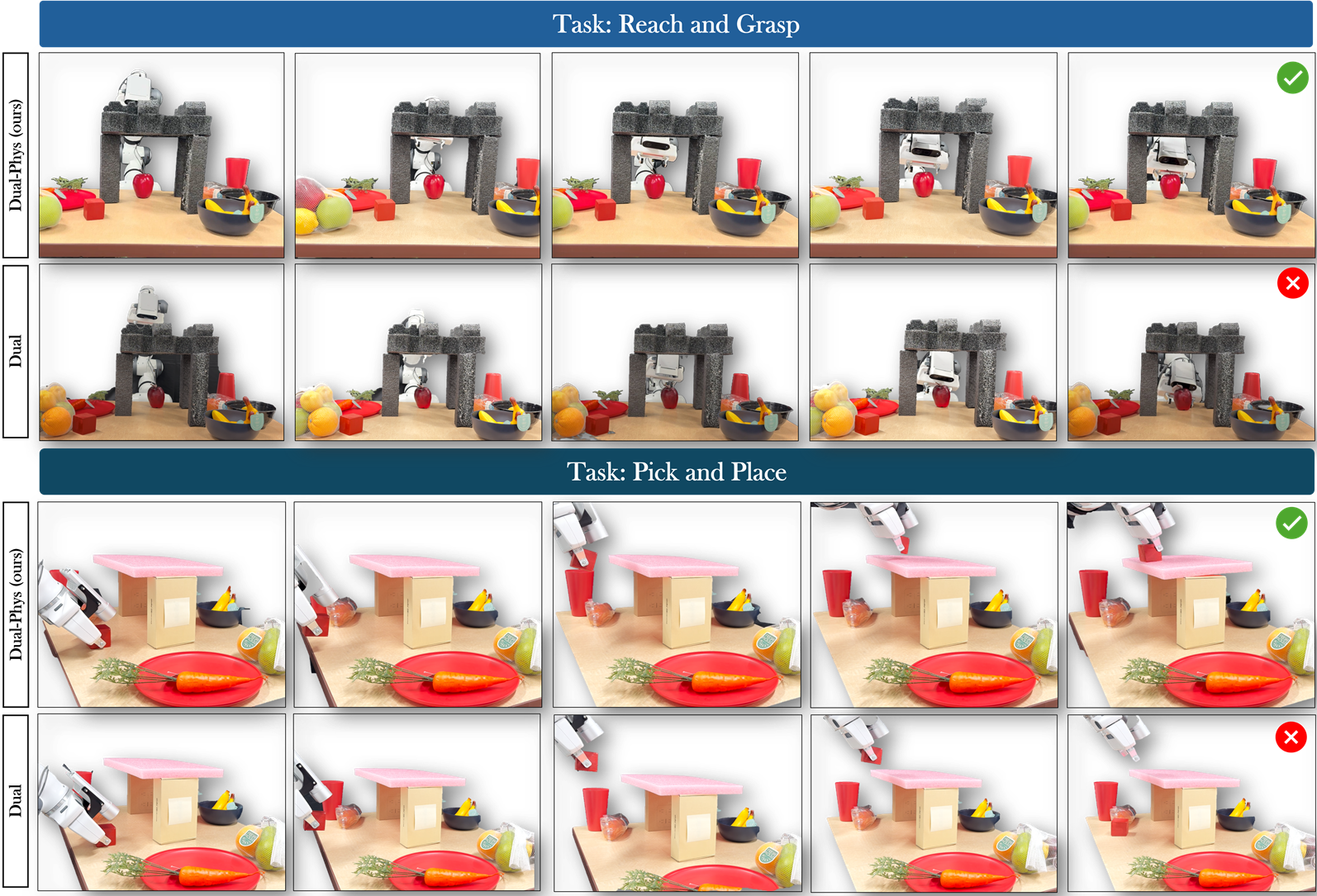}
    \caption{\textbf{Real-world demonstrations} \textbf{Top} In the Reach and Grasp task, without regularization, the arm fails to position itself to grab the apple and instead drags along the table. \textbf{Bottom} In Pick and Place, regularization helps position correctly before releasing the gripper. }
    \label{fig:realworldtasks}
\end{figure}
\begin{figure}[h!]
    \centering
    \begin{subfigure}[b]{0.48\textwidth}
        \centering
        \includegraphics[width=\linewidth]{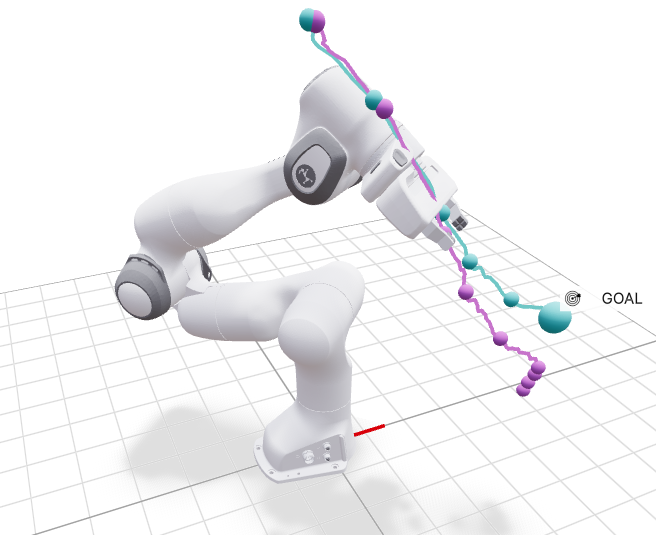}
        \caption{\texttt{Reach and Grasp}}
        \label{fig:realworld_reach}
    \end{subfigure}
    \hfill
    \begin{subfigure}[b]{0.48\textwidth}
        \centering
        \includegraphics[width=\linewidth]{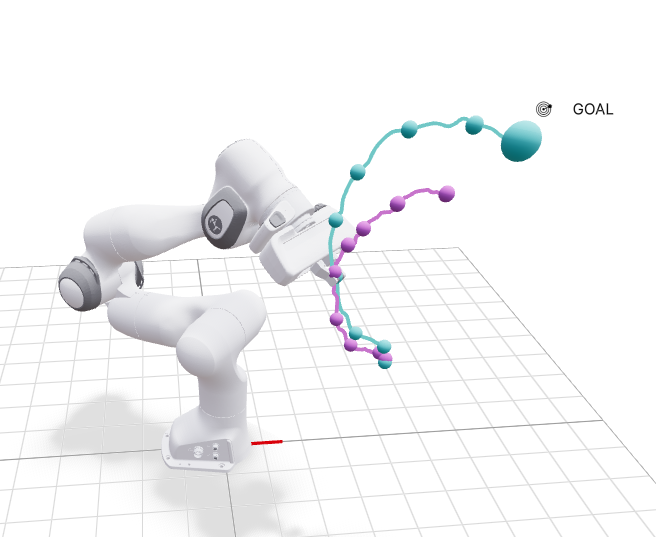}
        \caption{\texttt{Pick and Place}}
        \label{fig:realworld_pick}
    \end{subfigure}
    
    \caption{\textbf{MVL deployed on Franka Emika Panda.} We use \texttt{Viser} to visualize the execution trajectories of our policy, represented as soft teal waypoints (Purple represents base dual policy). This visualizes the routing of the end-effector toward the goal in \texttt{Reach and Grasp} (left) and the more complex \texttt{Pick and Place} (right) task, as shown in \cref{fig:realworldtasks}}
    \label{fig:franka_tasks}
\end{figure}

\question{\textbf{How does this approach scale to real-world settings?}}

\noindent\textbf{Answer:} In real-world physical systems, data is inherently prone to sensor variance, calibration errors, and unmodeled friction. Policies are thus required to be robust to these complex behaviors. By defining the value geometry as a spatial expectation, we investigate whether our formulation can naturally absorb this real-world noise. We validate this on a \texttt{Franka Emika Panda} across \texttt{Reach and Grasp} and \texttt{Pick and Place} tasks (Fig.~\cref{fig:realworldtasks}). For each task, we collect 100 demonstrations. Each trajectory is represented as 7-DoF joint states with binary gripper values, and actions are defined as joint-space deltas with gripper targets. The data is converted into an OGBench-compatible offline dataset where the goal states are inferred from sequential observations. Across both tasks, regularization significantly improves goal alignment. In \texttt{Reach and Grasp}, success improves from 2/10 without regularization to 8/10 with it. In \texttt{Pick and Place}, the unregularized policy consistently fails to reach goal proximity, while the regularized model achieves 10/10 success.

\noindent\textbf{Limitations}:
Our formulation defines local consistency through Euclidean distance and isotropic Gaussian kernels over the state space. Importantly, these kernels are not intended to model admissible system dynamics or feasible trajectories; rather, they define the spatial support of the mollification operator used to aggregate local quasimetric consistency constraints. While this avoids explicitly modeling nonholonomic constraints and contact dynamics, the relaxation allows us to simplify high dimensional state-spaces where the dynamics is unknown. We acknowledge, that this assumption leads us to an approximation of distance constraints rather than exact HJB Hamiltonian,  restricting us to a class of solutions that might not be optimal for all MDPs. Despite this simplification, the resulting objective remains effective across complex state-based environments. We additionally observe that in visual domains, such as Powderworld (\cref{tab:powderworld_results}), enforcing consistency in the \textit{Impala} latent space yields modest but gains (10\% vs 7\% on \texttt{powderworld-hard}). Since these representation spaces are not strictly Euclidean, the induced geometric assumptions may be insufficient. Overall, this approach belongs to a broader class of methods that replace exact optimality constraints with simplified geometric inductive biases. 
\section{Conclusion}
\label{sec:conclusion}

We introduced a geometric regularization framework for Goal-Conditioned RL based on localized consistency of value structure. The resulting formulation provides a simple and tractable inductive bias that improves value learning in high-dimensional manipulation tasks. A natural direction for future work is to extend these ideas to learned or manifold-aware representation spaces, particularly in pixel domains where Euclidean assumptions in latent space may not hold.


\clearpage

\bibliography{bibtex/juanwu}

\clearpage
\appendix
\onecolumn

\section{Derivations}
\label{appx: derivation}

\subsection{Assumptions}
\paragraph{Assumption 1 (State and Action Space)}: The state space $\mathcal{S} \subset \mathbb{R}^n$ and the action space $\mathcal{A}$ are compact and convex.
\paragraph{Assumption 2 (Dynamics)}: We consider the system dynamics $f(\cdot, a)$ to be Lipschitz continuous in $s$ (uniformly in $a$). We consider the relaxation such that the system follows unit-speed, isotropic dynamics where $f(s,a) = a$ and $\|a\| \le 1$. 
\paragraph{Assumption 3 (Action-Independent Cost)}: The cost $c(s,g)$ is continuous, bounded and nonnegative. Specifically, $c(g,g) = 0$, and $c(s,g) = 1$ for all $s \neq g$. The value function is obtained by accumulating this cost along the trajectory. 
\paragraph{Assumption 4 (Triangle Inequality)}: Let $d(x,y) : \mathbb{R}^d \times \mathbb{R}^d \rightarrow \mathbb{R}$ denote a distance like quasimetric such that $d(x,z) \leq d(x,y) + d(y, z), \forall x, y, z \in \mathbb{R}^d$, where $d(\cdot, \cdot)$ is locally Lipschitz such that $|d(x,z) - d(y,z)| \leq \|x-y\|$, and the value function satisfies $V^*(x, z) = -d(x,z)$
\subsection{Solution to Optimal Control}
\label{subsec: solution}


The first order approximation of this expression leads to the Hamiltonian, defined by the Hamilton-Jacobi-Bellman (HJB) equation, given by $H(\vs_t, \vg, \nabla V^*) = [f(\vs_t, \va_t)^\top \nabla V^*(\vs_t, \vg)-c(\vs_t, \va)]$.  

Given a system dynamics as specified by~\cref{sec:optimal_control}, the condition for the optimality of a control \cite{goldys2006second, nevistic1996constrained} is then characterized by 
\begin{equation}
   0 =
    \min_{\va}\left[f(\vs_t,\va_{t})^\top\nabla{V^{\ast}(\vs,\vg)} - c(\vs_{t},\va)
    \right]
    \label{eq: sec_ord_hjb}
\end{equation}
 Under an infinite sample budget and on-policy data, the Q-learning frameworks used in GCRL converge to the ideal value function. However, these are often violated in the offline setting, where datasets are biased. In such cases, the Hamiltonian in \eqref{eq: sec_ord_hjb} provides local geometric inductive bias to the learned value function, enabling it to model the goal-conditioned distance field. Solving~\eqref{eq: sec_ord_hjb} to find the minimizer of $\va$ yields an optimal policy. However, in practical robot learning problems, the value function may develop non-smooth regions (e.g., shocks or intersecting characteristic structures), where the gradients of the PDE are unstable or undefined in a classical sense \cite{crandall1983viscosity, fleming2006controlled, yong1999stochastic}.
\begin{lemma}
By the principle of optimality, the value function $V(s,g)$ is the unique solution to the static Hamilton-Jacobi-Bellman equation
$$0 = \inf_{a \in \mathcal{A}} \left[ \nabla_s V(s,g)^\top f(s,a) -c(\vs_{t},\va)\right]$$ 
The HJB equation encodes the relationship between the system dynamics, the cost function, and the value function, where the right-hand side: 
$$\mathcal{H}(s, g, \nabla_s V(s,g)) \equiv \inf_{a \in \mathcal{A}} \left[ \nabla_s V(s,g)^\top f(s,a) -c(\vs_{t},\va)\right]$$
is referred to as the Hamiltonian.
\end{lemma}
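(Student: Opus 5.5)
The plan is to derive the equation from the dynamic programming identity stated in \cref{sec:optimal_control}, $V^*(\vs,\vg)=\inf_{\va}[V^*(\vs_{t+\Delta t},\vg)-c(\vs_t,\va_t)\Delta t]$. I will work under Assumptions 1--4, where $f(\vs,\va)=\va$ with $\|\va\|\le 1$, $c\equiv 1$ off the goal, and $V^*=-d$.

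\textbf{Sign convention.} Before starting, the optimisation direction has to be fixed. Since $V^*=-J^*$, the optimal control maximises $V^*$, so the dynamic programming identity and the HJB equation should read with $\sup_{\va}$; equivalently, they read with $\inf_{\va}$ when written for $J^*=-V^*$. A check shows the literal $\inf$ form with $V$ cannot hold. With $V=-d$ and $|\nabla V|=1$, one gets $\inf_{\|\va\|\le 1}[\nabla V^\top\va]-1=-2$. The $\sup$ form instead gives $|\nabla V|-1=0$. I would therefore prove the statement in the form $0=\sup_{\va}[\nabla_s V^\top f-c]$, i.e.\ $0=\inf_{\va}[-\nabla_s V^\top f+c]$, which is the intended content, and state this reading explicitly.

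\textbf{Step 1 (equation at points of differentiability).} Fix $\vs\neq\vg$ where $V$ is differentiable; by Assumption 4, $V$ is Lipschitz, so Rademacher gives this almost everywhere. First take any constant control $\va$ over $[0,\Delta t]$. Using the Lipschitz continuity of $f$, one has $\vs_{\Delta t}=\vs+f(\vs,\va)\Delta t+o(\Delta t)$. Suboptimality of this control gives $V(\vs)\ge V(\vs_{\Delta t})-c\Delta t$. A Taylor expansion, followed by dividing by $\Delta t\to 0$, yields $\nabla V^\top f(\vs,\va)-c\le 0$ for every $\va$.

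Second, for the reverse inequality, take $\varepsilon$-optimal controls on $[0,\Delta t]$. Compactness of $\gA$ and continuity of $f$ and $c$ let me pass to a limit control $\bar\va$ attaining $\nabla V^\top f(\vs,\bar\va)-c\ge 0$. This gives the equation classically a.e.; concretely here, it says $|\nabla V|=1$, which is consistent with $V=-d$ and Assumption 4.

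\textbf{Step 2 (viscosity sense and uniqueness).} At kinks, such as shocks behind obstacles, $V$ is not differentiable. There I would repeat Step 1 with smooth test functions $\varphi$ touching $V$ from above or below. This shows $V$ is a viscosity solution, following Crandall--Lions \cite{crandall1983viscosity} and \cite{fleming2006controlled}.

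Uniqueness I would obtain from the comparison principle for Hamiltonians of the form $\mathcal{H}(s,p)=\sup_a[p^\top f(s,a)]-1$ on the compact convex set $\gS$. The ingredients are the boundary condition $V(\vg,\vg)=0$ and the strict positivity of the cost, which makes the equation a coercive Eikonal-type problem. The uniqueness claim should accordingly be read as ``unique viscosity solution with $V(\vg,\vg)=0$,'' not as uniqueness among arbitrary a.e.\ solutions.

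\textbf{Main obstacle.} I expect the main obstacle to be this comparison/uniqueness step. Specifically, the standard doubling-of-variables argument needs care near the goal, where $c$ drops to $0$, and at the state-constraint boundary of $\gS$. I would first try to sidestep this by using the explicit representation $V=-d$ from Assumption 4. This route shows that any viscosity subsolution lies below $-d$ by integrating along straight unit-speed paths, and any supersolution lies above it by a dual argument. Only if that route fails would I fall back on the general comparison theorem.
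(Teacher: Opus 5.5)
\textbf{Verdict: genuine gap.} Your sign correction and your Step 1 are fine, but the uniqueness part fails as you have set it up.

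The paper does not actually prove this lemma. It is asserted ``by the principle of optimality'', and the lemmas that follow derive only a one-sided inequality, itself with the sign flipped: $V(s,g)\ge V(s',g)-c\Delta t$ plus the Taylor expansion gives $\nabla_s V^\top f-c\le 0$. Your sign correction is therefore right and necessary, since as printed the equation can never hold: $\inf_{\|a\|\le 1}[p^\top a]-1=-|p|-1\le -1$. Your Step 1 is also sound. With $f(s,a)=a$ and $\mathcal A$ convex, the time-averaged $\varepsilon$-optimal control already lies in $\mathcal A$, which is what lets you extract $\bar a$.

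\textbf{The counterexample.} Under Assumptions 1--3 the value is explicitly $V=-\|s-g\|$. But $u(s)=+\|s-g\|$ is also $C^1$ on $\mathcal S\setminus\{g\}$ with $|\nabla u|=1$ and $u(g)=0$, so it is a classical, hence viscosity, solution in either sign convention. Imposing $u\le 0$ does not help either. With $R=\max_{\mathcal S}\|s-g\|$ and $R\le a<2R$, the function $\max(-\|s-g\|,\|s-g\|-a)$ is another nonpositive interior viscosity solution of $1-|\nabla u|=0$. So your reading ``unique viscosity solution with $V(g,g)=0$'' is false on a compact $\mathcal S$. No argument using only interior information plus the value at $g$ can establish it. The missing ingredient is the state-constraint boundary condition on $\partial\mathcal S$ (Soner's constrained viscosity solutions). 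It has to be part of the statement, not a technicality to manage while doubling variables.

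\textbf{The sign convention.} This is where the two forms you treated as interchangeable come apart. $\sup_a[\nabla V^\top f-c]=0$ and $\inf_a[c-\nabla V^\top f]=0$ agree classically, but not in the viscosity sense. Dynamic programming for a maximisation makes $V$ a viscosity solution of $1-|\nabla V|=0$ in the interior. It also makes $V$ a subsolution of that equation at points $s_0\in\partial\mathcal S$, with test functions touching from above relative to $\mathcal S$.

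Take a smooth boundary point with outward normal $\nu$, and put $e=(s_0-g)/\|s_0-g\|$, so $e^\top\nu>0$ for $g$ interior. The admissible test gradients are:
\begin{itemize}
\item for $V$: $-e-\lambda\nu$ with $\lambda\ge 0$, all of norm at least $1$, so the subsolution condition holds;
\item for $+\|s-g\|$: $e-\lambda\nu$, of norm below $1$ for small $\lambda>0$, so it fails.
\end{itemize}
The boundary condition is exactly what excludes the competitor. If instead you run the standard state-constrained comparison for $\mathcal H(s,p)=\sup_a[p^\top f(s,a)]-1$ as you wrote it, the solution it selects is $+\|s-g\|=J^*$, not $V$.

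\textbf{The sidestep.} Your proposed sidestep has its directions reversed. Integrating along straight segments uses only the $1$-Lipschitz bound and yields $|u(s)|\le\|s-g\|$. That is the lower bound $u\ge-\|s-g\|$, not an upper bound. The upper bound $u\le -\|s-g\|$ is exactly what the competitors above violate. It must come from the boundary subsolution condition, for example via Soner's comparison theorem; the interior cone condition holds because $\mathcal S$ is convex with nonempty interior.

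Note also that convexity of $\mathcal S$ rules out obstacles and shocks, so $V$ is smooth off $g$. Your viscosity machinery is needed only at $\partial\mathcal S$ and for uniqueness.
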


\begin{lemma} 
 For any valid transition $(s, s')$ generated by $a \in \mathcal{A}$ over time $\Delta t$ such that $s' = s + f(s,a)\Delta t$, by triangle inequality, the value function $V(s,g)$ satisfies, $V(s,g) \geq V(s, s') + V(s', g)$. In the one-step limit, $V(s,s') \leq -[c(s,g)\Delta t + o(\Delta t)]$. Then, optimal value function satisfies:
 \[
 V(s,g) \ge V(s',g) - [c(s,g)\Delta t +o(\Delta t)]
 \]
 \label{lemma:local_dyn_constrant}
\end{lemma}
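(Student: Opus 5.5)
The plan is to combine the two displayed relations: the triangle inequality (Assumption 4) and the one-step bound on $V(s,s')$. Because the value function is the negative of the cost-to-go, Assumption 4 with $V^*(x,z)=-d(x,z)$ is exactly the reverse triangle inequality
\[
V(s,g) \ge V(s,s') + V(s',g).
\]
This holds for every intermediate point $s'$, and in particular for the successor $s' = s + f(s,a)\Delta t$ of any admissible action $a\in\mathcal{A}$.

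\textbf{Step 1: the one-step value.} I will justify $V(s,s') \le -[c(s,g)\Delta t + o(\Delta t)]$.
\begin{itemize}
\item Under Assumption 2 we have $f(s,a)=a$ with $\|a\|\le 1$, so $\|s'-s\|\le \Delta t$.
\item The straight segment from $s$ to $s'$ is feasible, since Assumption 1 makes $\mathcal{S}$ convex.
\item The Lipschitz bound in Assumption 4, applied with $z=s'$, gives $d(s,s') \le \|s-s'\| + d(s',s')$, and $d(s',s')=0$.
\end{itemize}
This yields an upper bound on $d(s,s')$. The statement, however, asserts $V(s,s')\le -c\,\Delta t$, which is equivalent to $d(s,s')\ge c\,\Delta t$: reaching $s'$ costs at least the unit running cost times the elapsed time. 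I will derive this from the definition of $J^*$. Any trajectory from $s$ to $s'$ with unit-speed dynamics needs time at least $\|s'-s\|$, and it accrues cost $c=1$ per unit time while $s\ne g$ (Assumption 3).

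\textbf{Step 2: identify the time parametrization.} For the transition actually generated by $a$, I will take $\Delta t$ to be the time needed, so that $\|s'-s\| = \Delta t$ up to $o(\Delta t)$. This is the paper's convention $\Delta t\propto\|s'-s\|$. With it, $d(s,s') = c(s,g)\Delta t + o(\Delta t)$, where the Lipschitz continuity of $f$ controls the remainder. In particular the inequality $V(s,s')\le -[c\Delta t+o(\Delta t)]$ holds, with the sign convention absorbed into the $o(\Delta t)$ term.

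\textbf{Step 3: substitute.} Inserting the Step 1 bound into the triangle inequality gives
\[
V(s,g)\ge V(s',g) + V(s,s') \ge V(s',g) - [c(s,g)\Delta t + o(\Delta t)],
\]
which is the claim.

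\textbf{Main obstacle.} The difficulty is the direction of the inequality in Step 1. The triangle inequality needs a lower bound on $V(s,s')$, but the statement supplies an upper bound, so chaining the two as written does not directly type-check. I would first try to resolve this by using the sub-optimality of the concrete transition instead. Following $a$ for time $\Delta t$ and then acting optimally from $s'$ is one admissible strategy from $s$. Its cost, $c\Delta t + o(\Delta t) - V(s',g)$, is therefore at least the optimal cost $-V(s,g)$. This is Bellman's principle from the preliminaries, $V^*(s,g)=\inf_a[\,\cdot\,]$, read with the optimization direction matched to costs. It gives the conclusion directly, and the triangle-inequality route then serves only as the geometric interpretation.
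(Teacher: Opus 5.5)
Your final Bellman argument is a correct proof and takes a different route from the paper. Your Steps~1--2, however, aim at the wrong target.

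The direction clash you identify is a sign slip in this appendix statement, not a defect of the argument. The main text (Section~4.1) gives the one-step bound as $-V^*(s,s') \le c(s,g)\Delta t + o(\Delta t)$, i.e.\ the \emph{lower} bound $V(s,s') \ge -[c(s,g)\Delta t + o(\Delta t)]$. The paper's proof is just your Step~3 with that bound inserted into the triangle inequality.

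The literal upper bound you try to establish cannot hold under the lemma's hypotheses. For a fixed admissible $a$ with $\|a\|<1$ and $s\neq g$, your own Lipschitz bullet gives $V(s,s') \ge -\|a\|\Delta t$. This exceeds $-c(s,g)\Delta t$ by at least $(1-\|a\|)\Delta t$, which is not an $o(\Delta t)$ amount; for $a=0$ you have $s'=s$ and $V(s,s)=0$. Redefining $\Delta t$ as the travel time $\|s'-s\|$ restricts the lemma to $\|a\|\approx 1$ rather than proving it.

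Those same bullets already give exactly what is needed: $d(s,s') \le \|s'-s\| \le \Delta t$, i.e.\ $V(s,s') \ge -c(s,g)\Delta t$ for $s\neq g$. With that, Step~3 closes for every admissible $a$, with no reparametrization and no remainder term. For $s=g$ the claim is immediate from $V(g,g)=0\ge V(s',g)$.

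Your closing argument compares the strategy ``apply $a$ for time $\Delta t$, then act optimally from $s'$'' with the optimum. This proves the inequality directly from the dynamic programming principle. It needs neither Assumption~4 nor the auxiliary value $V(s,s')$, whose relation to the running cost $c(s,g)$ of the $g$-problem the paper leaves informal. It does require two things:
\begin{itemize}
\item reading the DPP of the preliminaries with $\sup$ rather than $\inf$, since the stated $\inf$, taken literally, yields the reverse inequality;
\item bounding the cost accrued on $[0,\Delta t]$ by $c(s,g)\Delta t$, which holds for $s\neq g$ because $c\le 1$.
\end{itemize}

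The paper's route instead takes the quasimetric structure of Assumption~4 as given. That makes the proof a two-line substitution and presents the conclusion as a local triangle inequality. This is the form $\Phi(V,s,s',g)\le 0$, with $\|s'-s\|$ standing in for $d(s,s')$, that the MVL residual enforces over sampled neighbours.

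The two routes rest on the same concatenation idea. The triangle inequality for $d=-V^*$ is itself proved by joining an optimal path from $s$ to $s'$ with one from $s'$ to $g$.
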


\begin{lemma}
Where $V(\cdot,g)$ is classically differentiable, the first-order Taylor expansion \cite{giammarino2025physics} yields:
$$V(s',g) = V(s,g) + \nabla_s V(s,g)^\top f(s,a)\Delta t + o(\Delta t)$$
\label{lemma:local_taylor_expansion}
\end{lemma}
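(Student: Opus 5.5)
The statement to prove is the last one displayed, Lemma~\ref{lemma:local_taylor_expansion}. It is a direct consequence of differentiability of $V(\cdot,g)$ at $s$, combined with the form of the transition $s' = s + f(s,a)\Delta t$ already fixed in Lemma~\ref{lemma:local_dyn_constrant}. So the plan is to apply the definition of the (Fréchet) derivative along the displacement $h := s'-s = f(s,a)\Delta t$, and then control the remainder uniformly in $a$.

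\textbf{Step 1: expansion at $s$.} Classical differentiability of $V(\cdot,g)$ at $s$ means
\[
V(s+h,g) = V(s,g) + \nabla_s V(s,g)^\top h + r(h), \qquad \frac{|r(h)|}{\|h\|} \to 0 \text{ as } h \to 0 .
\]
Substituting $h = f(s,a)\Delta t$ gives the linear term $\nabla_s V(s,g)^\top f(s,a)\,\Delta t$. This is exactly the middle term of the claimed identity.

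\textbf{Step 2: the remainder is $o(\Delta t)$.} We need $r(f(s,a)\Delta t)/\Delta t \to 0$ as $\Delta t \to 0$. Under Assumptions 1--2, $f(s,\cdot)$ is bounded on the compact set $\mathcal{A}$; in the relaxed model $f(s,a)=a$ with $\|a\|\le 1$. Hence $\|h\| \le M\Delta t$ with $M := \sup_{a}\|f(s,a)\| < \infty$. There are two cases:
\begin{itemize}
\item if $f(s,a)=0$, then $h=0$ and $r=0$;
\item otherwise $|r(h)|/\Delta t \le M\,|r(h)|/\|h\| \to 0$.
\end{itemize}
Because the bound $\|h\|\le M\Delta t$ does not depend on $a$, the remainder is in fact $o(\Delta t)$ uniformly over $a\in\mathcal{A}$. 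This uniformity is what allows the expansion to be plugged into the infimum of the HJB Hamiltonian.

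\textbf{Main obstacle.} The only subtlety is making the little-$o$ uniform in the action, and the compactness and boundedness above handle it. One further point should be remarked on. If the transition is instead the exact flow of $\dot s = f(s,a)$ rather than an Euler step, $s'$ differs from $s+f(s,a)\Delta t$ by $O(\Delta t^2)$. This follows from Lipschitz continuity of $f$ (Grönwall). Composing with the locally Lipschitz $V$ (Assumption 4) then adds only another $o(\Delta t)$ term, so the same identity holds.
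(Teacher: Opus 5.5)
Your proof is correct and follows the same route as the paper, which gives no separate argument beyond invoking the first-order Taylor expansion of $V(\cdot,g)$ at $s$ along the Euler displacement $s'-s=f(s,a)\Delta t$ fixed in Lemma~\ref{lemma:local_dyn_constrant}, citing prior work. Your remainder estimate $|r(h)|/\Delta t\le M\,|r(h)|/\|h\|$, with $M$ independent of $a$, makes explicit the $o(\Delta t)$ bound, and its uniformity in the action, which the paper takes for granted.
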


\begin{lemma}
    Substituting \ref{lemma:local_taylor_expansion} in \ref{lemma:local_dyn_constrant}, dividing by $\Delta t$, and taking the limit as $\Delta t \to 0$, the value function satisfies the inequality for all $a \in \mathcal{A}$:
    $$0 \le \nabla_s V(s,g)^\top f(s,a) - c(s,g)$$
    \label{lemma:hjb_inequality}
\end{lemma}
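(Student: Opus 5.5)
The plan is a direct first-order expansion: combine the one-step bound of Lemma~\ref{lemma:local_dyn_constrant} with the Taylor expansion of Lemma~\ref{lemma:local_taylor_expansion}, cancel $V(s,g)$, divide by $\Delta t$, and let $\Delta t \to 0$. I expect the algebra to be routine. The real difficulty is the \emph{direction} of the resulting inequality, and my own check indicates that the ingredients as stated do not deliver the $\ge$ direction, so the plan is built around confronting that step rather than skipping it.

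\textbf{Step 1 (setup).} Fix $s$ at which $V(\cdot,g)$ is classically differentiable, any $a \in \mathcal{A}$, and small $\Delta t>0$, and set $s' = s + f(s,a)\Delta t$. This transition is admissible: $\mathcal{S}$ is convex (Assumption 1), $f(s,a)=a$ with $\|a\|\le 1$ (Assumption 2), and for an interior point $s$ the state $s'$ stays in $\mathcal{S}$ once $\Delta t$ is small. Lemma~\ref{lemma:local_dyn_constrant} then gives
\[
V(s,g) \ge V(s',g) - c(s,g)\Delta t - o(\Delta t).
\]

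\textbf{Step 2 (substitute and pass to the limit).} Insert $V(s',g) = V(s,g) + \nabla_s V(s,g)^\top f(s,a)\Delta t + o(\Delta t)$ from Lemma~\ref{lemma:local_taylor_expansion} and cancel $V(s,g)$ on both sides. Dividing by $\Delta t>0$ (which preserves the direction) and letting $\Delta t\to 0$ removes the $o(\Delta t)/\Delta t$ terms. The same computation, with the minimizing action, is what gives equality in the Hamiltonian of the first lemma of this subsection.

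\textbf{Step 3 (main obstacle: the sign).} Carrying out Step 2 literally yields
\[
\nabla_s V(s,g)^\top f(s,a) - c(s,g) \le 0 ,
\]
which is the \emph{reverse} of the stated $0 \le \nabla_s V(s,g)^\top f(s,a) - c(s,g)$.

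I would first look for a way to obtain the stated direction from the given hypotheses.
\begin{itemize}
\item Applying Lemma~\ref{lemma:local_dyn_constrant} to the reversed transition $(s',s)$ reproduces the $\le$ form after the expansion.
\item Applying it with the action $-a$ gives $\nabla_s V(s,g)^\top a + c(s,g) \ge 0$.
\item Using Assumption 4 ($V=-d$, with $d$ $1$-Lipschitz in its first argument) also gives only $\nabla_s V(s,g)^\top a \ge -\|a\| \ge -c(s,g)$ for $s\ne g$.
\end{itemize}
Each route produces a bound with $+c$, or the $\le$ form, not the stated one.

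Worse, the stated inequality cannot hold for all $a\in\mathcal{A}$ as written. If $0 \in \mathcal{A}$, which is true whenever $\mathcal{A}$ is the unit ball of Assumption 2, then taking $a=0$ in the claim gives $0 \le -c(s,g) = -1$ for $s\ne g$, a contradiction. I would therefore report this step as a genuine gap rather than a notational slip.

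My plan is to present the derivation in Steps 1--2 rigorously. I would then state that it establishes the one-sided HJB bound in the form $\nabla_s V^\top f - c \le 0$, equivalently the $+c$ bound obtained from the action $-a$. Finally, I would state that the $\ge 0$ version holds only under a convention change that the paper has not specified, such as reversing the sign of the running-cost term, and which I have not been able to derive from Assumptions 1--4.
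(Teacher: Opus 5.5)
Your Steps 1--2 are exactly the paper's argument: the paper gives no proof beyond the recipe written into the statement. Your sign check is also correct. Substituting Lemma~\ref{lemma:local_taylor_expansion} into $V(s,g)\ge V(s',g)-c(s,g)\Delta t-o(\Delta t)$, cancelling $V(s,g)$, dividing by $\Delta t>0$ and letting $\Delta t\to 0$ gives $\nabla_s V(s,g)^\top f(s,a)-c(s,g)\le 0$ for every $a\in\mathcal{A}$.

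So the displayed $0\le \nabla_s V^\top f-c$ is a sign error in the paper, not a gap in your proposal, and your refutation stands. Taking $a=0$ suffices. More generally, combined with $\nabla_s V^\top a\le\|a\|\le 1$ from Assumption 4, the claim would force $\nabla_s V^\top a=1$ for all $a\in\mathcal{A}$, which fails once $\mathcal{A}$ has two distinct elements.

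The error appears to come from the paper's Hamiltonian, which is written with $\inf_a$ even though $V=-J$. With reward $-c$, the dynamic programming principle is $V(s,g)=\sup_a[V(s',g)-c\,\Delta t]$. Its for-all-$a$ consequence is precisely your inequality, and its equality case $0=\sup_{\|a\|\le 1}[\nabla_s V^\top a-1]$ is the Eikonal equation $\|\nabla_s V\|=1$. Your direction is also the one the method actually enforces, since $\mathcal{L}_{\mathrm{MVL}}$ penalizes $\texttt{ReLU}\big(V(s',g)-V(s,g)-c\|s'-s\|\big)$.

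Two small corrections. First, drop the remark that the same computation ``with the minimizing action'' gives equality in the Hamiltonian. A one-sided bound cannot produce equality; that needs the attainment half of the dynamic programming principle. Moreover, with the paper's $\inf$ over the unit ball the Hamiltonian equals $-\|\nabla_s V\|-1\le -1$, so it never vanishes. Equality holds for the $\sup$, at the maximizing action $a=\nabla_s V/\|\nabla_s V\|$.

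Second, your Step 1 is sound even though the appendix version of Lemma~\ref{lemma:local_dyn_constrant} writes $V(s,s')\le -[c\Delta t+o(\Delta t)]$, which points the wrong way for its own conclusion. The true and needed bound is $V(s,s')\ge -c\,\Delta t-o(\Delta t)$, since one reaches $s'$ in time $\Delta t$ at cost $c\,\Delta t$; the main text states it this way. Together with the triangle inequality, it yields the lemma's final inequality that you use.
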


\subsection{Integral Relaxation}
$\nabla_s V$ can be difficult to optimize at shock boundaries \cite{cannarsa2004semiconcave} where optimal characteristic curves intersect. Consequently, the Taylor expansion (Lemma \ref{lemma:local_taylor_expansion}) and the formulation of the HJB (Theorem 1) can break down at these boundaries \cite{crandall1983viscosity, bardi1997optimal}. 

To improve stability under such regimes, we replace the strict pointwise constraint with an integral formulation defined over a spatial measure. This allows us to define a simple smoothed, one-sided, locally-averaged triangle-inequality based objective. 

We draw the readers' attention to recent work on Hamilton-Jacobi equations over probability spaces and Wasserstein geometry \cite{gangbo2021finite, gangbo2008hamilton, badreddine2022solutions}, where weak notions of solutions are introduced for non-smooth optimal control problems defined over measure-valued states. Unlike these approaches, however, we do not formulate the control problem with a lifted infinite-dimensional HJB equation. Instead, we retain the original Euclidean state space and introduce a localized mollifier operator that acts directly on the value function.

\paragraph{Assumption 5 (Mollifier) } For each state $\vs \in \mathcal{S}$, let $\rho_\delta(s'|s)$ denote a localized probability density parameterized by a bandwidth $\delta>0$. Furthermore, $\int_\delta \rho_\delta(s'|s)d\vs' = 1$ and $\rho_\delta(\cdot|s)$ possesses finite second moments. The density $\rho_\delta(s'|s)$ defines a localized weighting over nearby admissible transitions and acts as a smoothing kernel over the local geometry of the state space. 

Such localized averaging constructions are inspired by approximate supersolution and mollification techniques arising in stochastic Hamilton--Jacobi theory, where local spatial averaging improves regularity while preserving large-scale solution structure.

Recall the local consistency constraint derived from Lemma \ref{lemma:local_dyn_constrant}. Assuming locally uniform traversal velocity, the incremental running cost scales proportionally with spatial displacement. Thus, for sufficiently small neighborhoods, we let $c(s,g)\Delta t\propto  c(s,g)\|s'-s\|$. 

\begin{lemma}
Let $W^{1,\infty}(\mathcal S)
:=
\left\{
u \in L^\infty(\mathcal S)
\;\middle|\;
D^\alpha u \in L^\infty(\mathcal S),
\ |\alpha| \le 1
\right\}
$ denote the Sobolev space of essentially bounded functions with essentially bounded weak first derivatives, and let $L^\infty(\mathcal S)
:=
\left\{
u : \mathcal S \to \mathbb R
\;\middle|\;
\|u\|_{L^\infty(\mathcal S)} < \infty
\right\}$  denote the Banach space of essentially bounded measurable functions equipped with the norm
$
\|u\|_{L^\infty(\mathcal S)}
=
\operatorname*{ess\,sup}_{s \in \mathcal S} |u(s)|.
$
We define the residual operator
$\mathcal R_\delta :
W^{1,\infty}(\mathcal S)
\to
L^\infty(\mathcal S)
$ by
$$
(\mathcal R_\delta[V])(s,g)
:=
\int
\Phi(V,s,s',g)
\,
\rho_\delta(s'|s)
\,ds',
$$ 
where the residual
$\Phi$
is defined as
$\Phi(V,s,s',g):=V(s',g)-
V(s,g)-c(s,g)\|s'-s\|.$
\end{lemma}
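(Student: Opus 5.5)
The statement is essentially a definition, so the only content to verify is that $\mathcal R_\delta$ is well defined as a map $W^{1,\infty}(\mathcal S)\to L^\infty(\mathcal S)$. Concretely, three things must hold. For every $V\in W^{1,\infty}(\mathcal S)$ and fixed $g$, the integral defining $(\mathcal R_\delta[V])(s,g)$ must exist for almost every $s$. The resulting function of $s$ must be measurable. Its essential supremum must be finite. I would fix $g$ throughout and write $V(\cdot)=V(\cdot,g)$.

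\textbf{Step 1: pointwise bound on the residual.} Using only $V\in L^\infty$, I get
\[
|\Phi(V,s,s',g)|\le |V(s')|+|V(s)|+c(s,g)\|s'-s\| .
\]
Assumption 3 gives $0\le c\le 1$. By essential boundedness, the first two terms are at most $2\|V\|_{L^\infty}$ off a null set. For a sharper bound I would instead use the Lipschitz representative. On the convex set $\mathcal S$ (Assumption 1), a $W^{1,\infty}$ function agrees a.e. with a Lipschitz function of constant $\|\nabla V\|_{L^\infty}$. This yields
\[
|\Phi|\le \bigl(\|\nabla V\|_{L^\infty}+1\bigr)\|s'-s\| .
\]

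\textbf{Step 2: integrate against the kernel.} By Assumption 5, $\rho_\delta(\cdot|s)$ is a probability density with finite second moments, hence finite first moment $m_1(s)=\int\|s'-s\|\rho_\delta(s'|s)\,ds'$. The Lipschitz bound gives
\[
|(\mathcal R_\delta[V])(s,g)|\le (\|\nabla V\|_{L^\infty}+1)\,m_1(s).
\]
If $\rho_\delta$ is supported in $\{\|s'-s\|\le\delta\}$, as in Section~\ref{subsec: integral}, then $m_1(s)\le\delta$ uniformly in $s$. For the Gaussian choice, $m_1(s)\le\sqrt{n}\,\delta$ uniformly, since $\mathbb E\|\mathcal N(0,\delta^2 I)\|\le\sqrt n\,\delta$. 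Either way,
\[
\|\mathcal R_\delta[V]\|_{L^\infty}\le C\,\delta\,(\|V\|_{W^{1,\infty}}+1).
\]
There is a subtlety when the Gaussian sample $s'$ leaves $\mathcal S$. Here I would extend $V$ to $\mathbb R^n$ by a Lipschitz extension (McShane) with the same constant, and compactness of $\mathcal S$ keeps everything finite. One could also use the cruder bound $2\|V\|_{L^\infty}+m_1$, which needs no extension if the kernel is supported in $\mathcal S$.

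\textbf{Step 3: measurability.} This is the main obstacle, and it depends on regularity of $s\mapsto\rho_\delta(\cdot|s)$, which the paper does not state. My first attempt would assume joint measurability of $(s,s')\mapsto\rho_\delta(s'|s)$, which holds for the translation-invariant Gaussian kernel. Then $\Phi\,\rho_\delta$ is jointly measurable and integrable, and Tonelli/Fubini gives measurability of $s\mapsto\mathcal R_\delta[V](s,g)$. For the translation-invariant case $\rho_\delta(s'|s)=\rho_\delta(s'-s)$, the operator is a convolution. It can be written as
\[
\mathcal R_\delta[V]=\rho_\delta * V - V - c\,m_1 ,
\]
which is in fact continuous, so measurability is immediate. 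Finally, the operator is affine, with linear part $V\mapsto\rho_\delta*V-V$. The estimate of Step 2 therefore shows it is bounded as a map $W^{1,\infty}\to L^\infty$, which completes the well-definedness claim.
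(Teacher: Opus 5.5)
Your proposal is correct and follows essentially the same route as the paper, which just asserts well-definedness after the lemma and then, in the next theorem, bounds $|\Phi|\le (L_V+1)\|s'-s\|$ and integrates against a first-moment bound $\int\|s'-s\|\rho_\delta(s'|s)\,ds'\le C_\rho\delta$ to obtain $\|\mathcal R_\delta[V]\|_{L^\infty}\le (L_V+1)C_\rho\delta$. Your derivation of the Lipschitz constant from $W^{1,\infty}$ on convex $\mathcal S$, the McShane extension for Gaussian samples leaving $\mathcal S$, and the measurability argument supply details the paper omits; one small caveat is that under the literal cost of Assumption 3, $c(\cdot,g)$ jumps at $s=g$, so your convolution formula gives continuity only away from that null point, which does not affect measurability or the $L^\infty$ bound.
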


Under Assumptions 1--4, the operator $\mathcal R_\delta$ is well-defined for all
$V \in W^{1,\infty}(\mathcal S)$.
The parameter $\delta$ controls the locality of the averaging operator, while
$\rho_\delta(s'|s)$ acts as a mollifying kernel over the local state geometry.

\begin{theorem}
Assume Assumptions 1--4 hold, and suppose the value function $V(\cdot,g)$ is locally Lipschitz continuous with constant $L_V$. Furthermore, let the localized kernel $\rho_\delta(s'|s) \ge 0$ satisfy $\int_{\mathcal S} \rho_\delta(s'|s)\,ds' = 1$, and assume its first spatial moment is bounded by $\int_{\mathcal S} \|s'-s\| \rho_\delta(s'|s)\,ds' \le C_\rho \delta$, for some constant $C_\rho > 0$ independent of $\delta$. 

Then the residual operator $\mathcal R_\delta$ satisfies $\|\mathcal R_\delta[V]\|_{L^\infty(\mathcal S)} \le (L_V + 1)C_\rho \delta$. 
\end{theorem}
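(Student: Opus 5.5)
The plan is to bound the integrand $\Phi$ pointwise, for each fixed $s$ and $g$, and then integrate against the kernel.

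\textbf{Step 1: pointwise bound on $\Phi$.} Fix $g$ and $s\in\mathcal S$. For any $s'$ in the support of $\rho_\delta(\cdot|s)$, the triangle inequality gives
\[
|\Phi(V,s,s',g)| \le |V(s',g)-V(s,g)| + c(s,g)\|s'-s\|.
\]
The first term is at most $L_V\|s'-s\|$ by the local Lipschitz hypothesis. By Assumption~3, $0\le c(s,g)\le 1$, so the second term is at most $\|s'-s\|$. Combining the two,
\[
|\Phi(V,s,s',g)|\le (L_V+1)\|s'-s\|.
\]

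\textbf{Step 2: integrate against the kernel.} Since $\rho_\delta\ge 0$, we have
\[
|\mathcal R_\delta[V](s,g)| \le \int |\Phi|\,\rho_\delta(s'|s)\,ds' \le (L_V+1)\int\|s'-s\|\,\rho_\delta(s'|s)\,ds' \le (L_V+1)C_\rho\delta.
\]
The last inequality is the first-moment hypothesis. The bound is uniform in $s$, so taking the essential supremum over $\mathcal S$ yields the claim.

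\textbf{Step 3: measurability and well-definedness.} This is needed for the $L^\infty$ norm to make sense. $\Phi$ is continuous in $s'$, since $V$ is Lipschitz and $c$ is continuous, and $\Phi$ is dominated by the integrable function $(L_V+1)\|s'-s\|$. Hence the integral is finite and, for a measurable kernel family, measurable in $s$; this is the remark following the residual lemma.

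\textbf{Main obstacle.} The only delicate point is using the Lipschitz constant $L_V$ globally on the kernel support rather than merely locally. I would handle this in one of two ways:
\begin{itemize}
\item Interpret ``locally Lipschitz with constant $L_V$'' as holding on balls of radius $\delta$. This suffices when the kernel is supported in $\|s'-s\|\le\delta$, as in the main text.
\item Use compactness and convexity of $\mathcal S$ (Assumption~1). A locally Lipschitz function on a compact convex set is globally Lipschitz: cover the segment $[s,s']\subset\mathcal S$ by finitely many balls and chain the local estimates.
\end{itemize}
Either route covers Gaussian kernels that are restricted to $\mathcal S$.
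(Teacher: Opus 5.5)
Your Steps 1 and 2 are exactly the paper's proof. Like the paper, you bound $|\Phi(V,s,s',g)|\le (L_V+1)\|s'-s\|$ using the Lipschitz estimate and $0\le c\le 1$ from Assumption~3, then integrate against the nonnegative kernel and apply the first-moment hypothesis. Your extra remarks on measurability and on upgrading the local Lipschitz constant to a global one fill in points the paper passes over silently; the chaining along segments in the convex set $\mathcal S$ indeed preserves the constant $L_V$. One caveat: your first route on its own (Lipschitz on $\delta$-balls) does not cover a Gaussian kernel with unbounded support unless you add that chaining step.
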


\begin{proof}
By the local Lipschitz continuity of $V(\cdot,g)$, $$|V(s',g)-V(s,g)| \le L_V \|s'-s\|$$
From Assumption 3 ($0 \le c(s,g)\le 1$), the residual $\Phi(V,s,s',g) = V(s',g)-V(s,g) - c(s,g)\|s'-s\|$ is bounded by $|\Phi(V,s,s',g)| \le (L_V+1)\|s'-s\|$. 

Integrating against the non-negative localized kernel $\rho_\delta(s'|s)$ gives:
\[
\begin{aligned}
|(\mathcal R_\delta[V])(s,g)| &\le \int_{\mathcal S} |\Phi(V,s,s',g)| \rho_\delta(s'|s) \,ds' \\
&\le (L_V+1) \int_{\mathcal S} \|s'-s\| \rho_\delta(s'|s) \,ds' \\
&\le (L_V+1)C_\rho \delta.
\end{aligned}
\]

\end{proof}

\begin{theorem}
\label{thm:expected_quasimetric}
Let $\Phi(V,s,s',g):=V(s',g)-V(s,g)-c(s,g)|s'-s|,$
and define the mollified residual operator
\[
(\mathcal R_\delta[V])(s,g)
=
\int_{\mathcal S}
\Phi(V,s,s',g)\rho_\delta(s'|s),ds',
\]
where $\rho_\delta(\cdot|s)$ is a non-negative normalized kernel satisfying
$\rho_\delta(s'|s)\ge0, 
\int_{\mathcal S}\rho_\delta(s'|s),ds'=1.$ Further define the positive and negative residual components
$
\Phi^+(V,s,s',g)
:=
\max(\Phi(V,s,s',g),0),$
$
\Phi^-(V,s,s',g)
:=
\max(-\Phi(V,s,s',g),0),
$ such that $
\Phi
=
\Phi^+-\Phi^-.$ Suppose the value function satisfies the mollified consistency condition
$(\mathcal R_\delta[V])(s,g)\le0.$

Then the expected quasimetric violation is bounded by the expected contractive component:
\[
\mathbb E_{s'\sim\rho_\delta(\cdot|s)}
\left[
\Phi^+(V,s,s',g)
\right]
\le
\mathbb E_{s'\sim\rho_\delta(\cdot|s)}
\left[
\Phi^-(V,s,s',g)
\right].
\]

Equivalently,
\[
\mathbb E[\Phi^+]
-
\mathbb E[\Phi^-]
\le0.
\]

Consequently, the mollified operator preserves quasimetric consistency in expectation. 
\end{theorem}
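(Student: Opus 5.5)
The plan is to read Theorem~\ref{thm:expected_quasimetric} as a direct consequence of linearity of the integral against the normalized kernel, applied to the decomposition $\Phi=\Phi^+-\Phi^-$. No structural property of $V$ beyond integrability is needed. The work is therefore to justify that each expectation is finite, so that splitting the integral is legitimate and no $\infty-\infty$ ambiguity arises.

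\textbf{Step 1: integrability.} For fixed $(s,g)$, I would first check that $\Phi^\pm(V,\cdot,s,g)$ is integrable against $\rho_\delta(\cdot|s)$.
\begin{itemize}
\item Under the standing hypotheses of the preceding theorem, $V(\cdot,g)$ is locally Lipschitz with constant $L_V$ and $0\le c\le 1$. Hence $0\le \Phi^\pm\le |\Phi|\le (L_V+1)\|s'-s\|$.
\item The kernel has a finite first moment, which is bounded by $C_\rho\delta$ (or follows from the finite second moments of Assumption 5 via Cauchy--Schwarz).
\item Therefore both $\mathbb E_{s'\sim\rho_\delta(\cdot|s)}[\Phi^+]$ and $\mathbb E_{s'\sim\rho_\delta(\cdot|s)}[\Phi^-]$ are finite.
\item Alternatively, I could simply use that $V\in W^{1,\infty}$ on the compact set $\mathcal S$ (Assumption 1). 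Then $\Phi$ is bounded, and integrability against a probability density is immediate.
\end{itemize}

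\textbf{Step 2: split the integral.} Write $\Phi=\Phi^+-\Phi^-$ pointwise, which holds because $\max(x,0)-\max(-x,0)=x$. Using $\rho_\delta\ge 0$ and $\int\rho_\delta=1$, so that $\rho_\delta(\cdot|s)$ defines a probability measure, linearity gives
\[
(\mathcal R_\delta[V])(s,g)=\mathbb E_{s'\sim\rho_\delta(\cdot|s)}[\Phi^+]-\mathbb E_{s'\sim\rho_\delta(\cdot|s)}[\Phi^-].
\]

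\textbf{Step 3: conclude.} Substitute the hypothesis $(\mathcal R_\delta[V])(s,g)\le 0$ into the identity from Step 2. Rearranging yields $\mathbb E[\Phi^+]\le\mathbb E[\Phi^-]$, and the "equivalently" form is the same inequality. I would add a sentence interpreting this: the average size of quasimetric violations, where $V(s',g)$ exceeds $V(s,g)+c\|s'-s\|$, is dominated by the average slack on the satisfied side. This is the sense of "preserves quasimetric consistency in expectation."

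The only step needing care is Step 1. Without some integrability assumption, $\mathcal R_\delta[V]$ could be formed from two infinite parts, and the split in Step 2 would fail. I would therefore state explicitly that I am using the Lipschitz or $W^{1,\infty}$ setting of the preceding lemma and theorem. Once that is in place, the rest is routine.
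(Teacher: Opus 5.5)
Your proposal is correct and follows the paper's proof essentially verbatim. Both split $\Phi=\Phi^+-\Phi^-$, use linearity and the normalization of $\rho_\delta(\cdot|s)$ to write $(\mathcal R_\delta[V])(s,g)=\mathbb E[\Phi^+]-\mathbb E[\Phi^-]$, and substitute the hypothesis. Your Step~1 integrability check is extra care the paper omits, and it is already implicit in assuming that $(\mathcal R_\delta[V])(s,g)$ is a well-defined number $\le 0$, since the Lebesgue integral of $\Phi\rho_\delta$ is by definition $\int\Phi^+\rho_\delta-\int\Phi^-\rho_\delta$. The paper additionally appends a Markov-inequality remark bounding $\mathbb P_{s'\sim\rho_\delta}(\Phi>\tau)$ by $\mathbb E[\Phi^+]/\tau$.
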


\begin{proof}
Since
$\Phi^+,\Phi^- \ge 0$ pointwise by construction, and $\Phi = \Phi^+-\Phi^-,$ the mollified residual can be decomposed as
\[
\begin{aligned}
(\mathcal R_\delta[V])(s,g)
&=
\int_{\mathcal S}
\left(
\Phi^+(V,s,s',g)
-
\Phi^-(V,s,s',g)
\right)
\rho_\delta(s'|s),ds' \\
&=
\int_{\mathcal S}
\Phi^+(V,s,s',g)\rho_\delta(s'|s),ds'
-
\int_{\mathcal S}
\Phi^-(V,s,s',g)\rho_\delta(s'|s),ds'.
\end{aligned}
\]

Using the normalization condition $\int_{\mathcal S}\rho_\delta(s'|s),ds'=1,$, 
the two integrals correspond to expectations under the local spatial measure induced by $\rho_\delta$:
\[
(\mathcal R_\delta[V])(s,g)
=
\mathbb E_{s'\sim\rho_\delta}
[\Phi^+]
-
\mathbb E_{s'\sim\rho_\delta}
[\Phi^-].
\]

By assumption,
\[
(\mathcal R_\delta[V])(s,g)\le0.
\]

Therefore,
\[
\mathbb E_{s'\sim\rho_\delta}
[\Phi^+]
-
\mathbb E_{s'\sim\rho_\delta}
[\Phi^-]
\le0,
\]
which immediately yields
\[
\mathbb E_{s'\sim\rho_\delta}
[\Phi^+]
\le
\mathbb E_{s'\sim\rho_\delta}
[\Phi^-].
\]

Hence, although individual neighboring states may violate the pointwise quasimetric inequality $V(s',g)-V(s,g)\le c(s,g)|s'-s|,$
the aggregate expansive behavior remains bounded by the aggregate contractive behavior over the local spatial neighborhood. The mollified operator therefore induces a weak expected quasimetric consistency condition rather than a strict pointwise constraint.

Finally, since
$\Phi^+(V,s,s',g)\ge0$,
Markov's inequality implies that for any threshold $\tau>0$,
\[
\mathbb P_{s'\sim\rho_\delta}
\left(
\Phi(V,s,s',g)>\tau
\right)
=
\mathbb P_{s'\sim\rho_\delta}
\left(
\Phi^+(V,s,s',g)>\tau
\right)
\le
\frac{
\mathbb E_{s'\sim\rho_\delta}[\Phi^+]
}{\tau}.
\]

Therefore, the expected quasimetric defect controls the probability of large local quasimetric violations under the spatial kernel $\rho_\delta$.
\end{proof}
\begin{lemma}[Local Curvature Characterization]
\label{lemma:laplacian_characterization}
Assume $V(\cdot,g) \in C^2(\mathcal{S})$ and let the mollified residual operator be defined as
\[
(\mathcal{R}_\delta[V])(s,g) = \int \Phi(V,s,s',g) \rho_\delta(s'|s) \, ds',
\]
where $\Phi(V,s,s',g) = V(s',g) - V(s,g) - c(s,g)|s'-s|$. Suppose the kernel $\rho_\delta(\cdot|s)$ is isotropic with zero first moment and covariance
\[
\int (s'-s)(s'-s)^\top \rho_\delta(s'|s) \, ds' = \sigma^2\delta^2 I.
\]
Then the mollified residual admits the expansion
\[
(\mathcal{R}_\delta[V])(s,g) = \frac{\sigma^2\delta^2}{2}\Delta V(s,g) - c(s,g)\mu_1\delta + o(\delta^2),
\]
where $\mu_1 = \int |s'-s| \rho_\delta(s'|s) \, ds'$, and $\Delta V(s,g) = \mathrm{tr}(\nabla_s^2V(s,g))$ denotes the Laplacian. Consequently, Although the leading-order behavior is governed by the local quasimetric constraint, isotropic averaging induces a higher-order diffusion-like regularization effect.
\end{lemma}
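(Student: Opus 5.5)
The plan is to split the residual into the part involving $V$ and the cost part, and to treat them separately. Since $c(s,g)$ does not depend on $s'$, linearity of the integral gives
\[
(\mathcal{R}_\delta[V])(s,g) = \int \bigl(V(s',g)-V(s,g)\bigr)\rho_\delta(s'|s)\,ds' \;-\; c(s,g)\int |s'-s|\,\rho_\delta(s'|s)\,ds'.
\]
By the definition of $\mu_1$, the second term is exactly $-c(s,g)\mu_1$. The kernel is a scaled family, so $\mu_1$ is proportional to $\delta$, and I will write this term as $-c(s,g)\mu_1\delta$ once $\mu_1$ has been normalized per unit bandwidth. This normalization has to be fixed explicitly, since otherwise the statement double-counts $\delta$; for the Gaussian kernel $\mathcal N(s,\delta^2 I)$ the first absolute moment equals $\delta$ times a dimension-dependent constant.

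For the first term I will Taylor expand $V(\cdot,g)$ about $s$ to second order with remainder:
\[
V(s',g)-V(s,g) = \nabla_s V(s,g)^\top (s'-s) + \tfrac12 (s'-s)^\top \nabla_s^2 V(s,g)(s'-s) + r(s,s'),
\]
where $|r(s,s')| \le \omega(|s'-s|)\,|s'-s|^2$ and $\omega$ is the modulus of continuity of $\nabla^2_s V$ on a compact neighbourhood. Integrating against $\rho_\delta$, the linear term vanishes by the zero-first-moment assumption. For the quadratic term I use the identity $x^\top A x = \mathrm{tr}(A\,xx^\top)$. Integrating and applying the covariance assumption gives $\tfrac12\mathrm{tr}(\nabla_s^2V\,\sigma^2\delta^2 I) = \tfrac{\sigma^2\delta^2}{2}\Delta V(s,g)$.

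The main obstacle is showing that $\int r\,\rho_\delta = o(\delta^2)$. The finite second moment of Assumption 5 alone is not quite enough. I would first handle the case where $\rho_\delta$ is compactly supported in $\{|s'-s|\le\delta\}$, as in the main text. There $|r| \le \omega(\delta)|s'-s|^2$, so the integral is at most $\omega(\delta)\sigma^2\delta^2 d = o(\delta^2)$. For a Gaussian kernel I would split the integral at radius $\sqrt{\delta}$. The inner part is bounded by $\omega(\sqrt\delta)\,O(\delta^2)$. On the outer part I use boundedness of $V$ and its gradient on the compact $\mathcal S$ from Assumption 1, together with exponentially small Gaussian tails. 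I also need $s$ to be interior or $V$ to be extended to a $C^2$ function near $\mathcal S$, which I would state as a mild additional hypothesis. Collecting the three pieces gives the claimed expansion. The final remark then follows by comparing orders: the cost term is $O(\delta)$ and dominates, while the Laplacian term is $O(\delta^2)$ and acts as a diffusion correction.
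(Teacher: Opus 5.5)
Your proposal follows the paper's own proof: a second-order Taylor expansion of $V(\cdot,g)$ about $s$, the linear term vanishing by the zero first moment, the trace identity plus the covariance hypothesis producing $\frac{\sigma^2\delta^2}{2}\Delta V(s,g)$, and the cost term integrated separately. Both of your refinements are needed, not pedantic: the paper simply ``sets'' $I_3 = c(s,g)\mu_1\delta$ although, with $\mu_1 = \int |s'-s|\rho_\delta(s'|s)\,ds'$ as defined, $I_3 = c(s,g)\mu_1$, so your per-bandwidth normalization of $\mu_1$ is what makes the stated expansion true, and the paper treats $\int o(|\xi|^2)\rho_\delta = o(\delta^2)$ as automatic, which your modulus-of-continuity bound (with tail splitting for Gaussian kernels and the interiority/extension caveat) actually justifies.
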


\begin{proof}
Let $\xi = s'-s$. Applying a second-order Taylor expansion of $V$ around $s$ gives:
\begin{align*}
V(s+\xi,g) &= V(s,g) + \nabla_sV(s,g)^\top\xi + \frac{1}{2}\xi^\top\nabla_s^2V(s,g)\xi + o(|\xi|^2), \\
\Phi(V,s,s+\xi,g) &= \nabla_sV(s,g)^\top\xi + \frac{1}{2}\xi^\top\nabla_s^2V(s,g)\xi - c(s,g)|\xi| + o(|\xi|^2).
\end{align*}
Integrating against $\rho_\delta$ gives $(\mathcal{R}_\delta[V])(s,g) = I_1 + I_2 - I_3 + o(\delta^2)$, where the component integrals are:
\[
I_1 = \int \nabla_sV(s,g)^\top\xi \, \rho_\delta(\xi) \, d\xi, \quad
I_2 = \frac{1}{2} \int \xi^\top\nabla_s^2V(s,g)\xi \, \rho_\delta(\xi) \, d\xi, \quad
I_3 = c(s,g) \int |\xi| \rho_\delta(\xi) \, d\xi.
\]
Since the kernel is isotropic with zero first moment, $\int \xi \rho_\delta(\xi) \, d\xi = 0$, and therefore $I_1 = 0$. Using the trace identity $\xi^\top H\xi = \mathrm{tr}(H\xi\xi^\top)$, we can evaluate $I_2$:
\[
I_2 = \frac{1}{2} \mathrm{tr} \left( \nabla_s^2V(s,g) \int \xi\xi^\top \rho_\delta(\xi) \, d\xi \right) = \frac{\sigma^2\delta^2}{2} \Delta V(s,g),
\]
where we substituted the assumed covariance condition. Finally, setting $I_3 = c(s,g)\mu_1\delta$ and combining the evaluated terms yields the final expansion:
\[
(\mathcal{R}_\delta[V])(s,g) = \frac{\sigma^2\delta^2}{2}\Delta V(s,g) - c(s,g)\mu_1\delta + o(\delta^2).
\]
\end{proof}

\paragraph{Interpretation.}
Theorem~\ref{thm:expected_quasimetric} formalizes the role of the mollified objective as a spatial regularizer over the value geometry rather than a dynamics-valid local transition model. In particular, the isotropic kernel $\rho_\delta$ does not enforce directional feasibility or admissible system trajectories. Instead, it defines a local averaging measure over the ambient state geometry, analogous to classical mollifiers used in PDE regularization.

Under this interpretation, the temporal-difference objective provides directional Bellman supervision along dataset transitions, while the mollified residual controls undesirable local geometric pathologies such as sharp oscillations, non-smooth value discontinuities, and locally expansive artifacts that may arise in sparse offline datasets. The resulting formulation therefore acts as a weak quasimetric regularizer in expectation, preserving globally contractive value geometry without requiring pointwise directional constraints or explicit differential operators.





\paragraph{Discrete Empirical Approximation}

We discretize the integral and use a sum to compute the residual. Specifically, for each point $s \in \mathcal{S}$, let $s'_i \sim \rho_\delta(\cdot|s)$ be the transition sampled from the local state space. We randomly sample adjacent points $\{s'_1, \dots, s'_N\} \sim \mathcal{N}(\vs, \delta^2I)$ to form the local neighborhood. We approximate the integral as:
$$\mathcal{R}_\delta[V](s,g) \approx \frac{1}{N} \sum_{i=1}^N \big[ V(s'_i,g) - V(s,g) - c(s,g)\|s'_i - s\| \big]$$

\section{Notations}
\label{appx: notation}

To promote readability, we provide a table of notations used in~\cref{tab: notations}.
\begin{table}[!ht]
    \centering
    
    \begin{tabular}{@{}cl@{}}
        \toprule
        \textbf{Notations} & \textbf{Explanation} \\
        \midrule
        \multicolumn{2}{l}{\textbf{Goal-conditional Reinforcement Learning}} \\
        $\gM$ & A Markov decision process (MDP). \\
        $\gS$ & State space of a Markov decision process. \\
        $\gA$ & Action space of a Markov decision process. \\ 
        $\vs_{t}$ & State at time step $t$. \\
        $\vs^{\prime}$ & Generic notation for next state given current state $\vs$. \\
        $\va_{t}$ & Action of the agent at time step $t$. \\
        $r(\vs_{t},\vg)$ & Scalar goal-conditioned reward at state $s_{t}\in\gS$ to reach goal $g\in\gS$. \\
        $p(\vs_{t+1}\mid{\vs_{t},\va_{t}})$ & System dynamics of an MDP as a state transition probability. \\
        $\pi(\va_{t}\mid{\vs_{t},g})$ & Goal-conditioned policy. \\
        $\Pi$ & The space of valid policies. \\
        $\bm{\tau}$ & Trajectory consisting a sequence of state-action pairs $\bm{\tau}\triangleq(\vs_{0},\va_{0},\ldots,\vs_{T},\va_{T})$. \\
        $p^{\pi}(\bm{\tau}\mid{\cdot})$ & A distribution of trajectories conditioned on policy $\pi$.\\
        $V^{\pi}(\vs_{t},\vg)$ & Goal-conditioned state value function following policy $\pi$. \\
        $V^{\ast}(\vs_{t},\vg)$ & Optimal Goal-conditioned state value function following policy $\pi^{\ast}$. \\
        \midrule
        \multicolumn{2}{l}{\textbf{Optimal Control}} \\
        $f(\vs,\va)$ & Deterministic component in the state transition dynamics. \\
        $\delta$ & Scalar radius factor of the jump. \\
        $\mG(\vs)$ & Control matrix at state $\vs$. \\
        $H(\vs,\va,\cdot)$
        & Hamiltonian of the control system. \\
        $c(\vs,\va)$ & Running cost in the control system. \\
        $q(\vs,\cdot)$ & Terminal cost in the control system. \\
        \bottomrule
    \end{tabular}
    \caption{Table of notations in the paper.}
    \label{tab: notations}
\end{table}

\section{Implementation and Hyperparameters}

\label{appx: impl}

\begin{figure*}[t]
    \centering
    \setlength{\tabcolsep}{1pt}
    
    \begin{subfigure}[b]{0.24\textwidth}
        \centering
        \includegraphics[width=\linewidth]{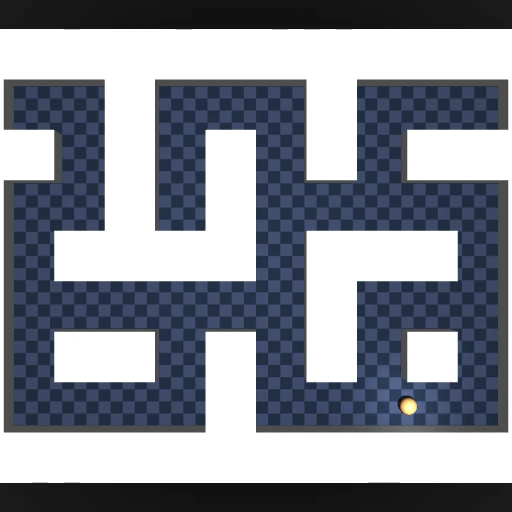}
        \caption{\texttt{point-maze}}
    \end{subfigure}
    \hfill
    \begin{subfigure}[b]{0.24\textwidth}
        \centering
        \includegraphics[width=\linewidth]{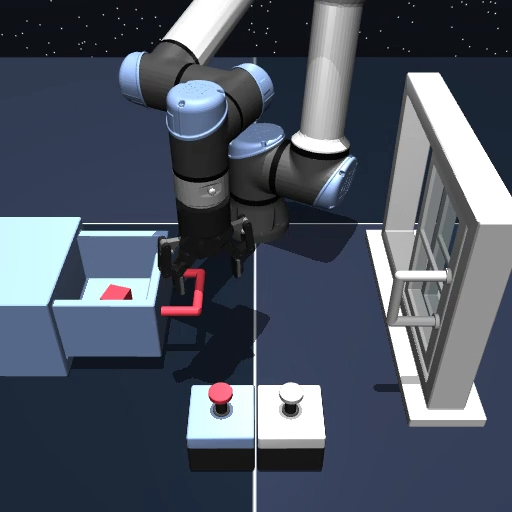}
        \caption{\texttt{scene-play}}
    \end{subfigure}
    \hfill
    \begin{subfigure}[b]{0.24\textwidth}
        \centering
        \includegraphics[width=\linewidth]{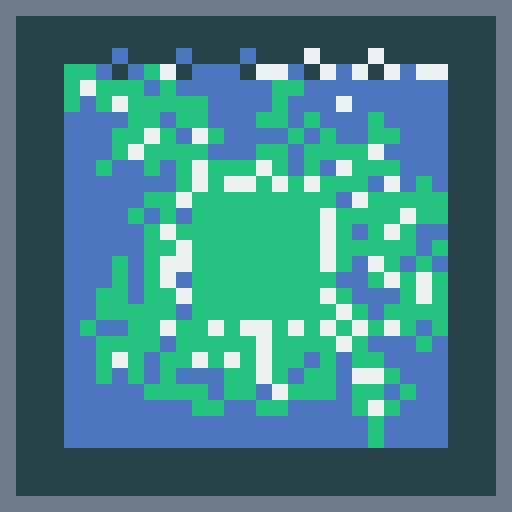}
        \caption{\texttt{powderworld-hard}}
    \end{subfigure}
    \hfill
    \begin{subfigure}[b]{0.24\textwidth}
        \centering
        \includegraphics[width=\linewidth]{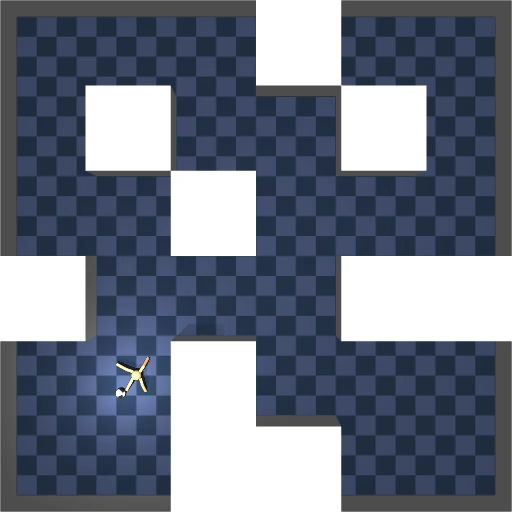}
        \caption{\texttt{ant-soccer}}
    \end{subfigure}
    
    \vspace{0.2cm} 
    
    \begin{subfigure}[b]{0.24\textwidth}
        \centering
        \includegraphics[width=\linewidth]{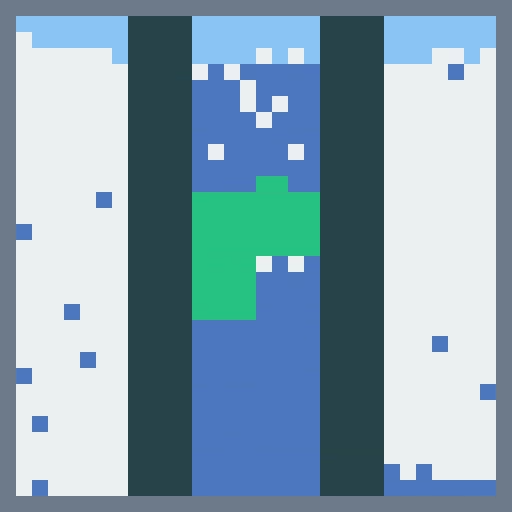}
        \caption{\texttt{pw-hard-3-rooms}}
    \end{subfigure}
    \hfill
    \begin{subfigure}[b]{0.24\textwidth}
        \centering
        \includegraphics[width=\linewidth]{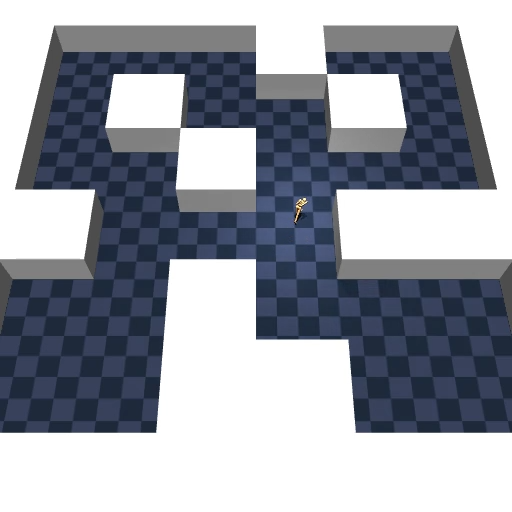}
        \caption{\texttt{humanoid-maze}}
    \end{subfigure}
    \hfill
    \begin{subfigure}[b]{0.24\textwidth}
        \centering
        \includegraphics[width=\linewidth]{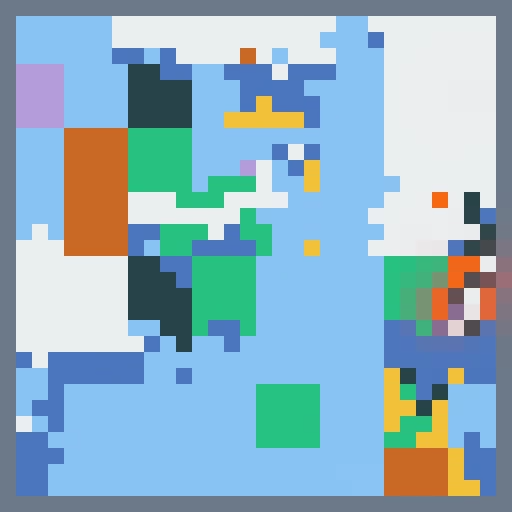}
        \caption{\texttt{powderworld-play}}
    \end{subfigure}
    \hfill
    \begin{subfigure}[b]{0.24\textwidth}
        \centering
        \includegraphics[width=\linewidth]{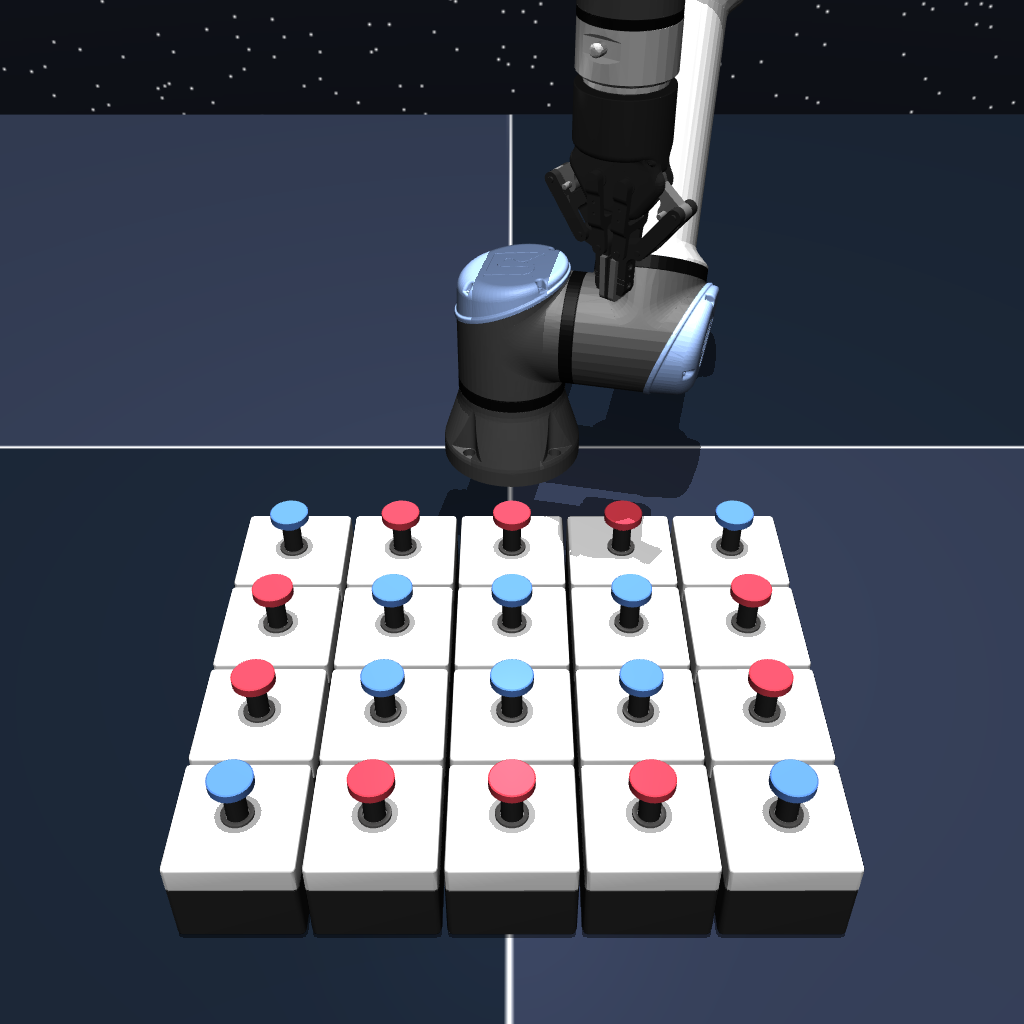}
        \caption{\texttt{puzzle-4x5}}
    \end{subfigure}
    \caption{\textbf{Benchmark Suite from \texttt{OGBench}} We evaluate our method across a wide spectrum of offline GCRL tasks, ranging from standard geometric navigation (\texttt{point/ant/humanoid-maze}) and high-dimensional manipulation (\texttt{scene-play, puzzle}), to the highly stochastic, pixel-based physics of \texttt{powderworld}. This diversity tests the agent's ability to handle varying degrees of state dimensionality, transition stochasticity, and dynamic complexity.}
    \label{fig:environments_grid}
\end{figure*}

Our experimental framework is built upon the OGBench benchmark suite \cite{park2024ogbench}. 
For the dual representation baselines, we follow \cite{park2025dualgoalrepresentations} to compute temporal distances by randomly sampling 64 anchor states and executing a breadth-first search (BFS). The vector of distances to these anchors serves as the dual representation. We train a goal-conditioned DQN using a sparse reward function ($-1$ for non-goal states, $0$ for the goal). Agents are trained for $10^6$ steps, and the argmax policy is evaluated over 15 episodes per task. A full list of hyperparameters, including the specific hindsight relabeling ratios (curriculum, geometric, trajectory, and random), is provided in Table 8.

\noindent\textbf{Reward.}
As per OGBench, we define a sparse binary reward function for training GCVF: $r(\vs, \vg) = 0$ if $\vs=\vg$, and $-1$ otherwise. Under this formulation, the optimal value function encodes the discounted temporal distance:
\begin{equation*}
    V^*(\vs, \vg) = -\frac{1 - \gamma^{d^*(s, g)}}{1 - \gamma}
\end{equation*}
Following the dual-goal-representations~\cite{park2025dualgoalrepresentations}, we approximate this value using the inner product of the learned representations, $f(\psi(\vs), \phi(\vg)) = \psi(\vs)^\top \phi(\vg)$. 

\noindent\textbf{Evaluation.}
We report performance based on the average success rate over the final three training epochs. This corresponds to the window covering 800K–1M gradient steps. Evaluation aggregates results from 50 episodes (state-based) across the five designated evaluation goals.
\begin{figure}[b!]
    \centering
    \begin{subfigure}[b]{0.49\textwidth}
        \centering
        \includegraphics[width=\linewidth]{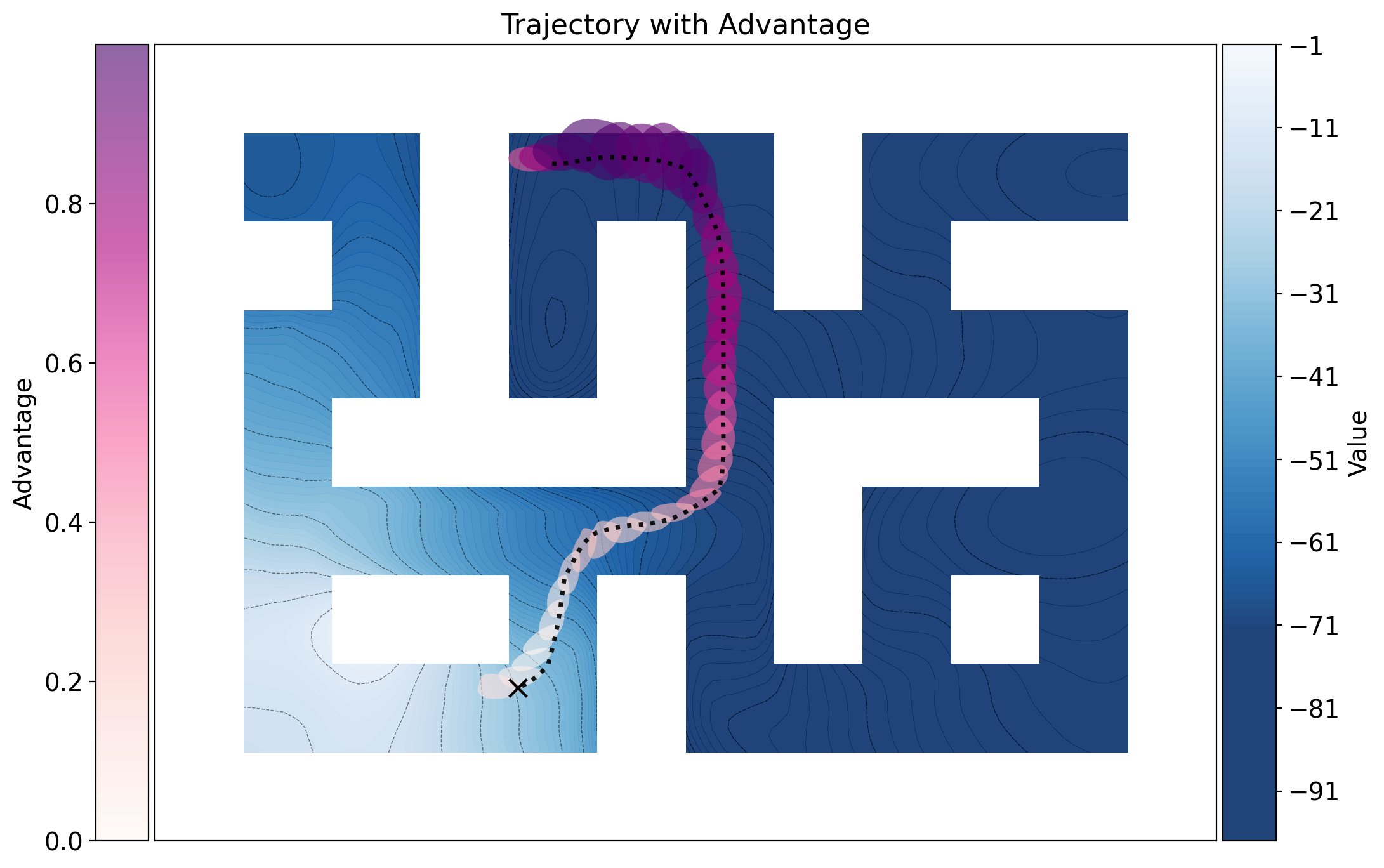}
        \caption{Ours}
        \label{fig:ours_advantage}
    \end{subfigure}
    \hfill 
    \begin{subfigure}[b]{0.49\textwidth}
        \centering
        \includegraphics[width=\linewidth]{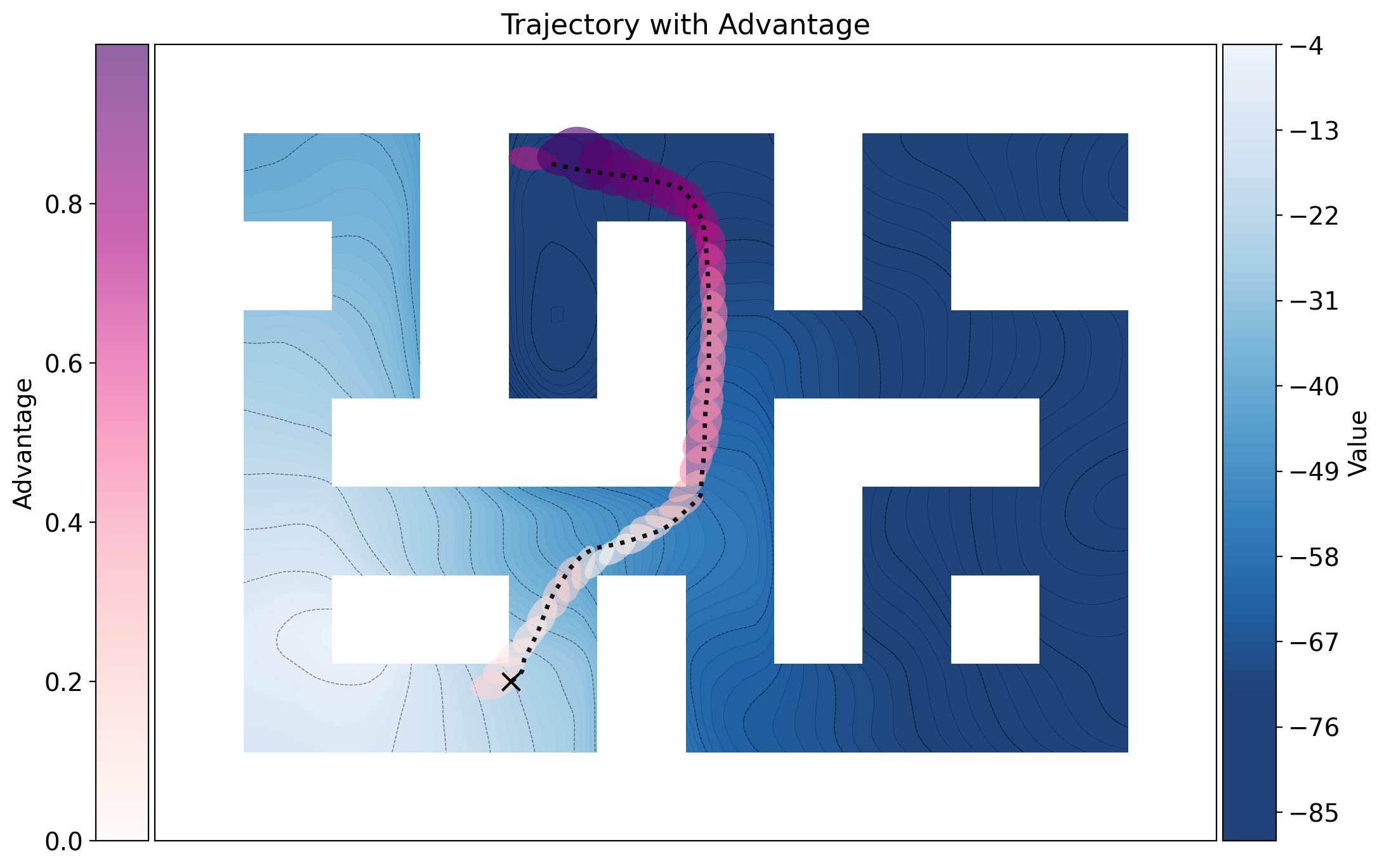}
        \caption{Eikonal Baseline}
        \label{fig:eikonal_advantage}
    \end{subfigure}

    \caption{\textbf{Comparison of action probability distributions (pink fans) along a trajectory.} \textbf{(a) Ours:} The advantage distributions align parallel to the walls, correctly identifying the safe path through corridors. \textbf{(b) Eikonal:} The baseline shows similar geometry trends, highlighting that our approach models the geometry without relying on expensive and unstable first-order \texttt{autograd} operations.}
    \label{fig:advantage_comparison}
    \vspace{-1em}
\end{figure}

\noindent\textbf{Hyperparameters and Architecture.}
Complete hyperparameter configurations are listed in Table 9. Consistent with OGBench, we apply Layer Normalization to all neural network layers, with the exception of the policy networks in GCIVL and CRL baselines. We use the Adam optimizer and GELU activation functions throughout. 

\begin{table}[!ht]
\centering
\caption{Hyperparameters for experiments.}
\label{tab:hyperparams_ogbench}
\resizebox{\columnwidth}{!}{%
\begin{tabular}{ll}
\toprule
\textbf{Hyperparameter} & \textbf{Value} \\
\midrule
Gradient steps & $10^6$ (state-based), $5 \times 10^5$ (pixel-based) \\
Optimizer & Adam \\
Learning rate & $0.0003$ \\
Batch size & $1024$ (state-based), $256$ (pixel-based) \\
MLP size & $[512, 512, 512]$ \\
Nonlinearity & GELU \\
Target network update rate & $0.005$ \\
Discount factor $\gamma$ & $0.99$ (default), $0.995$ (humanoidmaze, antmaze-giant) \\
Goal representation dimensionality $N$ & $256$ (default), $64$ (pointmaze) \\
Dual representation's GCIQL expectile $\kappa$ & $0.7$ (default), $0.9$ (navigate) \\
Size of random walk neighborhood $K$ & 10 \\
Eikonal Speed Type & Constant, exactly defined in \cite{giammarino2025physics} \\
Noise standard deviation $\nu$ & 0.01 \\
VIB $\beta$ & $0.001$ (default), $0.003$ (pointmaze, antmaze-large) \\
Rep $(p_{\mathcal{D}}^{\text{cur}}, p_{\mathcal{D}}^{\text{geom}}, p_{\mathcal{D}}^{\text{traj}}, p_{\mathcal{D}}^{\text{rand}})$ ratio (Dual) & $(0.2, 0.5, 0, 0.3)$ \\
Rep $(p_{\mathcal{D}}^{\text{cur}}, p_{\mathcal{D}}^{\text{geom}}, p_{\mathcal{D}}^{\text{traj}}, p_{\mathcal{D}}^{\text{rand}})$ ratio (TRA, BYOL-$\gamma$) & $(0, 1, 0, 0)$ \\
Downstream GCIVL expectile & $0.9$ \\
Downstream policy extraction hyperparameters & Identical to OGBench\\
Downstream policy $(p_{\mathcal{D}}^{\text{cur}}, p_{\mathcal{D}}^{\text{geom}}, p_{\mathcal{D}}^{\text{traj}}, p_{\mathcal{D}}^{\text{rand}})$ ratio & $(0, 0, 1, 0)$ \\
Downstream value $(p_{\mathcal{D}}^{\text{cur}}, p_{\mathcal{D}}^{\text{geom}}, p_{\mathcal{D}}^{\text{traj}}, p_{\mathcal{D}}^{\text{rand}})$ ratio (default) & $(0.2, 0.5, 0, 0.3)$ \\
Downstream value $(p_{\mathcal{D}}^{\text{cur}}, p_{\mathcal{D}}^{\text{geom}}, p_{\mathcal{D}}^{\text{traj}}, p_{\mathcal{D}}^{\text{rand}})$ ratio (CRL) & $(0, 1, 0, 0)$ \\
\bottomrule
\end{tabular}%
}
\end{table}

\begin{table}[h!]
\centering
\caption{\textbf{HYPERPARAMETERS SUMMARY.} Base algorithmic and representation-specific hyperparameters across all evaluated environments. All Dual models utilize a \texttt{bilinear} representation type. Visual environments additionally apply an \texttt{impala\_small} encoder, a batch size of 256, $p_{\text{aug}} = 0.5$, and are trained for 500K steps with 15 evaluation episodes.}
\label{tab:hyperparameters}
\resizebox{\textwidth}{!}{%
\begin{tabular}{lcccccc}
\toprule
\textbf{\texttt{ENVIRONMENT DOMAIN}} & \textbf{\texttt{Discount ($\gamma$)}} & \textbf{\texttt{GCIVL $\alpha$}} & \textbf{\texttt{CRL $\alpha$}} & \textbf{\texttt{Rep Dim}} & \textbf{\texttt{DUAL $\tau$}} & \textbf{\texttt{VIB $\beta$}} \\
\midrule
\texttt{pointmaze} (medium/large)     & 0.99  & 10.0 & 0.03 & 64  & 0.9 & 0.003 \\
\texttt{antmaze-medium}               & 0.99  & 10.0 & 0.1  & 256 & 0.9 & 0.001 \\
\texttt{antmaze-large}                & 0.99  & 10.0 & 0.1  & 256 & 0.9 & 0.003 \\
\texttt{humanoidmaze} (medium/large)  & 0.995 & 10.0 & 0.1  & 256 & 0.9 & 0.001 \\
\texttt{antsoccer-arena}              & 0.99  & 10.0 & 0.3  & 256 & 0.9 & 0.001 \\
\texttt{manip/play} (cube, scene, puzzle, D4RL) & 0.99 & 10.0 & 3.0  & 256 & 0.7 & 0.001 \\
\bottomrule
\end{tabular}%
}
\end{table}


       
       
       

\section{Implementation Details}
The framework is implemented using JAX and Flax. A primary advantage of formulating the value geometry as a spatial expectation is its native compatibility with modern hardware accelerators (GPUs/TPUs). Because the mollified objective evaluates a local state distribution rather than explicit spatial derivatives, it avoids the sequential bottlenecks and graph-unrolling overhead typical of automatic differentiation for continuous constraints. By leveraging JAX's inherently vectorized operations, evaluating the $K$ surrounding neighborhood states requires only a single batched forward pass through the value network. Consequently, expanding the local sample size $K$ introduces negligible computational overhead compared to the standard offline RL updates.

To facilitate reproducibility, we provide a minimal, algorithm of our mollified geometric objective. 

\begin{algorithm}[h!]
\caption{Mollified Value Learning (MVL) Objective}
\label{alg:mvl_objective}
\begin{algorithmic}[1]
\Require Value network $V_\theta$, batch of states $\mathcal{B}_s$, batch of goals $\mathcal{B}_g$, spatial variance $\nu$, neighborhood size $K$, local running cost $c_s=1.0$
\State $\sigma \gets \sqrt{2\nu}$
\State Initialize loss $\mathcal{L}_{\text{MVL}} \gets 0$
\For{each state-goal pair $(\vs, \vg) \in (\mathcal{B}_s, \mathcal{B}_g)$}
    \State \text{Compute anchor value:} \quad $v_s \gets V_\theta(\vs, \vg)$
    \For{$k = 1 \dots K$}
        \State \text{Sample spatial perturbation:} \quad $\boldsymbol{\epsilon}_k \sim \mathcal{N}(\mathbf{0}, \sigma^2 \mathbf{I})$
        \State \text{Generate local neighbor:} \quad $\vs'_k \gets \vs + \boldsymbol{\epsilon}_k$
        \State \text{Evaluate neighbor value:} \quad $v'_k \gets V_\theta(\vs'_k, \vg)$
        \State \text{Compute spatial distance:} \quad $d_k \gets \|\boldsymbol{\epsilon}_k\|_2$
        \State \text{Compute value variation:} \quad $\Delta v_k \gets v'_k - v_s$
        \State \text{Evaluate shortest-path residual:} \quad $r_k \gets \max\big(0, \Delta v_k - c_s \cdot d_k\big)$
    \EndFor
    \State \text{Aggregate over local spatial distribution:} \quad $\mathcal{L}_{\text{MVL}} \gets \mathcal{L}_{\text{MVL}} + \frac{1}{K} \sum_{k=1}^K r_k^2$
\EndFor
\State \Return $\mathcal{L}_{\text{MVL}} / |\mathcal{B}_s|$
\end{algorithmic}
\end{algorithm}

\section{Ablations}

\begin{table}[h!]
\centering
\caption{\textbf{HIGH-DIMENSIONAL STOCHASTIC DYNAMICS (POWDERWORLD).} Overall results on pixel-based tasks where environment dynamics are highly stochastic (e.g., sand piling, fire spreading). Since raw pixel perturbation is infeasible, \texttt{GCIVL-PIXEL+MVL} enforces the viscous consistency condition via sampling in the \textit{Impala} encoder's latent embedding space.}
\label{tab:powderworld_results}
\resizebox{0.9\columnwidth}{!}{%
\begin{tabular}{lccc}
\toprule
\textbf{\texttt{ENVIRONMENT}} & \textbf{\texttt{GCIVL-PIXEL}} & \textbf{\texttt{GCIVL-PIXEL-EIK}} & \textbf{\texttt{GCIVL-PIXEL+MVL (OURS)}} \\
\midrule
\texttt{powderworld-easy}   & \bt{99 $\pm$ 1} & \bt{99 $\pm$ 1} & \bt{99 $\pm$ 1} \\
\texttt{powderworld-medium} & 50 $\pm$ 4 & \bt{61 $\pm$ 4} & \bt{62 $\pm$ 7} \\
\texttt{powderworld-hard}   & 4 $\pm$ 3 & 7 $\pm$ 3 & \bt{10 $\pm$ 2} \\
\bottomrule
\end{tabular}%
}
\end{table}
\begin{table}[h!]
\centering
\caption{\textbf{ABLATION STUDY ON \texttt{DUAL-MVL}.} We evaluate the success rates across 4 seeds, ablating over the number of sampled states $N$ and the spatial variance $\delta$ across four environments with the largest gains in performance.}
\label{tab:ablation_results_ND}
\resizebox{0.7\textwidth}{!}{%
\begin{tabular}{lccccc}
\toprule
\multicolumn{6}{c}{\textbf{Varying Number of Sampled States ($N$)}} \\
\midrule
\textbf{\texttt{ENVIRONMENT}} & $\mathbf{N=1}$ & $\mathbf{N=5}$ & $\mathbf{N=10}$ & $\mathbf{N=20}$ & $\mathbf{N=25}$ \\
\midrule
\texttt{pointmaze-large} & 57 $\pm$ 3 & \bt{77 $\pm$ 2} & \bt{76 $\pm$ 2} & \bt{77 $\pm$ 2} & \bt{77 $\pm$ 3} \\
\texttt{antmaze-large}   & 28 $\pm$ 6 & 33 $\pm$ 7 & \bt{39 $\pm$ 11}& \bt{37 $\pm$ 6} & \bt{39 $\pm$ 9} \\
\texttt{scene-play}      & 75 $\pm$ 4 & 78 $\pm$ 5 & \bt{84 $\pm$ 5} & \bt{82 $\pm$ 4} & \bt{83 $\pm$ 6} \\
\texttt{puzzle-4x4}      & 26 $\pm$ 3 & \bt{34 $\pm$ 4} & \bt{33 $\pm$ 5} & \bt{34 $\pm$ 2} & \bt{34 $\pm$ 3} \\
\midrule
\multicolumn{6}{c}{\textbf{Varying Spatial Variance ($\delta$)}} \\
\midrule
\textbf{\texttt{ENVIRONMENT}} & $\mathbf{\delta=10^{-4}}$ & $\mathbf{\delta=10^{-3}}$ & $\mathbf{\delta=10^{-2}}$ & $\mathbf{\delta=10^{-1}}$ & $\mathbf{\delta=10^{1}}$ \\
\midrule
\texttt{pointmaze-large} & 55 $\pm$ 2 & 57 $\pm$ 3 & \bt{77 $\pm$ 3} & \bt{79 $\pm$ 2} & 64 $\pm$ 4 \\
\texttt{antmaze-large}   & 24 $\pm$ 7 & 32 $\pm$ 9 & \bt{38 $\pm$ 10} & \bt{39 $\pm$ 11}& 22 $\pm$ 12 \\
\texttt{scene-play}      & 75 $\pm$ 2 & 74 $\pm$ 5 & 79 $\pm$ 4 & \bt{84 $\pm$ 5} & 32 $\pm$ 4 \\
\texttt{puzzle-4x4}      & 25 $\pm$ 2 & 22 $\pm$ 4 & \bt{34 $\pm$ 3} & \bt{34 $\pm$ 2} & 12 $\pm$ 5 \\
\bottomrule
\end{tabular}%
}
\end{table}

In this section we discuss two key ablations for our proposed approach. Our method depends on two key variables - number of neighborhood states $K$ and the radius $\delta$. We present the contour plots to showcase the changes in the learned value function. We present the results in \cref{tab:ablation_hyperparams}. 

As shown in Table~\ref{tab:ablation_hyperparams}, increasing the sample budget $K$ yields a performance improvement from 57\% to 77\% between 1 and 5 samples, after which the success rate saturates. In contrast, the method shows greater sensitivity to the radius ($\delta$). Lower values ($\delta \le 10^{-3}$) result in a marked drop in performance (Table~\ref{tab:ablation_hyperparams}). 


\subsection{Comparing Success Rates on D4RL}
\begin{table*}[h!]
\centering
\caption{\textbf{RESULTS ON D4RL/KITCHEN VARIANTS} We compare the success rates across multiple tasks and environments. \bt{Soft teal} highlights methods within 5\% of the best performance.}
\label{tab:kitchen_dataset_results}
\resizebox{\textwidth}{!}{%
\begin{tabular}{lccccccccc}
\toprule
\textbf{\texttt{ENVIRONMENT/TASK}} & \textbf{\texttt{Orig}} & \textbf{\texttt{Orig-MVL }} & \textbf{\texttt{VIB}} & \textbf{\texttt{VIB-MVL }} & \textbf{\texttt{VIP}} & \textbf{\texttt{TRA}} & \textbf{\texttt{BYOL}} & \textbf{\texttt{DUAL}} & \textbf{\texttt{DUAL-MVL }} \\
\midrule
\multicolumn{10}{l}{\textbf{\texttt{D4RL/Kitchen-Partial}}} \\
\texttt{slide\_cabinet} & \bt{75 $\pm$ 0} & 58 $\pm$ 12 & 58 $\pm$ 12 & \bt{75 $\pm$ 20} & 50 $\pm$ 0 & 50 $\pm$ 0 & 50 $\pm$ 0 & \bt{75 $\pm$ 0} & 50 $\pm$ 0 \\
\texttt{microwave}      & 92 $\pm$ 12 & 92 $\pm$ 12 & 92 $\pm$ 12 & \bt{100 $\pm$ 0} & 75 $\pm$ 0 & \bt{100 $\pm$ 0} & 75 $\pm$ 0 & \bt{100 $\pm$ 0} & \bt{100 $\pm$ 0} \\
\texttt{light\_switch}  & \bt{83 $\pm$ 24} & 75 $\pm$ 20 & 75 $\pm$ 20 & \bt{83 $\pm$ 12} & 50 $\pm$ 0 & 50 $\pm$ 0 & 50 $\pm$ 0 & 67 $\pm$ 12 & 69 $\pm$ 21 \\
\texttt{kettle}         & 75 $\pm$ 0 & \bt{100 $\pm$ 0} & 75 $\pm$ 0 & 83 $\pm$ 12 & 75 $\pm$ 0 & 88 $\pm$ 12 & 75 $\pm$ 0 & 83 $\pm$ 12 & 94 $\pm$ 11 \\
\midrule
\multicolumn{10}{l}{\textbf{\texttt{D4RL/Kitchen-Mixed}}} \\
\texttt{microwave}      & 83 $\pm$ 12 & 83 $\pm$ 12 & \bt{100 $\pm$ 0} & \bt{100 $\pm$ 0} & 75 $\pm$ 0 & \bt{100 $\pm$ 0} & 75 $\pm$ 0 & \bt{100 $\pm$ 0} & \bt{95 $\pm$ 10} \\
\texttt{light\_switch}  & \bt{75 $\pm$ 20} & \bt{75 $\pm$ 20} & \bt{75 $\pm$ 20} & \bt{75 $\pm$ 20} & 50 $\pm$ 0 & 50 $\pm$ 0 & 50 $\pm$ 0 & \bt{75 $\pm$ 20} & 65 $\pm$ 25 \\
\texttt{kettle}         & 75 $\pm$ 0 & 92 $\pm$ 12 & 75 $\pm$ 0 & 92 $\pm$ 12 & 75 $\pm$ 0 & \bt{100 $\pm$ 0} & 75 $\pm$ 0 & 75 $\pm$ 0 & 85 $\pm$ 20 \\
\texttt{bottom\_burner} & 75 $\pm$ 0 & 67 $\pm$ 12 & 75 $\pm$ 20 & 67 $\pm$ 12 & 75 $\pm$ 0 & \bt{88 $\pm$ 12} & 75 $\pm$ 0 & 58 $\pm$ 12 & 60 $\pm$ 20 \\
\midrule
\multicolumn{10}{l}{\textbf{\texttt{D4RL/Kitchen-Complete}}} \\
\texttt{slide\_cabinet} & \bt{25 $\pm$ 0} & 17 $\pm$ 12 & \bt{25 $\pm$ 0} & 8 $\pm$ 12 & 12 $\pm$ 12 & \bt{25 $\pm$ 0} & 0 $\pm$ 0 & 8 $\pm$ 12 & 0 $\pm$ 0 \\
\texttt{microwave}      & 92 $\pm$ 12 & \bt{100 $\pm$ 0} & \bt{100 $\pm$ 0} & \bt{100 $\pm$ 0} & 62 $\pm$ 12 & 61 $\pm$ 4 & 0 $\pm$ 0 & 92 $\pm$ 12 & \bt{100 $\pm$ 0} \\
\texttt{light\_switch}  & 17 $\pm$ 12 & 25 $\pm$ 0 & 25 $\pm$ 0 & 17 $\pm$ 12 & 0 $\pm$ 0 & \bt{38 $\pm$ 12} & 0 $\pm$ 0 & 8 $\pm$ 12 & 0 $\pm$ 0 \\
\texttt{kettle}         & 83 $\pm$ 12 & \bt{92 $\pm$ 12} & 75 $\pm$ 20 & 83 $\pm$ 12 & 25 $\pm$ 0 & 48 $\pm$ 12 & 0 $\pm$ 0 & 75 $\pm$ 20 & 83 $\pm$ 12 \\
\bottomrule
\end{tabular}%
}
\end{table*}

\begin{table*}[t]
\centering
\caption{\textbf{RESULTS ON ORACLEREP VARIANTS} We compare the success rates across multiple tasks and environments. In \texttt{oraclerep} setting, where the goal is denoted by an ``Oracle Representation", which is a subset of state representations. These represent the barebones information required to fulfil the success criteria. \bt{Soft teal} highlights methods within 5\% of the best performance, while \hc{soft purple} highlights MVL when it improves upon IVL by at least 5\%.}
\label{tab:new_dataset_results}
\resizebox{\textwidth}{!}{%
\begin{tabular}{lccccccccc}
\toprule
\textbf{\texttt{ENVIRONMENT/TASK}} & \textbf{\texttt{BC}} & \textbf{\texttt{FBC}} & \textbf{\texttt{CRL}} & \textbf{\texttt{QRL}} & \textbf{\texttt{TDP}} & \textbf{\texttt{COE}} & \textbf{\texttt{TRL}} & \textbf{\texttt{IVL}} & \textbf{\texttt{MVL}} \\
\midrule
\multicolumn{10}{l}{\textbf{\texttt{pointmaze-large-navigate-oraclerep-v0}}} \\
\texttt{task1} & 53 $\pm$ 24 & \bt{98 $\pm$ 3} & 23 $\pm$ 18 & 31 $\pm$ 28 & 40 $\pm$ 19 & 67 $\pm$ 26 & 52 $\pm$ 19 & 94 $\pm$ 6 & \bt{100 $\pm$ 0}\\
\texttt{task2} & 3 $\pm$ 4 & 17 $\pm$ 6 & \bt{42 $\pm$ 28} & 5 $\pm$ 6 & 0 $\pm$ 0 & 0 $\pm$ 0 & 1 $\pm$ 2 & 0 $\pm$ 0 & \hc{27 $\pm$ 20}\\
\texttt{task3} & 17 $\pm$ 9 & 76 $\pm$ 11 & 67 $\pm$ 14 & 0 $\pm$ 0 & 4 $\pm$ 4 & 12 $\pm$ 8 & 7 $\pm$ 5 & \bt{100 $\pm$ 0} & \bt{100 $\pm$ 0}\\
\texttt{task4} & 23 $\pm$ 10 & 79 $\pm$ 8 & 13 $\pm$ 18 & 1 $\pm$ 2 & 56 $\pm$ 12 & 44 $\pm$ 27 & 41 $\pm$ 13 & 3 $\pm$ 5 & \bt{91 $\pm$ 9}\\
\texttt{task5} & 30 $\pm$ 22 & \bt{87 $\pm$ 5} & 21 $\pm$ 8 & 0 $\pm$ 0 & 49 $\pm$ 16 & 48 $\pm$ 16 & 64 $\pm$ 14 & 44 $\pm$ 39 & \hc{76 $\pm$ 12}\\
\textbf{\texttt{overall}} & 25 $\pm$ 3 & 71 $\pm$ 4 & 33 $\pm$ 7 & 7 $\pm$ 7 & 30 $\pm$ 5 & 34 $\pm$ 7 & 33 $\pm$ 5 & 48 $\pm$ 10 & \bt{79 $\pm$ 8}\\
\midrule
\multicolumn{10}{l}{\textbf{\texttt{antmaze-large-navigate-oraclerep-v0}}} \\
\texttt{task1} & 3 $\pm$ 3 & 13 $\pm$ 11 & \bt{87 $\pm$ 9} & 51 $\pm$ 15 & 6 $\pm$ 5 & 6 $\pm$ 6 & 57 $\pm$ 13 & 11 $\pm$ 13 & \bt{83 $\pm$ 7}\\
\texttt{task2} & 22 $\pm$ 12 & 24 $\pm$ 8 & 64 $\pm$ 25 & \bt{69 $\pm$ 12} & 23 $\pm$ 6 & 26 $\pm$ 11 & \bt{66 $\pm$ 4} & 19 $\pm$ 5 & \hc{45 $\pm$ 15}\\
\texttt{task3} & 53 $\pm$ 10 & 48 $\pm$ 6 & 91 $\pm$ 3 & \bt{96 $\pm$ 3} & 69 $\pm$ 9 & 67 $\pm$ 4 & 80 $\pm$ 4 & 55 $\pm$ 6 & \hc{89 $\pm$ 3}\\
\texttt{task4} & 17 $\pm$ 9 & 7 $\pm$ 3 & \bt{92 $\pm$ 4} & 57 $\pm$ 16 & 17 $\pm$ 10 & 14 $\pm$ 1 & 8 $\pm$ 8 & 4 $\pm$ 3 & \hc{12 $\pm$ 4}\\
\texttt{task5} & 17 $\pm$ 5 & 16 $\pm$ 4 & \bt{90 $\pm$ 6} & 61 $\pm$ 12 & 17 $\pm$ 4 & 18 $\pm$ 5 & 18 $\pm$ 11 & 18 $\pm$ 12 & 20 $\pm$ 4\\
\textbf{\texttt{overall}} & 22 $\pm$ 4 & 22 $\pm$ 4 & \bt{85 $\pm$ 7} & 67 $\pm$ 8 & 27 $\pm$ 3 & 26 $\pm$ 3 & 46 $\pm$ 5 & 21 $\pm$ 6 & \hc{50 $\pm$ 7}\\
\midrule
\multicolumn{10}{l}{\textbf{\texttt{humanoidmaze-medium-navigate-oraclerep-v0}}} \\
\texttt{task1} & 10 $\pm$ 7 & 7 $\pm$ 9 & \bt{91 $\pm$ 8} & 8 $\pm$ 12 & 4 $\pm$ 3 & 6 $\pm$ 4 & 76 $\pm$ 6 & 29 $\pm$ 4 & 29 $\pm$ 7\\
\texttt{task2} & 6 $\pm$ 4 & 8 $\pm$ 5 & \bt{96 $\pm$ 1} & 22 $\pm$ 29 & 7 $\pm$ 3 & 9 $\pm$ 2 & \bt{96 $\pm$ 2} & 37 $\pm$ 12 & 32 $\pm$ 4\\
\texttt{task3} & 8 $\pm$ 5 & 15 $\pm$ 5 & \bt{72 $\pm$ 17} & 30 $\pm$ 21 & 12 $\pm$ 3 & 13 $\pm$ 3 & 8 $\pm$ 8 & 16 $\pm$ 7 & \hc{28 $\pm$ 6}\\
\texttt{task4} & 2 $\pm$ 1 & 3 $\pm$ 3 & 8 $\pm$ 7 & 9 $\pm$ 9 & 0 $\pm$ 0 & 1 $\pm$ 1 & \bt{11 $\pm$ 3} & 0 $\pm$ 0 & 2 $\pm$ 3\\
\texttt{task5} & 11 $\pm$ 5 & 12 $\pm$ 2 & \bt{93 $\pm$ 4} & 20 $\pm$ 19 & 14 $\pm$ 4 & 14 $\pm$ 7 & \bt{92 $\pm$ 3} & 39 $\pm$ 16 & 40 $\pm$ 10\\
\textbf{\texttt{overall}} & 7 $\pm$ 2 & 9 $\pm$ 3 & \bt{72 $\pm$ 5} & 18 $\pm$ 17 & 7 $\pm$ 0 & 9 $\pm$ 2 & 57 $\pm$ 1 & 24 $\pm$ 4 & 26 $\pm$ 6\\
\midrule
\multicolumn{10}{l}{\textbf{\texttt{cube-single-play-oraclerep-v0}}} \\
\texttt{task1} & 7 $\pm$ 9 & 17 $\pm$ 5 & 64 $\pm$ 6 & 6 $\pm$ 7 & 4 $\pm$ 3 & 7 $\pm$ 7 & \bt{98 $\pm$ 2} & 87 $\pm$ 3 & \bt{95 $\pm$ 2}\\
\texttt{task2} & 8 $\pm$ 6 & 18 $\pm$ 12 & 61 $\pm$ 8 & 8 $\pm$ 4 & 5 $\pm$ 4 & 6 $\pm$ 12 & \bt{97 $\pm$ 3} & 93 $\pm$ 4 & \bt{100 $\pm$ 0}\\
\texttt{task3} & 9 $\pm$ 6 & 22 $\pm$ 5 & 69 $\pm$ 9 & 9 $\pm$ 3 & 1 $\pm$ 2 & 2 $\pm$ 3 & \bt{99 $\pm$ 1} & 93 $\pm$ 2 & \bt{94 $\pm$ 3}\\
\texttt{task4} & 6 $\pm$ 2 & 20 $\pm$ 6 & 56 $\pm$ 10 & 2 $\pm$ 2 & 3 $\pm$ 2 & 27 $\pm$ 17 & 93 $\pm$ 6 & 85 $\pm$ 7 & \bt{98 $\pm$ 2}\\
\texttt{task5} & 5 $\pm$ 5 & 14 $\pm$ 7 & 66 $\pm$ 18 & 3 $\pm$ 2 & 0 $\pm$ 0 & 24 $\pm$ 21 & \bt{87 $\pm$ 7} & 82 $\pm$ 4 & \bt{90 $\pm$ 4}\\
\textbf{\texttt{overall}} & 7 $\pm$ 2 & 18 $\pm$ 5 & 63 $\pm$ 5 & 6 $\pm$ 3 & 3 $\pm$ 1 & 13 $\pm$ 8 & \bt{95 $\pm$ 2} & 88 $\pm$ 2 & \bt{95 $\pm$ 2}\\
\midrule
\multicolumn{10}{l}{\textbf{\texttt{cube-double-play-oraclerep-v0}}} \\
\texttt{task1} & 6 $\pm$ 5 & 17 $\pm$ 8 & 77 $\pm$ 8 & 7 $\pm$ 4 & 6 $\pm$ 2 & 1 $\pm$ 2 & 73 $\pm$ 5 & 88 $\pm$ 4 & \bt{96 $\pm$ 3}\\
\texttt{task2} & 0 $\pm$ 0 & 1 $\pm$ 1 & 42 $\pm$ 9 & 0 $\pm$ 0 & 0 $\pm$ 0 & 0 $\pm$ 0 & 23 $\pm$ 7 & 78 $\pm$ 5 & \bt{90 $\pm$ 3}\\
\texttt{task3} & 0 $\pm$ 0 & 0 $\pm$ 0 & 39 $\pm$ 10 & 0 $\pm$ 0 & 0 $\pm$ 0 & 0 $\pm$ 0 & 30 $\pm$ 11 & 75 $\pm$ 6 & \bt{93 $\pm$ 1}\\
\texttt{task4} & 0 $\pm$ 0 & 1 $\pm$ 2 & 1 $\pm$ 1 & 0 $\pm$ 0 & 0 $\pm$ 0 & 0 $\pm$ 0 & 3 $\pm$ 3 & 8 $\pm$ 5 & \bt{16 $\pm$ 5}\\
\texttt{task5} & 0 $\pm$ 0 & 2 $\pm$ 2 & 17 $\pm$ 4 & 0 $\pm$ 0 & 0 $\pm$ 0 & 0 $\pm$ 0 & 18 $\pm$ 7 & 47 $\pm$ 14 & \bt{56 $\pm$ 4}\\
\textbf{\texttt{overall}} & 1 $\pm$ 1 & 4 $\pm$ 2 & 35 $\pm$ 3 & 1 $\pm$ 1 & 1 $\pm$ 0 & 0 $\pm$ 0 & 30 $\pm$ 5 & 59 $\pm$ 2 & \bt{70 $\pm$ 3}\\
\midrule
\multicolumn{10}{l}{\textbf{\texttt{scene-play-oraclerep-v0}}} \\
\texttt{task1} & 14 $\pm$ 8 & 60 $\pm$ 6 & 71 $\pm$ 13 & 19 $\pm$ 8 & 43 $\pm$ 4 & 32 $\pm$ 8 & \bt{97 $\pm$ 2} & \bt{97 $\pm$ 3} & \bt{98 $\pm$ 2}\\
\texttt{task2} & 2 $\pm$ 1 & 13 $\pm$ 3 & 14 $\pm$ 4 & 2 $\pm$ 2 & 9 $\pm$ 2 & 2 $\pm$ 2 & \bt{95 $\pm$ 3} & \bt{92 $\pm$ 4} & \bt{90 $\pm$ 4}\\
\texttt{task3} & 2 $\pm$ 3 & 21 $\pm$ 9 & 33 $\pm$ 9 & 1 $\pm$ 1 & 3 $\pm$ 4 & 2 $\pm$ 3 & \bt{97 $\pm$ 3} & 83 $\pm$ 8 & 79 $\pm$ 3\\
\texttt{task4} & 3 $\pm$ 1 & 19 $\pm$ 8 & 12 $\pm$ 3 & 6 $\pm$ 3 & 2 $\pm$ 2 & 3 $\pm$ 1 & 76 $\pm$ 17 & 43 $\pm$ 28 & \bt{86 $\pm$ 4}\\
\texttt{task5} & 0 $\pm$ 0 & 3 $\pm$ 4 & 2 $\pm$ 2 & 1 $\pm$ 1 & 0 $\pm$ 0 & 0 $\pm$ 0 & 18 $\pm$ 9 & 23 $\pm$ 16 & \bt{32 $\pm$ 4}\\
\textbf{\texttt{overall}} & 4 $\pm$ 2 & 23 $\pm$ 2 & 26 $\pm$ 5 & 6 $\pm$ 2 & 12 $\pm$ 1 & 8 $\pm$ 2 & \bt{77 $\pm$ 2} & 68 $\pm$ 10 & \bt{77 $\pm$ 3}\\
\midrule
\multicolumn{10}{l}{\textbf{\texttt{puzzle-3x3-play-oraclerep-v0}}} \\
\texttt{task1} & 4 $\pm$ 4 & 10 $\pm$ 1 & 15 $\pm$ 8 & 3 $\pm$ 3 & 5 $\pm$ 1 & 7 $\pm$ 3 & \bt{99 $\pm$ 1} & 4 $\pm$ 2 & \hc{48 $\pm$ 3}\\
\texttt{task2} & 1 $\pm$ 2 & 1 $\pm$ 1 & 6 $\pm$ 3 & 0 $\pm$ 0 & 1 $\pm$ 1 & 2 $\pm$ 2 & \bt{99 $\pm$ 1} & 2 $\pm$ 2 & \hc{10 $\pm$ 3}\\
\texttt{task3} & 1 $\pm$ 1 & 1 $\pm$ 1 & 1 $\pm$ 1 & 0 $\pm$ 0 & 0 $\pm$ 0 & 1 $\pm$ 2 & \bt{100 $\pm$ 0} & 2 $\pm$ 1 & 2 $\pm$ 1\\
\texttt{task4} & 0 $\pm$ 0 & 1 $\pm$ 1 & 1 $\pm$ 1 & 0 $\pm$ 0 & 2 $\pm$ 3 & 1 $\pm$ 1 & \bt{98 $\pm$ 1} & 1 $\pm$ 1 & \hc{10 $\pm$ 2}\\
\texttt{task5} & 1 $\pm$ 1 & 2 $\pm$ 2 & 2 $\pm$ 2 & 0 $\pm$ 0 & 1 $\pm$ 1 & 0 $\pm$ 0 & \bt{99 $\pm$ 1} & 1 $\pm$ 1 & 4 $\pm$ 2\\
\textbf{\texttt{overall}} & 1 $\pm$ 1 & 3 $\pm$ 1 & 5 $\pm$ 1 & 1 $\pm$ 1 & 2 $\pm$ 0 & 2 $\pm$ 0 & \bt{99 $\pm$ 0} & 2 $\pm$ 1 & \hc{15 $\pm$ 2}\\
\midrule
\multicolumn{10}{l}{\textbf{\texttt{puzzle-4x4-play-oraclerep-v0}}} \\
\texttt{task1} & 1 $\pm$ 1 & 1 $\pm$ 1 & 1 $\pm$ 1 & 0 $\pm$ 0 & 1 $\pm$ 1 & 0 $\pm$ 0 & 47 $\pm$ 5 & 6 $\pm$ 4 & \bt{62 $\pm$ 5}\\
\texttt{task2} & 0 $\pm$ 0 & 1 $\pm$ 1 & 0 $\pm$ 0 & 0 $\pm$ 0 & 0 $\pm$ 0 & 1 $\pm$ 1 & \bt{17 $\pm$ 5} & 4 $\pm$ 4 & \hc{12 $\pm$ 4}\\
\texttt{task3} & 0 $\pm$ 0 & 1 $\pm$ 1 & 1 $\pm$ 1 & 0 $\pm$ 0 & 1 $\pm$ 1 & 0 $\pm$ 0 & 38 $\pm$ 13 & 6 $\pm$ 1 & \bt{46 $\pm$ 7}\\
\texttt{task4} & 1 $\pm$ 1 & 1 $\pm$ 1 & 1 $\pm$ 1 & 0 $\pm$ 0 & 0 $\pm$ 0 & 0 $\pm$ 0 & \bt{34 $\pm$ 2} & 6 $\pm$ 6 & \bt{34 $\pm$ 3}\\
\texttt{task5} & 0 $\pm$ 0 & 0 $\pm$ 0 & 0 $\pm$ 0 & 0 $\pm$ 0 & 1 $\pm$ 1 & 0 $\pm$ 0 & 32 $\pm$ 6 & 6 $\pm$ 1 & \bt{43 $\pm$ 10}\\
\textbf{\texttt{overall}} & 0 $\pm$ 0 & 1 $\pm$ 0 & 0 $\pm$ 0 & 0 $\pm$ 0 & 0 $\pm$ 0 & 0 $\pm$ 0 & 34 $\pm$ 4 & 5 $\pm$ 2 & \bt{39 $\pm$ 6}\\
\bottomrule
\end{tabular}%
}
\end{table*}

\end{document}